\documentclass{article}
\usepackage{iclr2027_conference,times}
\usepackage{times}
\usepackage{amsmath,amssymb,amsthm,bm}
\usepackage{array}
\usepackage{booktabs}
\usepackage{float}
\usepackage{graphicx}
\usepackage{subcaption}
\usepackage{microtype}
\usepackage{xcolor}
\usepackage{hyperref}
\usepackage{url}
\hypersetup{hidelinks}

\title{GRPO-QPS: Target-Preserving Reinforcement Learning for Quantum Posterior Sampling}

\author{Yufeng Wang \thanks{These authors contributed equally to this work.}\\
  Stony Brook University \\
  \And
  Parivesh Priye $^*$\thanks{Corresponding author.} \\
  Georgia Institute of Technology \\
  \And
  Lu Wei \\
  Stony Brook University \\
  \And
  Haibin Ling \\
Westlake University \\}

\iclrfinalcopy
\makeatletter
\g@addto@macro\@maketitle{\lhead{}}
\makeatother

\newcommand{\method}{GRPO-QPS}
\newcommand{\base}{BuresTomFlow}
\newcommand{\flowgrpo}{Flow-GRPO}
\newcommand{\fixed}{matched control}
\newcommand{\bures}{d_{\mathrm B}}
\newcommand{\tr}{\operatorname{Tr}}
\newcommand{\E}{\mathbb{E}}
\newcommand{\KL}{\operatorname{KL}}
\newcolumntype{L}[1]{>{\raggedright\arraybackslash}p{#1}}
\newtheorem{proposition}{Proposition}

\begin{document}
\maketitle

\begin{abstract}
Bayesian quantum tomography requires efficient inference while preserving a posterior fixed by the prior and Born likelihood. Learned transport provides fast amortized samples, but reward tuning can reshape the generated distribution rather than improve exploration of this fixed target. We introduce \method{}, a target-preserving framework in which GRPO learns proposal behavior and an exact Metropolis correction preserves the posterior after training. Across the evaluated reconstruction benchmarks, \method{} improves over \base{} and \flowgrpo{} on thermal, cat, Dicke, and cluster families, and it closely matches an exact two-qubit reference posterior. Tuned conventional MCMC is slightly stronger on several original continuous benchmarks where the available fixed proposals already match the posterior geometry well. To test whether this reflects a fundamental limitation of learned exploration, we evaluate a more challenging multimodal thermal posterior. At six qubits and 800 shots, the learned proposal achieves a minimum effective sample size of 102 per $1{,}000$ likelihood calls, compared with 28 for prior independence, 27 for a tuned fixed mixture, and 20 for Haario adaptive Metropolis. A record-conditioned policy also transfers to unseen 3,000-shot records, matching or exceeding the strongest conventional baseline in all nine held-out seed-record comparisons. These results show that \method{} combines target-preserving Bayesian inference with broad gains over learned transport baselines and a sampling advantage when efficient exploration requires proposal geometry beyond the evaluated conventional kernels.
\end{abstract}

\section{Introduction}

Quantum tomography reconstructs an unknown density matrix from finite measurement records. The problem is intrinsically uncertain: many physical states can explain the same observations while predicting different outcomes for measurements that were not performed. Bayesian tomography represents this uncertainty through a posterior over states rather than a single point estimate \citep{granade2016practical}. This probabilistic view is especially important in low-data regimes, where posterior width and multimodality can be scientifically meaningful. It also imposes a strict requirement on learned inference. Once the prior and Born likelihood are specified, they define where posterior probability belongs. A learning algorithm may improve computational efficiency or exploration, but it should not silently replace that physical target with a distribution preferred by its reward.

Amortized generative models address the computational cost of solving a new inference problem for every measurement record \citep{torlai2018neural,carrasquilla2019reconstructing,zhu2022flexible}. Flow Matching and Riemannian Flow Matching learn continuous transports between simple source distributions and data-dependent targets \citep{lipman2023flow,chen2024riemannian}; \base{} specializes this idea to density matrices through Bures geometry \citep{burestomflow2026,bures1969extension,uhlmann1976transition,jozsa1994fidelity,bhatia2019bures}. These methods can generate states quickly, but their output distribution is determined by the learned transport. Reward-based refinement, including \flowgrpo{}, adds another degree of freedom by favoring trajectories with desirable endpoint properties \citep{shao2024deepseekmath,liu2025flowgrpo}. That flexibility is useful when the reward defines the goal. In Bayesian inference, however, directly rewarding the generated distribution can mix two different objectives: exploring the posterior more effectively and changing the posterior itself.

We introduce \method{} to separate these objectives. The learned component operates inside a Metropolis-corrected posterior sampler rather than replacing the posterior with a reward-defined generator. \base{} supplies measurement-conditioned initial states, and a GRPO policy adapts how new states are proposed from the current state and record. The complete forward and reverse proposal probabilities enter the Metropolis acceptance ratio, and posterior samples are collected only after the policy is frozen. Under the stated assumptions, the prior and Born likelihood therefore continue to determine the stationary posterior while learning changes only the efficiency and geometry of exploration. This design also lets us ask a more informative question than whether learning helps on average: when fixed physical proposals already explore well, adaptation may have little to add, whereas more structured posterior geometry can create a genuine role for a learned proposal.

Our contributions are threefold. First, we formulate a target-preserving GRPO framework for Bayesian quantum tomography, combining learned proposal adaptation with an exact Metropolis correction so that reward optimization does not redefine the scientific target. Second, we show that the resulting inference pipeline improves reconstruction and posterior agreement over \base{} and \flowgrpo{} across several physical state families and evaluation settings. Third, we compare learned exploration with matched conventional samplers and identify two regimes: the original benchmarks, where tuned fixed proposals remain highly competitive, and a more challenging multimodal thermal posterior, where a richer learned drift and covariance proposal surpasses every conventional MCMC baseline tested and transfers to unseen measurement records. Together, these results establish both the practical value of \method{} and the posterior structures for which learned adaptation becomes most useful.

\section{Related work}

\textbf{Posterior inference and geometric transport.}
Maximum likelihood estimation returns a single best-fit state, whereas Bayesian
tomography assigns probability to all states according to the prior and observed
record \citep{granade2016practical}. Repeating full Bayesian inference for every
record can be expensive, which motivates amortized generators. Neural network
tomography learns flexible state models from measurement statistics
\citep{torlai2018neural,carrasquilla2019reconstructing}, generative query networks
infer unmeasured outcomes through a shared latent representation
\citep{zhu2022flexible}, and classical shadows estimate many properties from
randomized measurements \citep{huang2020predicting}. These methods can be
computationally attractive, but a physically valid generator does not
necessarily reproduce a specified posterior distribution. \base{} addresses the
geometry of density matrices through Bures-based transport, while our method uses
that learned transport to initialize a Markov chain whose retained samples are
corrected toward the declared posterior. Section~\ref{sec:maintrack} compares the
resulting estimates with maximum likelihood, classical shadows, a neural density
operator, and prior averaging under matched measurement records.

\textbf{Reward tuning and target preservation.}
GRPO forms relative advantages within a sampled group without learning a separate
value model \citep{shao2024deepseekmath}. \flowgrpo{} applies this idea to
stochastic flow trajectories \citep{liu2025flowgrpo}, while Adjoint Matching
uses a related stochastic-control formulation \citep{domingo2024adjoint}. These
methods intentionally modify the generated distribution in response to a reward.
That is appropriate when the reward defines the target, but Bayesian tomography
already has a target specified by the prior and Born likelihood. \method{}
instead learns a proposal and then applies the complete Metropolis correction, so
the frozen chain preserves the same stationary posterior
\citep{hastings1970monte}. This design connects to structured and learned
proposal methods, including preconditioned Crank--Nicolson updates
\citep{cotter2013pcn}, adaptive MCMC \citep{roberts2007adaptive}, locally informed
mixtures \citep{zanella2020informed}, neural or policy-guided dynamics with an
acceptance correction \citep{levy2018l2hmc,bojesen2018policy}, flow-assisted MCMC
\citep{cabezas2024markovian}, and reinforcement-learned Metropolis--Hastings
kernels \citep{wang2025rlmh}. Recent work also reports that acceptance-rate and
squared-jump rewards can provide weak learning signals
\citep{wang2025harnessing}, which is consistent with our analysis of movement
based rewards. Our contribution is a probability-consistent integration of these
ideas for Bayesian mixed-state tomography, together with controlled experiments
that distinguish two regimes: one in which tuned conventional proposals already
explore effectively, and another in which a richer learned proposal improves the
minimum effective sample size. In every case, the learned kernel is frozen before
posterior samples are retained.

\section{Preliminaries and Methods}
\label{sec:preliminariesmethods}

\subsection{The target distribution and its parameterization}

Let $x$ be the declared parameter and $\rho(x)\succeq0$, $\tr\rho(x)=1$, its
density matrix; a record $y$ gives effects $E_m$, counts $c_m$, and Born
probabilities $p_m(x)=\tr[E_m\rho(x)]$. For a prior $\mu_0$ the posterior is
\begin{equation}
  \Pi_X(dx\mid y) = \frac{1}{Z(y)}
  \prod_m p_m(x)^{c_m}\,\mu_0(dx),
  \label{eq:posterior}
\end{equation}
with density-matrix posterior the pushforward
$\Pi_\rho(B\mid y)=\Pi_X(\{x:\rho(x)\in B\}\mid y)$, a distinction that matters
when several parameters map to one density matrix. The parameter $x$ is the dense
state, a complex Gaussian factor $F$, or bounded physical coordinates depending on
the study; Appendix~\ref{app:protocol} lists every prior. We measure
reconstruction and posterior error by the Bures distance
\citep{bures1969extension,uhlmann1976transition,bhatia2019bures},
\begin{equation}
  \bures^2(\rho,\sigma)=2-2\tr\!\left[
  \left(\rho^{1/2}\sigma\rho^{1/2}\right)^{1/2}\right],
  \label{eq:bures}
\end{equation}
the root-fidelity distance, which also defines the posterior center error and
credible balls. Throughout the paper, reconstruction accuracy and posterior
sampling quality are treated as distinct questions. A method can place its
posterior center close to the simulated truth while assigning the wrong
probability to nearby states, and conversely a sampler can represent the declared
posterior accurately even when finite data leave that posterior broad. We
therefore report point-reconstruction metrics together with distributional or
chain-based diagnostics whenever an appropriate reference is available.

\begin{figure}[h]
  \centering
  \includegraphics[width=\textwidth]{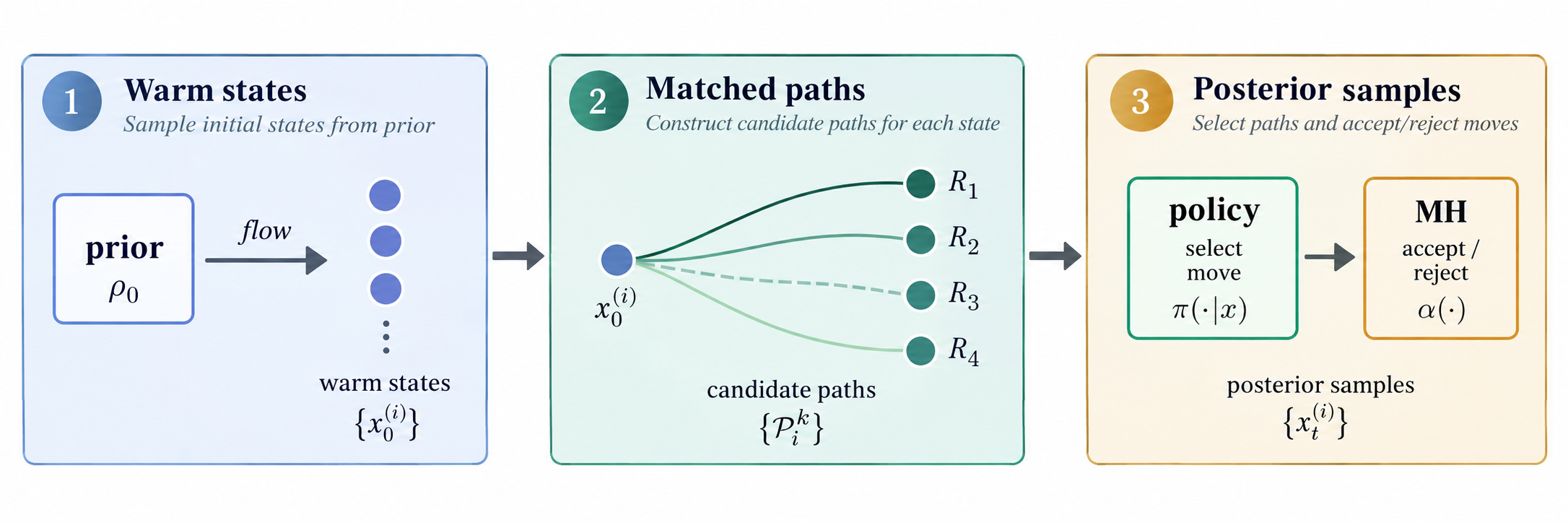}
  \caption{Method overview. \base{} supplies measurement-conditioned initial
    states, while \method{} learns proposal behavior from matched stochastic
    trajectories and freezes the policy before posterior collection. The
    schematic shows the restricted move-selection configuration. The multimodal
    experiment uses the same target-preserving construction with a learned drift
    and covariance proposal. In both cases, the complete forward and reverse
    proposal densities enter the Metropolis accept or reject calculation, so the
    declared parameter-space posterior remains stationary.}
  \label{fig:method}
\end{figure}

\subsection{BuresTomFlow provides flow-initialized states}
\label{sec:warm}

The base model encodes the record $y$ as a fixed-size embedding $h(y)$ and
learns a conditional velocity $v_\theta(t,\rho,h(y))$. Integrating
\begin{equation}
  \frac{d\rho_t}{dt}=v_\theta(t,\rho_t,h(y)),\qquad \rho_0\sim q_0,
  \label{eq:flow}
\end{equation}
produces the flow-initialized distribution $q_\theta(\rho\mid y)$. The dense
construction uses Bures paths and a Bures tangent loss
\citep{burestomflow2026}, while larger systems transport purification factors $F$
with $\rho(F)=FF^\dagger/\tr(FF^\dagger)$, which enforces positivity and unit
trace by construction (Appendix~\ref{app:protocol}). The flow is fast and
amortized, but it can assign too little probability mass to some posterior
regions. Reweighting already generated samples cannot recover a region that the
flow rarely reaches. Figure~\ref{fig:method} therefore uses the flow to provide
initial states, after which a physical Markov chain explores the declared
posterior.

\subsection{Flow-GRPO tunes the transport distribution}

\flowgrpo{} perturbs the flow velocity along $G$ stochastic trajectories that
share a record and starting distribution. With endpoint $\rho_g$ and a finite
posterior comparator $\{\tilde\rho_j\}_{j=1}^{M}$, the energy score
\begin{equation}
  R_g=-\frac{2}{M}\sum_j\bures(\rho_g,\tilde\rho_j)
  +\frac{1}{G-1}\sum_{g'\ne g}\bures(\rho_g,\rho_{g'})
  \label{eq:energyreward}
\end{equation}
rewards agreement with the reference in the first term and discourages the
trajectories from concentrating at the same endpoint in the second. A group
advantage $A_g=(R_g-\bar R)/(\operatorname{sd}(R)+\epsilon)$ and a clipped
likelihood ratio update the velocity policy, while a penalty keeps the updated
policy near the base flow. This procedure changes how the flow distributes its
samples, but it does not provide a posterior-invariance guarantee.

\subsection{GRPO-QPS learns target-preserving proposal geometry}
\label{sec:grpqmproposal}

\method{} learns how to explore the posterior while leaving its target unchanged.
For a generic parameter $x$, let $Q_\phi(x'\mid x,y)$ be a learned proposal with
a defined reverse density. After freezing $\phi$, every proposal is corrected by
\begin{equation}
  \alpha(x,x')=\min\!\left\{1,
  \frac{\pi_X(x'\mid y)Q_\phi(x\mid x',y)}
       {\pi_X(x \mid y)Q_\phi(x'\mid x,y)}\right\}.
  \label{eq:mh}
\end{equation}
Thus learning can change local scale, direction, covariance, or mixture weight,
but it cannot change the stationary target when the assumptions of
Appendix~\ref{app:math} hold. This separation is important because proposal
quality and target correctness play different roles. A poor proposal may yield
highly correlated samples, but the Metropolis correction still prevents its
learned preferences from redefining the posterior. Conversely, a more expressive
proposal can improve exploration only if its forward and reverse probabilities
are evaluated correctly. GRPO-QPS therefore uses learning to search over sampling
strategies while leaving posterior validity to the acceptance rule.

We use two proposal parameterizations for two different questions. The first is a
restricted scale scheduler used in the reconstruction, continuous-family, and
enumerated diagnostics. For factor states the chain variable is the raw
purification factor $F$ of Section~\ref{sec:warm}, mapped to
$\rho(F)=FF^\dagger/\tr(FF^\dagger)$ only for likelihoods and observables. An
action $a$ selects a preconditioned Crank--Nicolson scale $\beta_a$
\citep{cotter2013pcn},
\begin{equation}
  F'=\sqrt{1-\beta_a^2}\,F+\beta_a\Xi,
  \qquad \Xi_{ij}\sim\mathcal{CN}(0,1),
  \label{eq:pcn}
\end{equation}
where $\Xi$ is a fresh standard complex Gaussian factor. Because this proposal is
reversible under the factor prior, the corresponding prior and kernel terms
cancel from Eq.~\ref{eq:mh}; the acceptance probability retains the likelihood
and the state-dependent forward and reverse action probabilities. Bounded
physical coordinates analogously use reflected local scales or an exact prior
refresh. This restricted class isolates whether learning \emph{which existing
move to use} can outperform tuning those same moves.

The multimodal study uses a more expressive proposal to test whether learning
the proposal geometry itself can outperform conventional MCMC.
Let $z$ denote logit coordinates for the bounded thermal parameters and
$\widetilde\pi(z\mid y)$ the corresponding posterior density, including the
change of variables. The policy produces a state-dependent Gaussian mixture,
\begin{equation}
  Q_\phi(z'\mid z,y)=\sum_{k=1}^{K}w_{\phi,k}(z,y)
  \mathcal N\!\left(z';z+d_{\phi,k}(z,y),
  L_{\phi,k}(z,y)L_{\phi,k}(z,y)^\top\right),
  \label{eq:learnedmixture}
\end{equation}
with learned mixture weights, drift, and Cholesky factors. The full mixture
density is evaluated in both directions in Eq.~\ref{eq:mh}. This is essential:
a state-dependent drift that crosses modes is precisely the component that would
bias the chain if it were used without the reverse-density correction.

The reward also differs between the diagnostic and final configurations. The
restricted scheduler initially uses accepted physical movement with a small
acceptance bonus. This reward is intentionally local and does not use the hidden
truth, but Sections~\ref{sec:rewarddesign} and
Appendix~\ref{sec:movementcounterexample} show that movement can be large while a
physical quantity remains correlated. The final multimodal proposal therefore
uses an estimation-aligned trajectory objective over the thermal coordinates and
a mode indicator. For the single-record analysis, the reference moments are computed from the same
posterior being studied. For transfer, the reference moments are estimated only
from long prior-independence chains on the \emph{training} records; no exact
reference or held-out posterior moment enters training. Training combines records
with broader and narrower posterior modes so that optimization does not drive the
acceptance probability toward zero. Appendix~\ref{app:multimodal} gives the full
protocol.

In all cases, GRPO compares conditionally matched trajectories and updates the
proposal parameters from relative trajectory quality. For the historical
scale-scheduling experiments the clipped objective is
\begin{equation}
  \mathcal L_{\mathrm{QM}}=-\E\!\left[
    \min(r_{g,t}A_g,\operatorname{clip}(r_{g,t},1-\epsilon,1+\epsilon)A_g)
  \right]+\lambda\KL(q_\phi\Vert q_{\mathrm{fixed}}),
  \label{eq:grpo}
\end{equation}
with the event probability including both action choice and acceptance. The
matched-objective appendix specifies the corrected trajectory-score variants used
for the exact diagnostics. Posterior samples are collected only after training is
complete and the policy is frozen.

The group compares proposal trajectories, not the eigenvalue weights of a
quantum state: every path evaluates the full density matrix
(Appendix~\ref{sec:groups}). Turning learning off while holding initialization,
proposal family, likelihood, acceptance rule, and budget fixed yields a
\fixed{} that isolates learned adaptation. Tuned MCMC, Haario adaptive Metropolis,
prior independence, symmetry-aware reflection, sequential Monte Carlo, and HMC
are then used where appropriate to determine whether the learned proposal offers
an advantage over a strong conventional alternative rather than only over its own
initialization.

\section{Results}

\subsection{GRPO-QPS improves reconstruction across physical families}

\begin{figure}[h]
\centering
\begin{subfigure}[b]{0.45\textwidth}
\centering
\includegraphics[width=\textwidth]{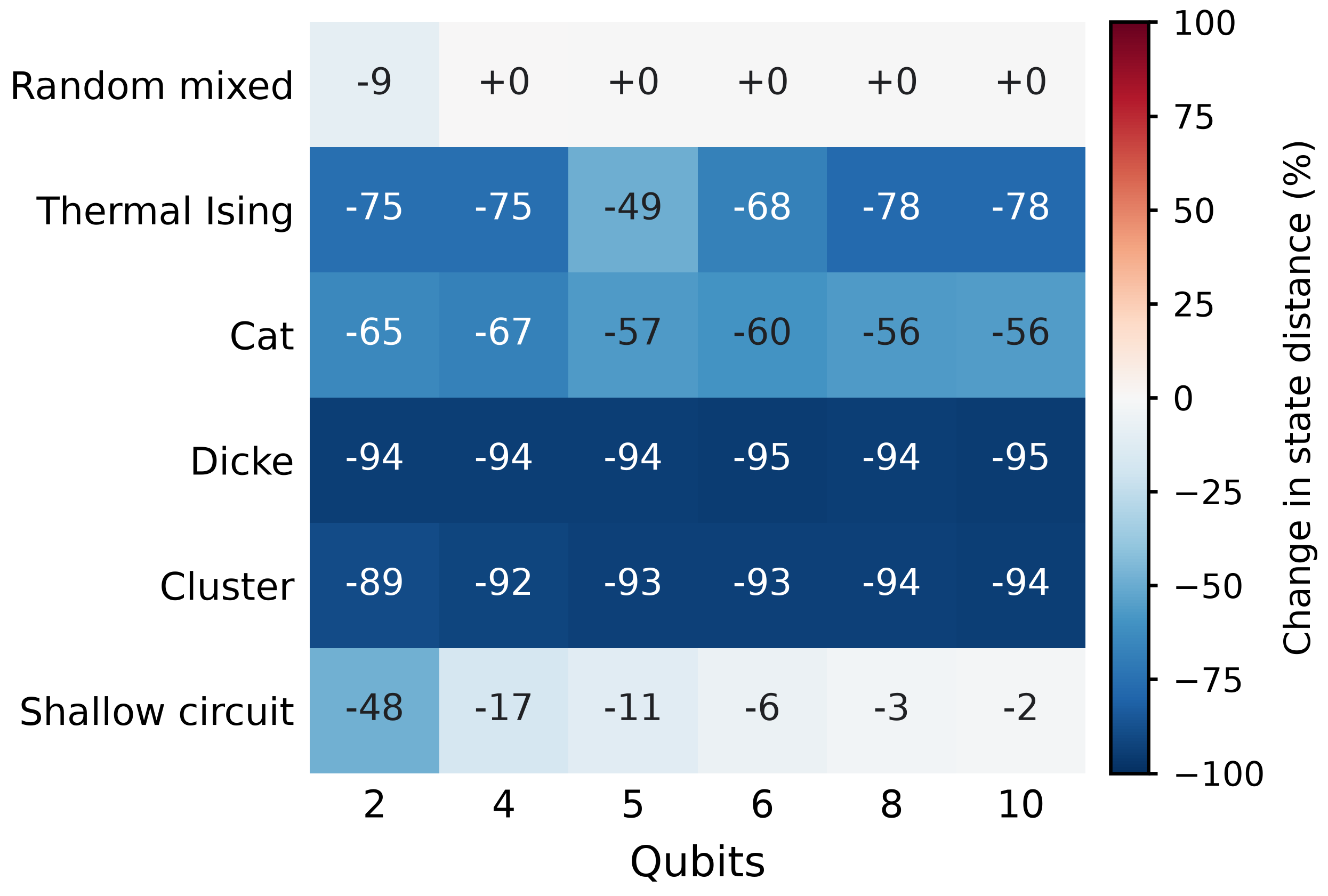}
\caption{Relative to \base{}}\label{fig:landscape-base}
\end{subfigure}\hfill
\begin{subfigure}[b]{0.45\textwidth}
\centering
\includegraphics[width=\textwidth]{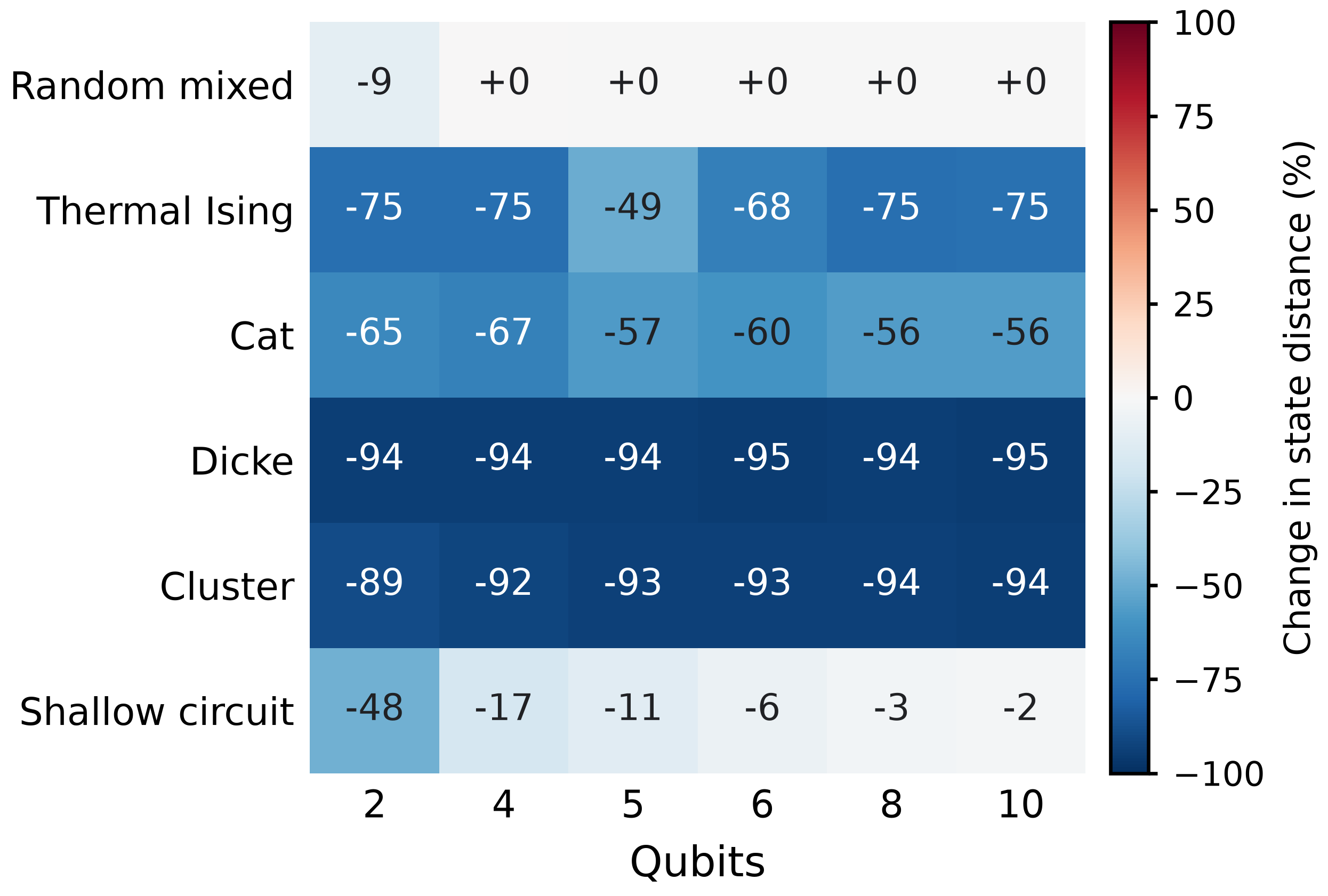}
\caption{Relative to \flowgrpo{}}\label{fig:landscape-flow}
\end{subfigure}
\caption{Reconstruction comparisons across six physical families and six system sizes. Each cell averages the predefined 8-shot and 32-shot studies. Negative values indicate lower Bures state distance for \method{}. Comparisons use the representation defined for each study.}
\label{fig:landscape}
\end{figure}

Across the reconstruction benchmarks, \method{} achieves lower Bures state distance than both flow baselines on thermal, cat, Dicke, and cluster states from two to ten qubits (Figure~\ref{fig:landscape}). Relative to \base{}, the reduction is approximately $94\%$ on Dicke states and above $89\%$ on cluster states. The advantage is smaller for random mixed factors beyond two qubits and decreases with system size for shallow circuits. Because different families use different representations, these are within-study comparisons of complete inference pipelines rather than one common scaling curve over unrestricted density matrices.

State-distance improvements are accompanied by distribution-level improvements
where an exact reference is available. On a specified two-qubit family with a
quadrature posterior, \method{} attains a Bures energy statistic of $0.00264$,
compared with $0.3570$ for \base{} and $0.2533$ for \flowgrpo{}, while its
diversity ratio is $1.009$ rather than $5.9$ and $3.2$. Wasserstein-1 and
Kolmogorov--Smirnov discrepancies give the same ordering. This comparison is
important because a low reconstruction error alone does not show that the
inference method represents posterior uncertainty correctly. Here the retained
samples reproduce both the location and dispersion of the reference posterior
rather than only approaching a favorable point estimate. Appendix~\ref{app:historicalreconstruction}
reports the complete construction and seeded results.

A dense four-qubit benchmark over 40 matched random mixed-state cells gives a
complementary test outside the structured physical families. There, \method{}
reduces mean Bures state distance from $0.3661$ for \base{} and $0.3696$ for
\flowgrpo{} to $0.3318$. Credible-coverage error decreases from $0.1000$ to
$0.0750$, and held-out observable error decreases from $0.0406$ and $0.0324$ to
$0.0286$. These historical comparisons evaluate the complete inference pipeline. A matched
nonlearned control reaches nearly the same dense four-qubit values, which already
suggests that part of the gain comes from the target-preserving posterior-sampling
stage rather than from the restricted scale scheduler alone. Because these runs
predate the corrected fixed-feature protocol, we use them to establish
pipeline-level performance and rely on the matched experiments below to isolate
learning. Full values and protocol details are reported in
Appendices~\ref{app:historicalreconstruction} and~\ref{app:protocol}.

\subsection{What produces the reconstruction gain?}
\label{sec:maintrack}

To separate prior information, posterior sampling, and learned adaptation, we compare four-qubit and six-qubit Dicke, cluster, cat, and random mixed-state problems under the same eight random-Pauli outcomes per record. Maximum likelihood uses the same physical family or factor representation as the samplers; a projected classical shadow and purified neural density operator provide family-independent comparisons; and the prior mean uses no outcomes. For these experiments, policies are retrained with feature scales fixed before training, and the \fixed{} shares the same initialization, proposal family, likelihood, and sampling budget as \method{} (Appendix~\ref{app:maintrackprotocol}).

Figure~\ref{fig:reconstructiondecomposition} shows that the prior already explains a substantial fraction of the low-shot reconstruction advantage. On four-qubit Dicke states, for example, the prior mean has Bures error $0.045$, compared with $0.101$ for maximum likelihood, about $1.07$ for the classical shadow and the RBM density operator, and $0.177$ for an amortized neural operator; posterior sampling gives $0.044$ for the matched control and $0.041$ for \method{}. Random mixed states show the same broader lesson: with only eight outcomes, prior averaging can already be competitive with posterior point estimates. The comparison therefore prevents us from attributing the entire improvement over the flow or non-Bayesian baselines to reinforcement learning. The amortized neural operator, a single network trained once to map a record to a state rather than the per-record fit of the RBM, is the strongest family-agnostic estimator we tested: it improves on both the classical shadow and the RBM density operator on every structured family at four and six qubits, and at six qubits it also passes family-aware maximum likelihood on those states (Table~\ref{tab:maintrack}). It nonetheless stays far above the prior mean and the Bayesian pipeline, confirming that a generic amortized reconstruction does not replace the physical prior and posterior sampling.

The incremental effect of the restricted learned scale scheduler is smaller and varies across seeds. This is an informative control rather than a negative result about learned proposals in general. In this configuration, learning selects among moves that were already designed to respect the physical prior, so a well-tuned fixed mixture can capture much of the same geometry. The experiment isolates what is gained by learning \emph{which existing move to use}; it does not test whether learning a new drift and covariance can create transitions unavailable to that library. Section~\ref{sec:multimodalmain} tests the latter question directly, while Appendix~\ref{app:maintrackprotocol} reports the complete seed-level decomposition.

A strong prior in the low-shot regime does not remove the need for posterior inference. The prior mean is a useful point estimator when measurements are sparse, but it contains no record-specific description of uncertainty and cannot reveal whether several physically distinct explanations remain plausible after observing the data. Posterior samples provide this additional information and support credible regions, held-out observable prediction, and downstream averages that depend on more than one representative state. The decomposition therefore has two purposes. It prevents us from crediting reinforcement learning for improvements already supplied by the physical prior, and it identifies the more demanding role of the sampler: representing how the measurement record reshapes that prior while preserving uncertainty. GRPO-QPS is evaluated against this stronger objective, which is why the distribution-level and sampling-efficiency comparisons below are necessary in addition to reconstruction error.

\begin{figure}[tbp]
\centering
\includegraphics[width=\textwidth]{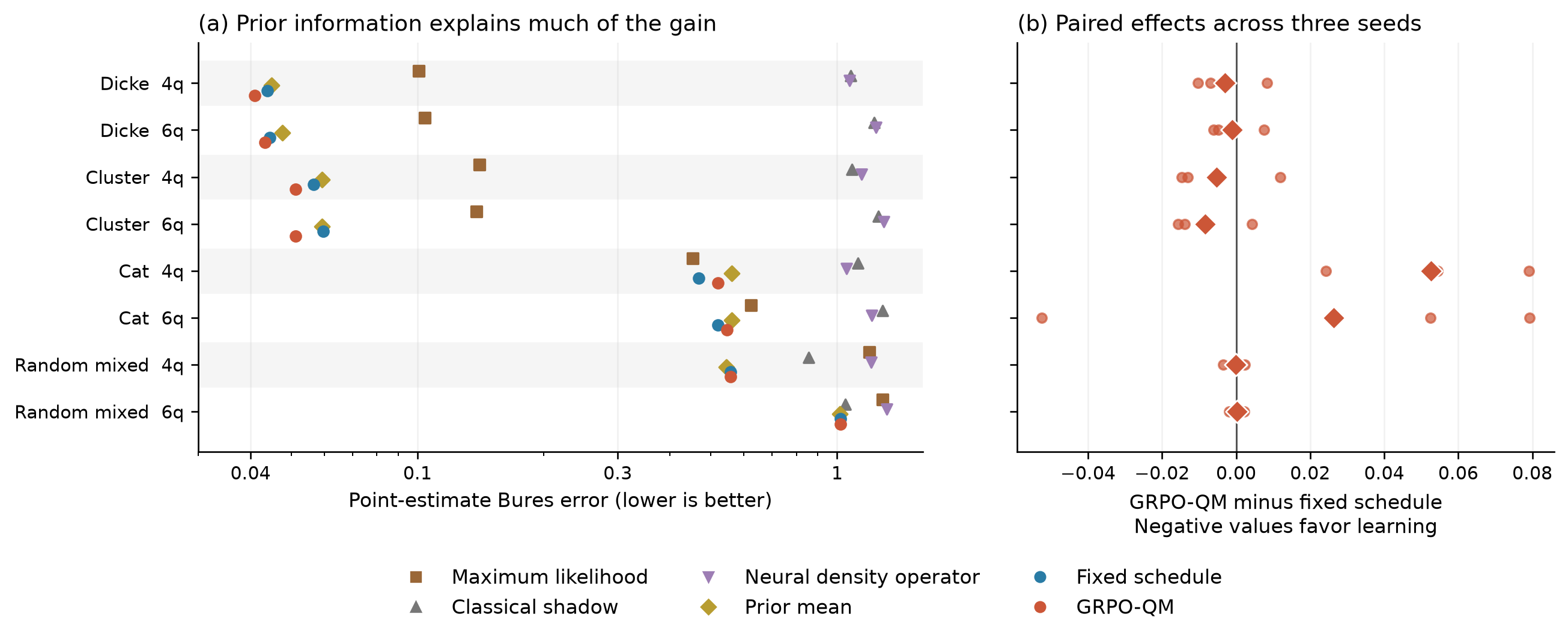}
\caption{Separating prior information, posterior sampling, and learned adaptation. Points show mean Bures error over three seeds and eight truths per seed. The right panel reports paired \method{} minus \fixed{} effects; negative values favor learning.}
\label{fig:reconstructiondecomposition}
\end{figure}

\subsection{Why tuned MCMC is competitive on the original benchmarks}
\label{sec:matched}

The gains over \base{} and \flowgrpo{} do not by themselves show that restricted learned proposal selection is better than a carefully tuned conventional kernel. We therefore compare fixed, error-tuned, acceptance-tuned, directly optimized, and GRPO-QPS mixtures that share the same proposal library on continuous Dicke, cluster, and cat posteriors. Collection uses matched likelihood work, and all learned or tuned choices are frozen before evaluation on new records (Appendix~\ref{app:continuous}).

On these benchmarks, learned scale selection does not consistently surpass tuned MCMC. After $1{,}024$ transitions from shared prior starts, \method{} has lower terminal coordinate Wasserstein-1 error than error-tuned MCMC in 18 repetition means, ties in 8, and is higher in 28; the exact sign test over the 46 discordant repetitions gives $p=0.18$. When intermediate states are pooled, \method{} improves over the untuned fixed mixture in all three family averages, but acceptance-tuned MCMC remains lower in each family. Its chain diagnostics are also stronger, with fewer coordinates exceeding $\widehat R=1.01$. Appendix~\ref{app:continuous} reports the complete paired analysis and uncertainty.

These results identify the limitation of this particular policy class. The available fixed moves already capture most of the useful structure of the original continuous posteriors, so learning only their scale or selection has limited room to improve. This observation also provides a direct testable hypothesis rather than an explanation made after seeing the multimodal results: if the MCMC advantage is caused by sufficient fixed proposal geometry, then the ordering should change when successful sampling requires transitions that the fixed library does not contain. The next experiment is designed specifically to test that hypothesis.

\subsection{GRPO-QPS surpasses conventional MCMC on a more challenging multimodal posterior}
\label{sec:multimodalmain}

We test this question on the thermal transverse-field Ising family with a widened field prior, $h\in[-1.8,1.8]$, and $Z$-basis records. The posterior forms two exact mirror modes at opposite signs of $h$. Local moves explore one mode but rarely cross between them, while global prior refresh becomes inefficient as the modes narrow. We evaluate the minimum effective sample size over $(J,h,\beta)$ and a mode indicator per $1{,}000$ likelihood calls, so a sampler must mix both within and between modes to score well. Every Metropolis proposal uses one new likelihood evaluation, and adaptive warm-up is charged to the corresponding baseline.

At four qubits and 200 shots, four of five training seeds outperform the strongest conventional sampler on the same record, with ESS ratios from $1.17\times$ to $2.40\times$; one seed collapses as its acceptance rate approaches zero. The separation becomes clearer at six qubits and 800 shots, where prior-refresh acceptance is approximately $3\%$. As shown in Table~\ref{tab:multimodal6q}, \method{} reaches a minimum ESS of 102, compared with 28 for prior independence, 27 for a tuned fixed mixture, and 20 for Haario adaptive Metropolis.

The symmetry-aware baseline is particularly informative. We give a reflection mixture the exact transformation $h\mapsto-h$, so it does not need to discover how to cross the two modes. This produces very high ESS for the mode indicator and for $h$, but its minimum ESS remains only 17 because movement in $J$ and $\beta$ is still slow. \method{}, in contrast, must learn the cross-mode transition from data, yet also learns a local covariance aligned with the narrow posterior region. Its advantage therefore cannot be explained by mode switching alone. Efficient sampling in this example requires both global transitions and local movement, and the learned proposal combines the two inside one Metropolis-corrected kernel.

\begin{table}[h]
\centering\small
\caption{Six-qubit multimodal thermal posterior with 800 $Z$ shots. Values are ESS per $1{,}000$ likelihood calls; the final column is the minimum across coordinates and the mode indicator.}
\label{tab:multimodal6q}
\begin{tabular}{lrrrrr}\toprule
Sampler & $J$ & $h$ & $\beta$ & Mode & Min.\\\midrule
Prior independence & 28 & 32 & 29 & 28 & 28\\
Fixed mixture, $90\%$ refresh & 27 & 33 & 28 & 31 & 27\\
Haario adaptive Metropolis & 20 & 20 & 21 & 23 & 20\\
Reflection mixture, known symmetry & 17 & 148 & 18 & 156 & 17\\
\method{} learned proposal & \textbf{102} & \textbf{135} & \textbf{119} & 116 & \textbf{102}\\
\bottomrule
\end{tabular}
\end{table}

We next train one record-conditioned policy per seed on five records and evaluate it, frozen, on three unseen 3,000-shot records. Training uses only statistics and reference moments derived from the five training records; no exact or held-out posterior information enters training or model selection. Across three independent training seeds, all nine held-out comparisons match or exceed the strongest conventional sampler, with mean ESS ratios of $1.49\times$, $1.47\times$, and $1.24\times$ (Table~\ref{tab:multimodaltransfer}). On held-out record 5, for example, the three learned policies obtain minimum ESS values of $44.8$, $40.8$, and $33.0$, compared with $27.6$, $28.2$, and $27.4$ for prior independence and approximately $16.5$ for Haario. None of the three transfer runs collapses. The relevant result is therefore not only that one trained kernel fits one difficult posterior, but that record-conditioned proposal behavior learned from the training records remains useful when the measurement record changes. This transfer is also what can amortize training cost: under the present protocol, training uses about $25.6$ million likelihood calls, and the saved evaluation work reaches the same order after roughly fifty served records. Appendix~\ref{app:multimodal} gives the complete accounting.

The comparison is designed so that this improvement cannot be obtained by changing the inference target. Every learned proposal is evaluated with the same Born likelihood as the conventional samplers, its complete forward and reverse density enters the Metropolis ratio, and performance is normalized by likelihood evaluations rather than by raw transition count. The learned policy may therefore decide where and how far to propose, but proposals that would distort posterior probability are still corrected by the same acceptance principle used throughout the paper. This distinction is central to the result. GRPO-QPS is not rewarded for producing samples that merely resemble a preferred reconstruction, and the reported ESS advantage is not purchased by relaxing Bayesian correctness. Within the tested multimodal family, the gain instead reflects a more efficient transition kernel: the policy learns record-dependent long-range motion between separated modes together with local covariance that follows the narrow posterior geometry.

This also clarifies the relationship between the original and multimodal benchmarks. The original experiments ask whether learning can improve the selection of an already adequate set of physical moves, and the answer is modest. The multimodal experiments ask whether learning can construct proposal geometry that the fixed library does not already provide, and here the answer is clearly positive. The two results are therefore complementary rather than conflicting. They locate the value of learning in the structure of the posterior and suggest a practical criterion for using GRPO-QPS: additional training effort is most justified when standard kernels show complementary failures, such as good local exploration but poor mode crossing, or successful global jumps but inefficient movement within narrow modes.

\begin{table}[h]
\centering\small
\caption{Transfer to unseen six-qubit, 3,000-shot records. Entries are the ratio of \method{} minimum ESS to the strongest conventional sampler at equal likelihood cost.}
\label{tab:multimodaltransfer}
\begin{tabular}{lrrrr}\toprule
Training seed & Record 5 & Record 6 & Record 7 & Mean\\\midrule
0 & 1.62 & 1.12 & 1.71 & 1.49\\
1 & 1.45 & 1.03 & 1.93 & 1.47\\
2 & 1.20 & 1.05 & 1.47 & 1.24\\
\bottomrule
\end{tabular}
\end{table}

The original and multimodal comparisons therefore answer different questions. Learning only the scale or selection of a sufficient move library provides little benefit, whereas learning new proposal geometry can matter when the posterior requires both long-range mode changes and efficient local movement. In the multimodal study, the policy learns state-dependent drift and covariance and is trained with an estimation-aligned trajectory objective; training on a mixture of broader and narrower posteriors also prevents the collapse observed when optimization begins directly on the narrowest modes. Appendix~\ref{app:multimodal} provides the complete construction, cost accounting, and scope.

\subsection{What the training diagnostics reveal}
\label{sec:rewarddesign}

The earlier restricted scheduler also reveals why a learned sampler can fail even when the target is preserved. On an exactly enumerated 48-state posterior, learning produces a small increase in spectral gap but no resolved improvement in effective samples per unit transition work, and the change in estimation variance depends on the physical observable. A closed-form four-state example explains the mismatch: larger accepted Bures movement can coexist with stronger autocorrelation in the quantity being estimated. Movement is therefore not, by itself, a reliable objective for physical estimation. Appendix~\ref{app:finite} and Appendix~\ref{sec:movementcounterexample} provide the exact transition and covariance analyses.

When reward ambiguity is removed and exact and sampled optimization minimize the same finite-trajectory variance objective, the policy class contains a measurable learning opportunity. At 1,024 evaluation states, exact differentiation reduces mean physical estimation variance by $12.63\%$ relative to the tuned mixture, while sampled trajectory-score training recovers a $6.32\%$ reduction across 45 enumerated posteriors and three training repetitions. An inconsistent rescaling of the trajectory score and regularization penalty reduces the gain to $0.43\%$, while scaling both terms restores a $7.28\%$ improvement. The effect is not uniform: sampled training improves the four-shot setting but degrades the sixteen-shot setting, whereas the exact gradient improves both. These diagnostics therefore identify a real opportunity inside the policy class while also showing that optimization details determine whether that opportunity is realized. The full reward-ranking study, optimizer interventions, uncertainty analysis, and complete numerical tables are moved to Appendix~\ref{app:rewarddesign}.

Together, these diagnostics motivate the final multimodal configuration used above: a proposal class expressive enough to change both drift and covariance, an objective tied to finite-trajectory estimation rather than movement alone, and a training curriculum spanning posterior widths. The main performance claim therefore rests on the reconstruction and multimodal evaluations, while the enumerated studies explain which design choices make target-preserving learning useful and which simpler choices can hide that advantage.

\section{Conclusions and limitations}
\method{} addresses a central difficulty in applying reward-based learning to Bayesian scientific inference: exploration can be optimized while preserving the target posterior defined by the prior and physical likelihood. Across quantum-tomography problems, GRPO-QPS improves reconstruction and posterior agreement over \base{} and \flowgrpo{} by learning proposal strategies inside a Metropolis-corrected chain, such adaptation changes how the posterior is explored without changing the posterior itself. The matched MCMC comparisons further clarify when the learned component matters. Conventional samplers remain strong when a fixed proposal library already matches the posterior geometry, but on the more challenging multimodal thermal posterior, GRPO-QPS learns proposal structure that improves both local exploration and movement between modes, surpasses the tested conventional MCMC baselines, and transfers to unseen records. Taken together, these results support GRPO-QPS as a target-preserving learned inference framework whose advantage appears when posterior geometry requires adaptation beyond what fixed proposals already provide.

These advantages have clear limits. On a smooth 512-dimensional unimodal factor posterior, tuned HMC remains superior, and the learned policy incurs a training cost justified primarily by reuse across many records. We do not claim GRPO-QPS dominates every MCMC algorithm; rather, the results establish a more specific conclusion within the experimental scope of this paper: target-preserving proposal learning can provide a substantial sampling advantage when conventional kernels do not adequately capture the structure of a challenging posterior.

\section*{AI use statement}

A large language model assisted with language polishing. The authors verified all claims, proofs, mathematical formulations, and reported values against the implementation and experimental results and take full responsibility for the manuscript.

\section*{Ethics statement}

This work uses simulated quantum systems and does not involve human subjects,
personal data, or autonomous physical control.  The main ethical risk is
misleading scientific attribution.  We address it through matched comparisons,
family-specific results, and a clear separation between development-set
evidence and future confirmation.

\section*{Reproducibility statement}

Appendix~\ref{app:reproducibility} describes the numerical records, verification
checks, and remaining release work. The corrected experiments preserve
configurations, logs, checkpoints, samples, source snapshots, and content hashes;
the appendix identifies gaps in the historical records. The supplied ICLR
2027 style files are unchanged.

\bibliography{references}
\bibliographystyle{iclr2027_conference}

\appendix

\section{Mathematical validity of target-preserving learning}
\label{app:math}

The central mathematical claim of this paper is that learning changes how
proposals are selected without altering the stationary distribution of the
frozen chain. This section states the required conditions, proves stationarity,
and then clarifies why the GRPO path group must not be confused with the spectral
decomposition of a density matrix.

\subsection{Posterior invariance of the learned proposal schedule}

Let $x$ be a physical state parameter, $a$ a discrete kernel action, and
$K_a(x'\mid x)$ the action-conditional proposal.  The joint forward proposal
is $Q_\phi(x',a\mid x)=q_\phi(a\mid x)K_a(x'\mid x)$.  The complete acceptance
probability is
\begin{equation}
  \alpha(x,x',a)=\min\left\{1,
    \frac{\pi_X(x'\mid y)q_\phi(a\mid x')K_a(x\mid x')}
  {\pi_X(x \mid y)q_\phi(a\mid x )K_a(x'\mid x)}\right\}.
  \label{eq:general_mh}
\end{equation}

\begin{proposition}[Posterior stationarity]
  Assume that $q_\phi(a\mid x)>0$ for every enabled action.  Assume also that each
  $K_a(x'\mid x)$ has a defined reverse density on its support and that $\phi$ is
  frozen during sample collection.  The transition defined by
  Eq.~\ref{eq:general_mh} is reversible with respect to $\Pi_X(dx\mid y)$.
  Consequently, $\Pi_X(dx\mid y)$ is stationary for the resulting mixture kernel.
\end{proposition}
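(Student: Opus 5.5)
The plan is to verify detailed balance on the joint space of state and action, and then integrate the action out. Write the frozen transition from $x$ as follows. We draw $a\sim q_\phi(\cdot\mid x)$ and then $x'\sim K_a(\cdot\mid x)$. We accept with $\alpha(x,x',a)$ from Eq.~\ref{eq:general_mh}; on rejection we stay at $x$.

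The kernel is
\[
P(x,dx')=\sum_a q_\phi(a\mid x)K_a(x'\mid x)\alpha(x,x',a)\,dx' + r(x)\delta_x(dx'),
\]
where $r(x)$ is the total rejection probability. Reversibility means that the measure $\Pi_X(dx\mid y)P(x,dx')$ is symmetric in $(x,x')$. The diagonal term $r(x)\delta_x$ is trivially symmetric. It therefore suffices to show, for each fixed action $a$ and each pair $x\neq x'$, the pointwise identity
\[
\pi_X(x\mid y)\,q_\phi(a\mid x)K_a(x'\mid x)\,\alpha(x,x',a)=\pi_X(x'\mid y)\,q_\phi(a\mid x')K_a(x\mid x')\,\alpha(x',x,a).
\]

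This identity is the standard $\min\{A,B\}=\min\{B,A\}$ computation. Set $A=\pi_X(x\mid y)q_\phi(a\mid x)K_a(x'\mid x)$ and $B=\pi_X(x'\mid y)q_\phi(a\mid x')K_a(x\mid x')$. Then both sides equal $\min\{A,B\}$, because $\alpha(x,x',a)=\min\{1,B/A\}$ and $\alpha(x',x,a)=\min\{1,A/B\}$. Note that the same action label $a$ is used in both directions. This is exactly why the ratio contains $q_\phi(a\mid x')$ rather than some other reverse action. Summing the identity over $a$ gives symmetry of the off-diagonal part. Stationarity then follows by integrating the detailed-balance relation over $x\in\mathsf X$: $\int\Pi_X(dx)P(x,B)=\int_B\Pi_X(dx')P(x',\mathsf X)=\Pi_X(B)$. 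The pushforward $\Pi_\rho$ is then stationary for the induced $\rho$-chain by the same reasoning, though only $\Pi_X$ is claimed.

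The main obstacle is making the ratio well defined. This is where the hypotheses enter. When $A>0$ but $B=0$, we have $\alpha=0$ and both sides vanish. The dangerous case is $A=0$ with $B>0$, or a reverse density that is undefined. Positivity of $q_\phi(a\mid x)$ for every enabled action ensures that $q_\phi(a\mid x')>0$ appears in the reverse direction. The reverse-density assumption ensures that $K_a(x\mid x')$ is a genuine density with respect to the same reference measure on its support. I will adopt the convention $\alpha=1$ when $A=0$, a set of $\Pi_X\otimes$proposal measure zero, and check the three support cases separately.

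A subtle point is the pCN kernels of Eq.~\ref{eq:pcn}. These have densities only relative to the prior and not to Lebesgue measure. I will handle them by writing $\pi_X$ as the likelihood times the prior density and using prior-reversibility of each $K_a$. Each ratio of densities is then replaced by a Radon--Nikodym derivative, and the identity is verified at the level of measures on $\mathsf X\times\mathsf X$. Finally, freezing $\phi$ makes $P$ time-homogeneous, so stationarity applies to every collected transition.
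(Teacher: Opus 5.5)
Your proposal is correct and follows the paper's proof: fix the action, write the accepted flow as $\min\{A,B\}$, observe that it is symmetric in $(x,x')$, sum over actions, and let the rejection mass sit on the diagonal. The extra support-case bookkeeping is fine. The pCN detour, however, rests on a false premise and is unnecessary: on the finite-dimensional factor space, each pCN kernel with $\beta_a\in(0,1]$ has an ordinary Gaussian Lebesgue density.
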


\begin{proof}
  For a fixed action, the accepted joint probability flow from $x$ to $x'$ is
  \begin{align}
    &\pi_X(x\mid y)q_\phi(a\mid x)K_a(x'\mid x)\alpha(x,x',a) \\
    &\quad=\min\!\left\{
      \pi_X(x\mid y)q_\phi(a\mid x)K_a(x'\mid x),
      \pi_X(x'\mid y)q_\phi(a\mid x')K_a(x\mid x')
    \right\}.
  \end{align}
  The right side is unchanged when $x$ and $x'$ are exchanged.  Summing this
  identity over actions gives detailed balance for the accepted transitions.
  The rejection probability supplies the remaining diagonal mass, so it also
  preserves $\Pi_X$.  Stationarity follows from detailed balance.
\end{proof}

Equation~\ref{eq:general_mh} also covers the continuous Gaussian-mixture proposal in
Eq.~\ref{eq:learnedmixture} by treating the mixture itself as $Q_\phi$ rather
than introducing a discrete action. In logit coordinates, the target density
includes the change-of-variables factor and the full forward and reverse mixture
densities are evaluated explicitly. The same detailed-balance argument therefore
applies to the frozen drift-and-covariance policy.

The policy must depend only on the individual state and record, with fixed
feature normalization; a feature computed from other chains being sampled at the same time would require a
joint-chain argument. The frozen policy condition is substantive: if the policy
changed while samples were being retained, the chain would become
time-inhomogeneous and the proposition would no longer apply directly. We
therefore train first, freeze $q_\phi$, discard burn-in, and only then collect
posterior samples. The proposition does not establish irreducibility, geometric
ergodicity, a convergence rate, or adequate burn-in; those finite chain
properties require separate analysis.

\begin{proposition}[Convergence on the enumerated graph]
  Let $\mathcal S$ be a finite-state space on which $\pi_X(x\mid y)>0$.  Suppose
  one enabled action is a global refresh with $K_{\mathrm r}(x'\mid x)>0$ for all
  $x,x'\in\mathcal S$, and suppose its frozen policy probability is positive at
  every state.  If the corrected transition also has a positive self loop at
  least one state, then the GRPO-QPS chain is irreducible and aperiodic.  Its unique
  stationary distribution is $\pi_X$, and the distribution of the chain converges
  to $\pi_X$ from every initialization.
\end{proposition}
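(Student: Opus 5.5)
The plan is to reduce the statement to the standard convergence theorem for finite Markov chains. An irreducible, aperiodic chain on a finite state space has a unique stationary distribution, and it converges to that distribution in total variation from every initial law. So we need to verify irreducibility and aperiodicity for the frozen corrected kernel $P(x,x')=\sum_a q_\phi(a\mid x)K_a(x'\mid x)\alpha(x,x',a)$ for $x'\neq x$, with the rejection mass placed on the diagonal. We then identify the stationary law using the previous Proposition (Posterior stationarity).

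\textbf{Irreducibility.} Fix $x\neq x'$ in $\mathcal S$. The refresh action alone contributes
\[
P(x,x')\ \ge\ q_\phi(\mathrm r\mid x)K_{\mathrm r}(x'\mid x)\,\alpha(x,x',\mathrm r).
\]
The first two factors are positive by hypothesis. For the acceptance probability, we use Eq.~\ref{eq:general_mh}: its ratio has numerator $\pi_X(x'\mid y)q_\phi(\mathrm r\mid x')K_{\mathrm r}(x\mid x')$, and each factor is positive because $\pi_X>0$ on $\mathcal S$, the frozen policy gives the refresh action positive probability at every state (including $x'$), and $K_{\mathrm r}$ is positive everywhere. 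The denominator is positive as well, so $\alpha>0$. Hence $P(x,x')>0$ for all $x\neq x'$, so every state reaches every other state in one step.

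\textbf{Aperiodicity.} By hypothesis there is a state $x_0$ with $P(x_0,x_0)>0$, so the period of $x_0$ is $1$. In an irreducible chain all states share the same period, so the whole chain is aperiodic. Two alternatives are available if needed. First, one can note that $P(x,x')>0$ for all pairs already gives returns of length $2$ and $3$ when $|\mathcal S|\ge 3$, but the self-loop hypothesis covers every case, including $|\mathcal S|=2$. Second, one can observe that $P$ is primitive, since $P^2$ has all entries positive.

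\textbf{Conclusion.} The previous Proposition applies: its positivity assumption on enabled actions holds, finite-state kernels trivially have reverse densities, and $\phi$ is frozen. It gives that $\pi_X$ is stationary for $P$. Perron--Frobenius, or the finite-state ergodic theorem, shows that an irreducible chain has a unique stationary distribution, so that distribution must be $\pi_X$. Irreducibility together with aperiodicity gives $\|\delta_{x}P^n-\pi_X\|_{\mathrm{TV}}\to 0$ for every $x$, and by linearity the same holds from every initial distribution. The only step that needs care is positivity of $\alpha$, because the reverse-direction policy probability $q_\phi(\mathrm r\mid x')$ appears in it. That step therefore relies on the assumption that refresh is enabled at \emph{every} state, not merely at the current one.
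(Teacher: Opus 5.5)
Your proof is correct and follows the same route as the paper. Positivity of the refresh proposal and of every factor in the Metropolis ratio gives one-step irreducibility, the self loop gives aperiodicity, Proposition~1 supplies stationarity of $\pi_X$, and the finite-state convergence theorem yields uniqueness and convergence from every initialization; your explicit check that the reverse factor $q_\phi(\mathrm r\mid x')$ is positive just spells out what the paper compresses into ``every factor in the Metropolis ratio is also positive on the target support.''
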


\begin{proof}
  For any $x,x'\in\mathcal S$, the refresh proposal probability is positive.
  Every factor in the Metropolis ratio is also positive on the target support, so
  the accepted refresh transition from $x$ to $x'$ has positive probability.
  Thus every state communicates with every other state and the finite chain is
  irreducible.  A positive self loop at one state makes an irreducible chain
  aperiodic.  Proposition~1 supplies stationarity.  The standard finite-state
  Markov chain convergence theorem then gives uniqueness of $\pi_X$ and
  convergence from every initial distribution.
\end{proof}

This proposition applies to the audited 48-state exact graph. Its
positive policy hypothesis holds by construction: the bounded $1.5\tanh(\cdot)$
residual keeps every action logit within $1.5$ of its predefined value, ensuring
$q_\phi(a\mid x)\ge e^{-3}\,q_{\mathrm{fixed}}(a)$ for every action. Because
the predefined refresh probability is positive, the refresh action has a uniform
positive lower bound at every state. The proposition does not establish a useful
convergence rate, so we instead compute the full transition matrix and report the
absolute spectral gap
\begin{equation}
  \gamma_{\mathrm{abs}}(T)
  =1-\max_{j\geq2}|\lambda_j(T)|.
  \label{eq:absolutegap}
\end{equation}
Here $\lambda_1=1$ is the stationary eigenvalue.  The remaining eigenvalues are
ordered by magnitude.  Every evaluated transition is reversible, so the
eigenvalues are real after the similarity transform
$D_\pi^{1/2}TD_\pi^{-1/2}$.  We use no lazification.  For numerical stability,
states below $10^{-14}$ times the maximum posterior mass are excluded only from
the eigensolve, and their outgoing mass is returned to the diagonal.  The
largest excluded mass over 360 evaluation cells is $3.19\times10^{-14}$.  The
largest detailed balance and stationarity residuals are
$1.39\times10^{-17}$ and $4.44\times10^{-16}$, respectively.  The mean absolute
spectral gap across the 360 evaluation cells is approximately $0.31$. This mean
does not bound the slowest cell or establish that $1{,}024$ burn-in transitions
suffice from every initialization.

Equation~\ref{eq:absolutegap} does not extend automatically to continuous factor states,
where minorization or drift conditions would be needed.

\subsection{Physical factor states and prior reversible moves}

For a complex factor $F\in\mathbb C^{d\times r}$, the map
\begin{equation}
  \rho(F)=\frac{FF^\dagger}{\tr(FF^\dagger)}
  \label{eq:factorphysical}
\end{equation}
is positive semidefinite and has unit trace whenever $F\neq0$.  The factor
target is
\begin{equation}
  \Pi_F(dF\mid y)\propto L[y\mid\rho(F)]\,\mu_{\mathrm{CN}}(dF),
  \label{eq:factorposterior}
\end{equation}
where $\mu_{\mathrm{CN}}$ is the complex Gaussian factor measure.  The density
matrix posterior is the pushforward of Eq.~\ref{eq:factorposterior}; it is not
silently identified with a separately chosen density matrix prior.  The proposal in
Eq.~\ref{eq:pcn} preserves the complex Gaussian factor prior because the pair
$(F,F')$ is exchangeable under that prior.  Its prior and kernel terms therefore
cancel from Eq.~\ref{eq:general_mh}, leaving only the likelihood and learned action probabilities in the
specialized scale-scheduler ratio.  Bounded thermal coordinates instead use
reflected symmetric local moves or an exact prior refresh, so their acceptance
rule retains the corresponding prior ratio explicitly.

Physicality and posterior correctness are distinct properties.
Equation~\ref{eq:factorphysical} prevents negative eigenvalues and trace drift,
while the acceptance calculation determines whether the frequency of physical
states is consistent with the prior and Born likelihood. Neither property alone
implies accurate finite chain exploration.

\subsection{Credit for accepted and rejected transitions}

Let $z\in\{0,1\}$ record whether a proposal was accepted.  Conditional on the
recorded proposal randomness, the policy dependent event probability is
\begin{equation}
  p_\phi(a,z\mid x,x')=q_\phi(a\mid x)
  \alpha_\phi(x,x',a)^z[1-\alpha_\phi(x,x',a)]^{1-z}.
  \label{eq:eventprobability}
\end{equation}
The ratio used in Eq.~\ref{eq:grpo} is
$r_{g,t}=p_\phi/p_{\phi_{\mathrm{old}}}$. The old event probability is saved
before two clipped policy epochs. During replay, the proposal draw and $z$ are
held fixed, likelihood values are treated as environment values, and gradients
pass through $q_\phi$ at both $x$ and $x'$ inside $\alpha_\phi$. Rejections
therefore receive their actual event probability rather than credit for an
unobserved move, and the terminal group advantage is shared across the four
transitions. This defines the implementation exactly but constitutes an
approximate credit rule; we do not claim an unbiased estimator for a long-horizon
mixing objective.

\paragraph{Training and collection procedure.}
For each seeded posterior set used during method development, the implementation follows five steps.
First, it initializes the policy at the predefined fixed probabilities and
starts $G$ matched paths from each flow-initialized state.  Second, it draws actions,
proposal noise, and accept or reject decisions under the old policy for four
transitions.  Third, it computes one terminal reward per path and normalizes
rewards only within paths sharing the same start and record.  Fourth, it stores
the old event probabilities from Eq.~\ref{eq:eventprobability} and performs two
clipped Adam epochs while replaying the recorded events.  Fifth, after the
predefined number of updates, it freezes the final policy, restarts from the
saved flow-initialized states, discards burn-in, and retains the terminal collection.
There is no checkpoint selection on evaluation outcomes.

\subsection{Group replicas, quantum mixtures, and the Bayesian target}
\label{sec:groups}

Three distinct probability structures enter the method, each serving a different
purpose that must not be conflated.

Within a single state, the eigenvalues weight the physical mixture
$\rho=\sum_i\lambda_i|\psi_i\rangle\langle\psi_i|$ and enter every Born
probability with their physical weight,
\begin{equation}
  p_m(\rho)=\tr(E_m\rho)=\sum_i\lambda_i\langle\psi_i|E_m|\psi_i\rangle .
  \label{eq:bornweights}
\end{equation}
Across states, the prior and Born likelihood weight the Bayesian target
$\Pi_X(\cdot\mid y)$ of Eq.~\ref{eq:posterior}. The GRPO group is neither of
these: it is simply a set of $G$ stochastic exploration replicas that share a
record and starting state. Their relative rewards supply a variance-reducing
baseline; some tested variants also divide by the group's reward standard
deviation. That optional normalization is itself evaluated in
Appendix~\ref{app:rewarddesign}. A naive adaptation that
treated the $r$ eigenvectors as equally weighted group members would replace
$\rho$ with $\tfrac1r\sum_i|\psi_i\rangle\langle\psi_i|$ and predict
$\tfrac1r\sum_i\langle\psi_i|E_m|\psi_i\rangle$ instead of
Eq.~\ref{eq:bornweights}, a different quantum state with different measurement
statistics. In \method{}, the factor $1/G$ enters only the relative credit
computation; every replica evaluates the full $\lambda_i$-weighted density
matrix, so neither the eigenvalue weights nor the posterior weights are ever
overwritten.

State-dependent selection also enters the acceptance probability and the
accepted or rejected event used for learning. Appendix~\ref{app:math} gives the
event probability and explains why an approximate training rule can change the
schedule while the correctly constructed frozen kernel retains its target.

\subsection{Energy score used by Flow-GRPO}

Equation~\ref{eq:energyreward} gives the per endpoint contribution of a sample
energy statistic. Its cross term attracts endpoints toward a finite posterior
comparator, while its within-group term rewards diversity. The second term uses
$G-1$ rather than $G$ because the self distance is zero and would otherwise
dilute the scale. The reward remains a finite-sample estimate: it neither
guarantees exact target preservation nor repairs a region that all stochastic
flow paths miss. This distinction explains why \flowgrpo{} serves as a
meaningful comparison but not a substitute for a
sampler that preserves its stationary distribution.

\section{Historical reconstruction comparisons}
\label{app:historicalreconstruction}

These experiments document the original two-to-ten-qubit comparison with the
flow methods. Their numerical outputs are unchanged. The earlier policy features
used statistics computed across chains in the same sampling batch, so freezing the policy weights did not
make each chain an independent frozen kernel. The scalar two-qubit and dense
40-cell four-qubit studies have not been rerun with fixed individual-state
features. Their terminal errors are descriptive results, not an application of
Proposition~1 or a certification of posterior convergence. The corrected
four-qubit and six-qubit comparison is reported separately in
Appendix~\ref{app:maintrackprotocol}.

The reconstruction gains depend strongly on the physical family
(Figure~\ref{fig:landscape}). On paired truths and measurement records,
\method{} lowers Bures state distance for thermal, cat, Dicke, and cluster states
at every tested size from two to ten qubits. The reduction reaches approximately
$94\%$ for Dicke states, exceeds $89\%$ for cluster states, and ranges from
$56\%$ to $67\%$ for cat states. In contrast, \flowgrpo{} generally remains
close to \base{}. The complete physical sampling pipeline thus reaches more
accurate reconstructions than reward tuning of the flow alone. Each comparison
uses the same representation, but representations differ across studies, so
these results do not form a common dense state scaling curve
(Appendix~\ref{app:protocol}).

This advantage has two important boundaries. Random mixed factors change little
beyond two qubits, while the shallow circuit gain falls from approximately
$48\%$ at two qubits to $2\%$ at ten qubits. Moreover, turning learning off
retains much of the improvement. The \fixed{} follows \method{} closely across
the complete system matrix and matches it within seed variation on the dense
four-qubit benchmark (Tables~\ref{tab:allsystems} and~\ref{tab:q4}). The broad
gain therefore belongs to the complete pipeline, not to learned selection alone.

Known family information helps explain why that distinction matters. Even
before posterior sampling, draws from the family prior have much lower Bures
error than either flow on four-qubit Dicke and cluster states
(Table~\ref{tab:starts}). Part of the reconstruction advantage is therefore
already present in the chosen state family. Section~\ref{sec:matched} holds
that information fixed to ask what learning adds to posterior exploration.

\subsection{An exact two-qubit posterior the flow methods miss}

Across eight seeded posterior sets used during method development, the exact
scalar posterior allows a direct comparison between terminal samples and
numerical quadrature on a declared one-dimensional family (Table~\ref{tab:q2main}). The Bures energy statistic for
\method{} is two orders of magnitude smaller than for either flow, and its
diversity ratio sits near one rather than the factor of several seen for the flows,
so \method{} recovers the reference spread without collapsing around a single
convenient reconstruction. The direct Wasserstein-1 and Kolmogorov--Smirnov errors over 768 records show
the same ordering. The \fixed{} reaches nearly the same agreement, indicating
that this simple one-dimensional posterior does not require the restricted learned
scale scheduler.
This establishes terminal sample agreement on the declared
family, not convergence from every initialization or chain length.

\begin{table}[tbp]
  \caption{Exact two-qubit posterior agreement over eight seeded posterior sets.  Values
    are mean $\pm$ standard deviation across set means; lower is better except
    that the ideal diversity ratio is one.  The energy statistic measures
    agreement with quadrature samples, and the Wasserstein-1 and KS columns are
    direct one-dimensional agreement metrics on the declared scalar family.
    Appendix~\ref{app:allresults} adds the truth distance, observable, calibrated
  total variation, and quantile metrics.}
  \label{tab:q2main}
  \centering
  \small
  \setlength{\tabcolsep}{5pt}
  \begin{tabular}{lrrrr}
    \toprule
    Method & Energy statistic & Diversity ratio & Wasserstein-1 & KS \\
    \midrule
    BuresTomFlow & $0.3570\pm0.0234$ & $5.908\pm0.301$ & $0.19001\pm0.01833$ & $0.65899\pm0.03678$ \\
    Flow-GRPO & $0.2533\pm0.0117$ & $3.186\pm0.172$ & $0.06358\pm0.00539$ & $0.46544\pm0.02111$ \\
    GRPO-QPS & $0.00264\pm0.00041$ & $1.009\pm0.021$ & $0.00991\pm0.00068$ & $0.10362\pm0.00330$ \\
    \bottomrule
  \end{tabular}
\end{table}

\subsection{Dense four-qubit reconstruction}

We next test whether the improvement extends from structured families to a
denser, less structured benchmark. The dense four-qubit study pairs 40 random mixed cells,
with variation reported as the standard deviation across cells rather than a
confidence interval (Table~\ref{tab:q4}). \method{} lowers the mean state distance
below both flows. State distance measures reconstruction accuracy, coverage error
reflects uncertainty width, and observable error tests predictions not used by the
primary score. Improvement across all three metrics indicates that the result is not limited
to the primary state-distance measure. The
\fixed{} reaches
essentially the same three values, locating the source of the gain in the target
preserving proposals rather than in the learned schedule.

\begin{table}[tbp]
  \caption{Dense four-qubit random mixed benchmark over 40 paired cells (lower is
    better).  Values are mean $\pm$ standard deviation across cells.  \method{}
    improves on both flow methods across state distance, credible coverage error,
    and observable reconstruction error, and the matched control reaches the same
  three values.}
  \label{tab:q4}
  \centering
  \small
  \begin{tabular}{lrrr}
    \toprule
    Method & State distance & Coverage error & Observable error \\
    \midrule
    BuresTomFlow & $0.3661\pm0.0059$ & $0.1000\pm0.0000$ & $0.0406\pm0.0041$ \\
    Flow-GRPO & $0.3696\pm0.0061$ & $0.1000\pm0.0000$ & $0.0324\pm0.0020$ \\
    GRPO-QPS & $0.3318\pm0.0057$ & $0.0750\pm0.0155$ & $0.0286\pm0.0010$ \\
    Matched control & $0.3317\pm0.0057$ & $0.0717\pm0.0181$ & $0.0285\pm0.0010$ \\
    \bottomrule
  \end{tabular}
\end{table}

The constant \base{} coverage error has a concrete meaning: its credible balls
include all 160 held-out truths in every cell, giving observed coverage $1.0$ rather
than the nominal $0.9$, which both \method{} and the \fixed{} reduce while improving
state and prediction errors. The complete 40-cell pairing prevents the result from being determined by a
small subset of records. Because these cases were available during method
development, we treat the comparison as development-set evidence rather than as
a confirmatory evaluation.

\subsection{Thermal reconstruction across system sizes}

Thermal Ising states yield lower \method{} state distance than both flow methods
at every tested size (Figure~\ref{fig:thermal-lines}). The eight-qubit and ten-qubit results use compressed thermofield factors, with
two seeds per condition and three at five qubits. Because these records were
available during method development, the comparisons are descriptive rather than
confirmatory. The paired
\method{} minus control differences (Figure~\ref{fig:thermal-paired}) favor
\method{} at five sizes and the \fixed{} at five qubits but are wide enough to
overlap zero everywhere, consistent with the matched-sampling finding below that
the restricted scale scheduler adds little on these original conditions.
Different posterior widths along the temperature, field, and coupling directions
could nevertheless favor richer state-dependent proposal geometry. The present experiments
do not include an intervention that isolates this mechanism, so we do not treat
it as an established explanation.

\section{Original reconstruction protocol}
\label{app:protocol}

\begin{table}[htbp]
\centering\small
\caption{Distinct GRPO-QPS implementations answer different questions. Freezing learned weights is necessary but does not remove dependence on other chains sampled at the same time.}
\label{tab:methodversions}
\begin{tabular}{p{0.21\linewidth}p{0.34\linewidth}p{0.35\linewidth}}
\toprule
Study & Policy and training & Evidence supplied\\
\midrule
Reconstruction & Four-step reward; batch normalized inputs; one policy per posterior set & Terminal errors, with historical kernel assumptions unresolved except for the 24 main-track cells, rerun with fixed feature scales (Appendix~\ref{app:maintrackprotocol})\\
Matched continuous & One-step reward; fixed individual state features; frozen before new records & Comparison with equally informed, tuned samplers\\
Enumerated posterior & Sixteen-step reward; exact posterior features; eight training repeats & Exact transition rates on 24 unseen evaluation sets\\
Matched objective diagnostic & 256 retained states; exact posterior features and means; three training seeds & Exact versus sampled optimization on 45 enumerated development posteriors; no transfer claim\\
\bottomrule
\end{tabular}
\end{table}

The experiments are designed to answer distinct questions at increasing physical
and computational scale. Treating all cells as interchangeable would confound
exact posterior recovery with dense reconstruction, factor representations, and
thermofield compression. We therefore interpret each study within its stated
role, as summarized in Table~\ref{tab:evidenceladder}.

\begin{table}[h]
  \caption{Evidence ladder and the conclusion permitted by each study.  A cell
    is one matched truth, measurement record, representation, and evaluation
  condition.}
  \label{tab:evidenceladder}
  \centering
  \scriptsize
  \begin{tabular}{L{0.18\textwidth}L{0.12\textwidth}L{0.25\textwidth}L{0.29\textwidth}}
    \toprule
    Study & Independent cells & Scientific role & Permitted conclusion \\
    \midrule
    One-dimensional quadrature posterior & 8 seeded posterior sets & Compare generated scalar coordinates with numerical quadrature & Establish empirical terminal sample agreement on the declared family \\
    Enumerated 48-state transitions & 24 evaluation sets with 15 posteriors each & Calculate the complete corrected transition matrix in closed form & Verify exact stationarity and convergence of the frozen corrected chain \\
    Dense four qubits & 40 & Reproduce the full density matrix benchmark under paired records & Compare reconstruction, coverage, and observable reconstruction for the dense architecture \\
    Common physical families & 10 per qubit count & Test coherence, conserved sectors, stabilizers, and unfamiliar entanglement & Identify family-specific gains and limitations within a fixed representation \\
    Thermal Ising systems & 4, 4, 12, 8, 4, and 4 from two to ten qubits & Test mixed-state directions across dense and compressed descriptions & Measure the posterior correction gain within each representation \\
    \bottomrule
  \end{tabular}
\end{table}

\subsection{Study design and implementation overview}
\label{app:designoverview}

\textbf{Evaluation logic.}
An apparent gain over a learned flow can arise from three distinct sources:
known family information restricts the state space, physical proposals explore
it, and learning changes which proposals are selected. We disentangle these
through three questions: whether terminal samples agree with an exact posterior
that the initial flow misses, how large the reconstruction gain over both flow
baselines is across families and system sizes, and how much of that gain survives
when the learned schedule is replaced by the \fixed{}. Each study uses the
strongest available reference at its scale, ranging from exact quadrature at two
qubits to a finite importance comparator, rather than a converged
posterior, at larger sizes.

\textbf{Systems and representations.}
We study $n\in\{2,4,5,6,8,10\}$ qubits across six families: random mixed,
thermal transverse field Ising, cat, Dicke, cluster, and shallow circuit
states, chosen to probe correlated mixing, coherence, excitation sectors,
stabilizer structure, and unfamiliar entanglement patterns. An additional
two-qubit study covers Bell, cat, and Werner states with an exact posterior.
Dense matrices are retained through six qubits where practical and normalized
purification factors are used otherwise, so methods are always compared within
the same representation rather than along a single scaling curve.
Appendix~\ref{app:protocol} lists every prior, representation, and comparator.

\textbf{Pairing and measures.}
Within each study, methods are paired on the same truth and measurement record.
The broad flow comparisons also vary the use of known family information,
whereas the matched sampler comparison fixes the prior, likelihood, and initial
coordinates. We average within each seeded posterior set or training repetition before
comparing methods, since posterior draws from a single chain are not independent
experimental replicates. The reconstruction studies use the Bures state distance
as their primary metric; the matched continuous study instead evaluates
posterior coordinate Wasserstein and CDF errors against numerical integration.
Supporting measures include the held-out log likelihood, coverage, physical observables,
and physicality checks. Reported standard deviations reflect variation across
development-set cells, not confirmation on untouched data.

\textbf{Implementation.}
The original reconstruction studies use matched groups of four to sixteen paths,
four proposal steps, ratio clipping in $[0.8,1.2]$, one policy per seeded posterior set,
and one terminal draw per chain, measuring terminal behavior but not
autocorrelation. Every policy is frozen before collection;
Appendix~\ref{app:protocol} and Appendix~\ref{app:reproducibility} give the full
protocol, saved artifacts, and verification record. A separate comparison freezes
one policy per family and size before twelve new records and retains full
trajectories, using a one-step reward shared with direct optimization rather
than the original four-step reward (Appendix~\ref{app:continuous}). The
finite-state study instead uses a sixteen-step reward with enumerated posterior
features. Table~\ref{tab:methodversions} maps these three implementations to the
questions each can answer.

\subsection{Systems, representations, priors, and comparators}

The common matrix contains random mixed, thermal Ising, cat, Dicke, cluster,
and shallow circuit states, each probing a different source of posterior
difficulty. Cat states demand global coherence and parity, Dicke states impose an
excitation sector constraint, cluster states require stabilizer order, thermal
states combine several correlated mixed-state directions, and shallow circuits
test entanglement that is not aligned with any explicit symmetry.

Dense density matrices are retained through six qubits where practical; the
larger common family studies use the normalized factor of
Eq.~\ref{eq:factorphysical}, and the eight- and ten-qubit thermal studies use
compressed thermofield factors. These representations share physicality but not
identical approximation error, so all comparisons are made within a
representation. We do not interpret the combined table as a dense scaling curve.

Table~\ref{tab:priors} makes the target measure explicit.  The exact two-qubit
study uses three state maps with a uniform scalar prior on $[0.02,0.98]$.  The
original sampler used a 201-point quadrature, while the direct metric and
coverage reanalysis uses 2,001 points and checks stability at 4,001 points.  The
dense random mixed study draws a complex Gaussian
matrix for a Haar eigenbasis and independent unnormalized eigenvalues uniformly
on $[0,1]$, followed by trace normalization.  Factor random mixed studies use a
rank-16 complex Gaussian factor prior.  The remaining families use uniform
coordinate priors on the listed boxes.

\begin{table}[h]
  \caption{Declared parameter space priors.  Intervals are inclusive and all
  coordinates are independent before the physical state map is applied.}
  \label{tab:priors}
  \centering
  \scriptsize
  \begin{tabular}{L{0.19\textwidth}L{0.25\textwidth}L{0.43\textwidth}}
    \toprule
    Study & Parameter & Prior measure \\
    \midrule
    Exact two-qubit families & Scalar family coordinate $\theta$ & Uniform on $[0.02,0.98]$ \\
    Dense random mixed & Eigenbasis and unnormalized spectrum & Haar basis from complex Gaussian QR; independent uniform weights on $[0,1]$ \\
    Factor random mixed & Complex factor $F\in\mathbb C^{2^n\times16}$ & Independent standard complex Gaussian entries, then normalized through $\rho(F)$ \\
    Cat & Phase and dephasing & Uniform on $[0,2\pi]\times[0.02,0.20]$ \\
    Dicke & Noise weight & Uniform on $[0.01,0.10]$ \\
    Cluster & Noise weight & Uniform on $[0.01,0.13]$ \\
    Shallow circuit & $3n$ gate angles and noise & Uniform on $[0,2\pi]^{3n}\times[0.02,0.10]$ \\
    Thermal Ising & Coupling, field, inverse temperature & Uniform on $[0.5,1.5]\times[0.2,1.8]\times[0.2,1.2]$ \\
    \bottomrule
  \end{tabular}
\end{table}

The inference dimension varies sharply across families, and interpreting the
state distances in Table~\ref{tab:allsystems} depends on it. The structured
families are low dimensional: Dicke, and cluster states have a single noise weight
parameter, cat states have a phase and a dephasing strength, and thermal states
have a coupling, field, and inverse temperature. Their posteriors therefore lie
in a one- to three dimensional box around a known structured state, so a small
state distance, such as the ten-qubit Dicke and cluster errors near $0.06$ to
$0.08$, can reflect the restrictive prior before any coordinate is actually
learned from measurements. The initialization comparison in
Table~\ref{tab:starts} measures this effect directly. The random mixed and
shallow circuit families are high dimensional (a rank-$16$ complex factor and
$3n$ gate angles with noise); their larger errors do not isolate dimensionality
from prior, representation, and training effects. In the factor and thermofield
measurement generator, each listed shot is one random product Pauli setting and
one outcome, so an eight shot record contains eight outcomes in total, not eight
repetitions at every setting. Dense count based studies instead repeat each
selected setting. If $S$ is the shot label and $K$ the number of count settings,
their record totals are given below. The scalar two-qubit generator likewise
draws $S$ outcomes at each of its declared Pauli settings.

\begin{table}[htbp]
\centering\small
\caption{Measurement budgets are representation specific. A shot is one outcome from one state preparation. $S$ denotes the reported shot label; $K$ counts the settings at which outcomes are accumulated.}
\label{tab:shotsemantics}
\begin{tabular}{lll}
\toprule
Record design & Sampling rule & Total outcomes\\
\midrule
Factor and thermofield & One outcome at each random setting & $S$\\
Dense snapshot & One outcome at each random setting & $S$\\
Dense count and scalar Pauli & $S$ repetitions per count setting & $KS$\\
Dense mixed & Snapshot and count records combined & $(K+1)S$\\
\bottomrule
\end{tabular}
\end{table}

The historical result map links 184 runs to their measurement and policy
configurations. One retained policy checkpoint from each of 144 learned runs
matches its saved manifest; the remaining 40 runs are nonlearned comparisons.
Recorded source revisions expose batch normalized scalar, factor, dense
random mixed, and bounded policies, but a revision alone cannot prove the absence
of uncommitted execution changes. We therefore preserve their reconstruction
results without invoking the single chain theorem to certify their historical
kernels. The matched continuous study separately fixes individual state features
and tests their independence from other chains.

The original studies use scalar quadrature or an enumerated finite graph where
available, and the matched continuous comparison adds refined numerical
integration (Appendix~\ref{app:continuous}). The remaining broad studies draw a
finite prior candidate set, evaluate the Born likelihood, and resample according
to normalized weights. Factor family studies use 512 candidates per record at two
qubits and 256 at four through ten qubits; thermal factor studies use 256
candidates at two and four qubits and 64 in the compressed eight- and ten-qubit
studies; dense studies retain their original comparators. This finite
importance resampling distribution serves as a common comparator, not an
independently converged posterior. It enters evaluation and the \flowgrpo{}
energy reward where stated. \method{} does not use the simulated truth as a
training label.

The historical artifacts retain resampled states but not the raw unnormalized
importance weights.  Weight effective sample size, maximum normalized weight,
and duplicate fraction therefore cannot be reconstructed honestly.  We do not
use these finite candidate distributions to support posterior agreement claims.
They provide a shared descriptive target for within cell method comparisons
only.  Exact posterior language is reserved for the scalar quadrature and
finite transition studies.

\subsection{Enumerated transition study}

Appendix~\ref{app:finite} defines the separate 48-state graph, posterior,
proposal kernels, and repeated training comparison.

\subsection{Methods and information available to GRPO-QPS}

BuresTomFlow is the unchanged Bures-geometric transport that supplies
flow-initialized posterior samples.  Flow-GRPO fine-tunes the distribution of its stochastic
flow trajectories using the energy reward in Eq.~\ref{eq:energyreward}.
GRPO-QPS instead starts from the same BuresTomFlow samples, schedules a family
of reversible physical proposals, and applies a complete posterior correction
before collection.  The matched control shares this pipeline but replaces
the learned action probabilities with the predefined fixed mixture.

GRPO-QPS observes the current physical state summary, the measurement record
through its log Born likelihood, and the Born score when available.  It never
observes the simulated truth during training or collection.  This information
boundary prevents direct truth leakage.  The continuous policies are trained
separately for one seeded posterior set used during method development.

\subsection{Evaluation measures}

For posterior samples $\{\rho_s\}_{s=1}^{S}$ and simulated truth $\rho_\star$,
the mean state distance is
\begin{equation}
  \overline d_{\mathrm B}
  =\frac{1}{S}\sum_{s=1}^{S}\bures(\rho_s,\rho_\star).
  \label{eq:meanstate}
\end{equation}
This quantity measures reconstruction accuracy for one simulated instance.  It
does not determine whether the full posterior distribution is correct.

When a comparator $Q$ is available, we compare a generated distribution $P$
through the Bures energy statistic
\begin{equation}
  \mathcal E_{\mathrm B}(P,Q)
  =2\E\bures(X,Y)-\E\bures(X,X')-\E\bures(Y,Y'),
  \label{eq:energydistance}
\end{equation}
where $X,X'\sim P$ and $Y,Y'\sim Q$ are independent.  We have not established
that Bures distance has the negative type property needed to make this a
characteristic energy distance on the full density matrix space.  We therefore
interpret the finite-sample value as a descriptive agreement statistic.  A
near zero value alone does not prove equality of distributions.  The diversity
ratio is the mean within sample Bures distance under $P$ divided by the same
quantity under $Q$.  A value near one indicates matched spread, while a value
above one indicates an overly diffuse sample distribution.

The exact scalar coverage study uses equal tail intervals at nominal levels
$\gamma\in\{0.50,0.80,0.90,0.95\}$.  For each of three state families and three
shot counts, 400 truths are drawn independently from the declared prior and a
new measurement record is simulated for each truth.  Coverage is the number of
truth coordinates inside the exact posterior interval divided by 400.  Wilson
$95\%$ intervals quantify the binomial uncertainty for each cell.

The dense four-qubit diagnostic is different.  For each of 160 held-out truths
in a cell, the posterior center is the projected arithmetic mean of 32 samples.
The radius is the empirical 90th percentile of their Bures distances to that
center, multiplied by the predefined scale $1.2$.  Coverage is the fraction of
the 160 truths inside their corresponding balls, and coverage error is
$|\widehat C-0.9|$.  BuresTomFlow includes all 160 truths in every one of the 40
cells, so its observed coverage is $1.0$ and its error is exactly $0.1$ in every
cell.  The zero standard deviation therefore reflects repeated overcoverage,
not perfect calibration.  The 160 truths are nested diagnostics within a cell,
not 160 independent experiment replicates.

Family-specific observable error averages the error of predefined operators that
are not used as the primary state score: cat parity and coherence, Dicke
excitation number, cluster stabilizers, and thermal energy, magnetization, and
correlations.  These are analytic functions of the reconstructed state rather
than withheld measurements, and their Pauli components can overlap the
conditioning records, so we read them as observable reconstruction checks that
detect physical failures a single state distance can obscure, not as strictly
held-out predictions.  Only the held-out log likelihood, which is evaluated on a
separate measurement record not used for inference, is withheld in the
predictive sense.

\subsection{Pairing, replication, and uncertainty}

All within cell comparisons share the simulated truth, measurement record,
prior, likelihood, starting flow, and reported sample count. Posterior draws
within a cell describe a single conditional distribution and are not independent
replicates; the experimental cell is therefore the independent unit for all
reported means, standard deviations, and paired direction counts.

The standard deviations reported throughout this paper describe variation among
development-set cells; they are not confidence intervals for an untouched population.
Because the family and thermal studies contain different numbers of cells, a
pooled test would weight physical questions by campaign size rather than
scientific meaning. We therefore report family- and qubit-specific effects
instead. All experiments represent development-set evidence, and no result
is described as confirmation.

\subsection{Training and posterior collection}

Every continuous scheduler is a linear residual on top of fixed action logits.  A bounded
$1.5\tanh(\cdot)$ transform prevents the residual from producing unbounded
logits.  Table~\ref{tab:policies} reports the continuous inputs and actions.  The
scalar policy uses normalized coordinate, log likelihood, Born score, and a
constant.  Bounded coordinate policies replace the scalar coordinate with its
mean and standard deviation and use the score norm.  Factor and dense policies
use log likelihood, purity, spectral entropy, and a constant.  The final column
gives the action probabilities used to initialize and regularize the policy.

\begin{table}[h]
  \caption{Complete scheduler definitions.  A zero local scale denotes an exact
  prior refresh.  Dense action pairs give eigenbasis and spectrum scales.}
  \label{tab:policies}
  \centering
  \scriptsize
  \setlength{\tabcolsep}{3pt}
  \begin{tabular}{L{0.16\textwidth}L{0.22\textwidth}L{0.28\textwidth}L{0.17\textwidth}}
    \toprule
    Parameterization & State features & Actions & Initial probabilities \\
    \midrule
    Exact scalar & coordinate, log likelihood, Born score, constant & 0.008, 0.03, 0.12, 0 & 0.20, 0.30, 0.30, 0.20 \\
    Bounded coordinates & coordinate mean and standard deviation, log likelihood, score norm, constant & 0.01, 0.04, 0.14, 0 & 0.20, 0.30, 0.30, 0.20 \\
    Gaussian factor & log likelihood, purity, spectral entropy, constant & pCN $\beta$: 0.04, 0.12, 0.32, 1.0 & 0.20, 0.30, 0.30, 0.20 \\
    Dense random mixed & log likelihood, purity, spectral entropy, constant & basis: 0.04, 0.16; spectrum: 0.04, 0.18; joint: 0.12 and 0.10; refresh: 1 and 1 & 0.12, 0.18, 0.12, 0.18, 0.25, 0.15 \\
    \bottomrule
  \end{tabular}
\end{table}

The continuous policies use Adam with learning rate $0.01$ and two clipped
epochs per update.  They use probability ratio limits $[0.8,1.2]$, coefficient
$0.02$ toward the fixed action distribution, gradient norm limit $1.0$, and four
transitions per path.  The continuous scalar study uses 128 updates with
$G=16$, the dense four-qubit study uses 128 updates with $G=8$, and common
factor studies use 48 updates with $G=8$.  Table~\ref{tab:thermalconfig} gives
the exact thermal mapping.

\begin{table}[h]
  \caption{Thermal GRPO-QPS configurations.  Every run used an NVIDIA RTX A6000
    and the final saved checkpoint.  Seeds refer to independent development
  cells within each measurement condition.}
  \label{tab:thermalconfig}
  \centering
  \scriptsize
  \begin{tabular}{rL{0.28\textwidth}rrrr}
    \toprule
    Qubits & Representation & Updates & $G$ & Seeds per condition & Final checkpoint \\
    \midrule
    2, 4 & Thermofield factor interpolation & 40 & 16 & 2 & 40 \\
    5 & Dense full density matrix & 96 & 8 & 3 & 96 \\
    6 & Dense full density matrix & 64 & 8 & 2 & 64 \\
    8, 10 & Compressed thermofield factor interpolation & 32 & 8 & 2 & 32 \\
    \bottomrule
  \end{tabular}
\end{table}

Training and collection are separate phases.  Every policy is frozen before
the transitions summarized in Table~\ref{tab:collection}.  Collection retains
one terminal state from each parallel flow-initialized chain.  Consequently, the
saved acceptance rates describe transition behavior, but the design cannot
estimate within-chain autocorrelation, bulk or tail effective sample size, or
rank-normalized $\widehat R$.

\begin{table}[h]
  \caption{Collection budgets and acceptance rates.  Acceptance values are
  means across development runs, with the full run range in parentheses.}
  \label{tab:collection}
  \centering
  \scriptsize
  \setlength{\tabcolsep}{3pt}
  \begin{tabular}{L{0.17\textwidth}rrrrL{0.22\textwidth}}
    \toprule
    Study & Burn-in & Chains per run & Draws per chain & Runs & GRPO-QPS acceptance \\
    \midrule
    Exact two qubits & 128 & 6144 & 1 & 8 & $0.797$ $(0.791,0.806)$ \\
    Dense four qubits & 256 & 5120 & 1 & 40 & $0.499$ $(0.364,0.741)$ \\
    Common factor families & 96 & 96 & 1 & 50 & $0.750$ $(0.445,0.989)$ \\
    Dense thermal, two and four qubits & 96 & 96 & 1 & 8 & $0.686$ $(0.524,0.849)$ \\
    Thermal, five and six qubits & 96 & 192 to 576 & 1 & 20 & $0.567$ $(0.367,0.762)$ \\
    Compressed thermal, eight- and ten-qubit studies & 64 & 48 & 1 & 8 & $0.655$ $(0.565,0.743)$ \\
    \bottomrule
  \end{tabular}
\end{table}

\section{Extended quantitative results}
\label{app:allresults}

\subsection{Exact two-qubit posterior agreement}

Table~\ref{tab:q2exact} separates accuracy relative to the simulated truth from
agreement with exact quadrature. \base{} exhibited a large energy statistic and
was nearly six times more diffuse than the comparator. \flowgrpo{} reduced both
errors but retained a diversity ratio above three. \method{} reduced the energy
statistic by more than two orders of magnitude and restored the diversity ratio
to approximately one, recovering the full spread of the exact posterior.

\begin{table}[h]
  \caption{Exact two-qubit results over eight independently seeded posterior sets, each
    containing 96 evaluation queries.  Values are mean $\pm$ standard deviation
    across set means.  Lower is better except that the ideal diversity ratio is
  one.}
  \label{tab:q2exact}
  \centering
  \scriptsize
  \begin{tabular}{lrrrr}
    \toprule
    Method & Truth distance $\downarrow$ & Energy statistic $\downarrow$ & Diversity ratio & Observable error $\downarrow$ \\
    \midrule
    BuresTomFlow & $0.3480\pm0.0192$ & $0.3570\pm0.0234$ & $5.908\pm0.301$ & $0.2335\pm0.0116$ \\
    Flow-GRPO & $0.1653\pm0.0063$ & $0.2533\pm0.0117$ & $3.186\pm0.172$ & $0.0985\pm0.0055$ \\
    GRPO-QPS & $0.0663\pm0.0055$ & $0.00264\pm0.00041$ & $1.009\pm0.021$ & $0.0435\pm0.0041$ \\
    \bottomrule
  \end{tabular}
\end{table}

\begin{table}[h]
  \caption{Direct one-dimensional agreement metrics for the exact scalar
    posterior over 768 records.  Values are mean $\pm$ standard deviation across
    eight seeded-set means.  Quantile error averages absolute error at posterior
    probabilities $0.05$, $0.50$, and $0.95$.  The calibrated TV column subtracts
    the mean value 0.77554 obtained from independent exact samples with the same
  64-draw budget on the 2,001-point grid.  Lower is better.}
  \label{tab:exactmetric}
  \centering
  \scriptsize
  \setlength{\tabcolsep}{3pt}
  \begin{tabular}{lrrrr}
    \toprule
    Method & Wasserstein-1 & KS & Calibrated TV excess & Quantile error \\
    \midrule
    BuresTomFlow & $0.19001\pm0.01833$ & $0.65899\pm0.03678$ & $+0.17454$ & $0.18547\pm0.01755$ \\
    Flow-GRPO & $0.06358\pm0.00539$ & $0.46544\pm0.02111$ & $+0.09840$ & $0.07093\pm0.00518$ \\
    GRPO-QPS & $0.00991\pm0.00068$ & $0.10362\pm0.00330$ & $+0.00146$ & $0.01081\pm0.00050$ \\
    \bottomrule
  \end{tabular}
\end{table}

The raw histogram total variation is approximately 0.78 even for exact samples,
because only 64 draws occupy a 2,001-point grid; it is therefore not used without
calibration. The Wasserstein-1, KS, and quantile errors establish close
terminal sample agreement on this declared family, but they do not establish
convergence from arbitrary initializations. The supported conclusion is that
\method{} repairs the flow on this small target and recovers the exact posterior
spread, without establishing convergence from every initialization or chain
length.

\section{Reference stability and complete physical results}

\subsection{Exact scalar reference stability}

The corrected scalar reference uses trapezoidal endpoint weights on 2,001 grid
points.  Refining to 4,001 points changes the worst posterior mean by
$2.88\times10^{-6}$, and aligned grid total variation is at most
$1.85\times10^{-16}$, so the coverage results in Table~\ref{tab:scalarcoverage}
are stable under grid refinement.  At the nominal $90\%$ level the same
four-shot Bell cell that undercovers at $80\%$ recovers to $87.00\%$, which shows why aggregate reference coverage should be accompanied by
condition-specific counts.

\subsection{Dense four-qubit benchmark}

All 40 paired cells share the same truth, record, starting flow, and sample
count across methods.  \method{} improves state distance in every pairing
relative to \base{}.  Its simultaneous observable reconstruction and coverage changes
in Table~\ref{tab:q4} show that the correction does more than optimize the
reported state distance.  The matched control reaches the same three values,
so this benchmark locates the gain in the target-preserving sampling pipeline
rather than in the restricted scale scheduler used in this experiment.  Because these paired cells were available during method development, we report them as
descriptive development-set evidence rather than as confirmatory intervals.

\subsection{Thermal results across sizes}

Thermal systems show a large and persistent posterior correction gain relative
to both flow methods.  Table~\ref{tab:thermalmeans} reports the mean
state distance at each size.

\begin{table}[h]
  \caption{Thermal state distance.  Values are means over development-set
  cells, and lower is better.}
  \label{tab:thermalmeans}
  \centering
  \small
  \begin{tabular}{rrrrr}
    \toprule
    Qubits & BuresTomFlow & Flow-GRPO & Matched control & GRPO-QPS \\
    \midrule
    2 & 0.8058 & 0.8058 & 0.1987 & 0.1980 \\
    4 & 0.9419 & 0.9430 & 0.2405 & 0.2351 \\
    5 & 0.4047 & 0.4063 & 0.2023 & 0.2061 \\
    6 & 0.6643 & 0.6670 & 0.2205 & 0.2132 \\
    8 & 1.1022 & 0.9965 & 0.2609 & 0.2461 \\
    10 & 1.1405 & 1.0017 & 0.2618 & 0.2530 \\
    \bottomrule
  \end{tabular}
\end{table}

\method{} lowers the mean at every size and remains far below both flow methods.
The matched control is close to \method{} at every size, so the thermal gain
also belongs to the target-preserving proposals; \method{} is lower at five of
six sizes, with the control marginally lower at five qubits.  The eight- and
ten-qubit cells use compressed thermofield factors and do not define a
dense state scaling curve.

\begin{figure}[tbp]
  \centering
  \begin{subfigure}[b]{0.49\textwidth}
    \centering\includegraphics[width=\textwidth]{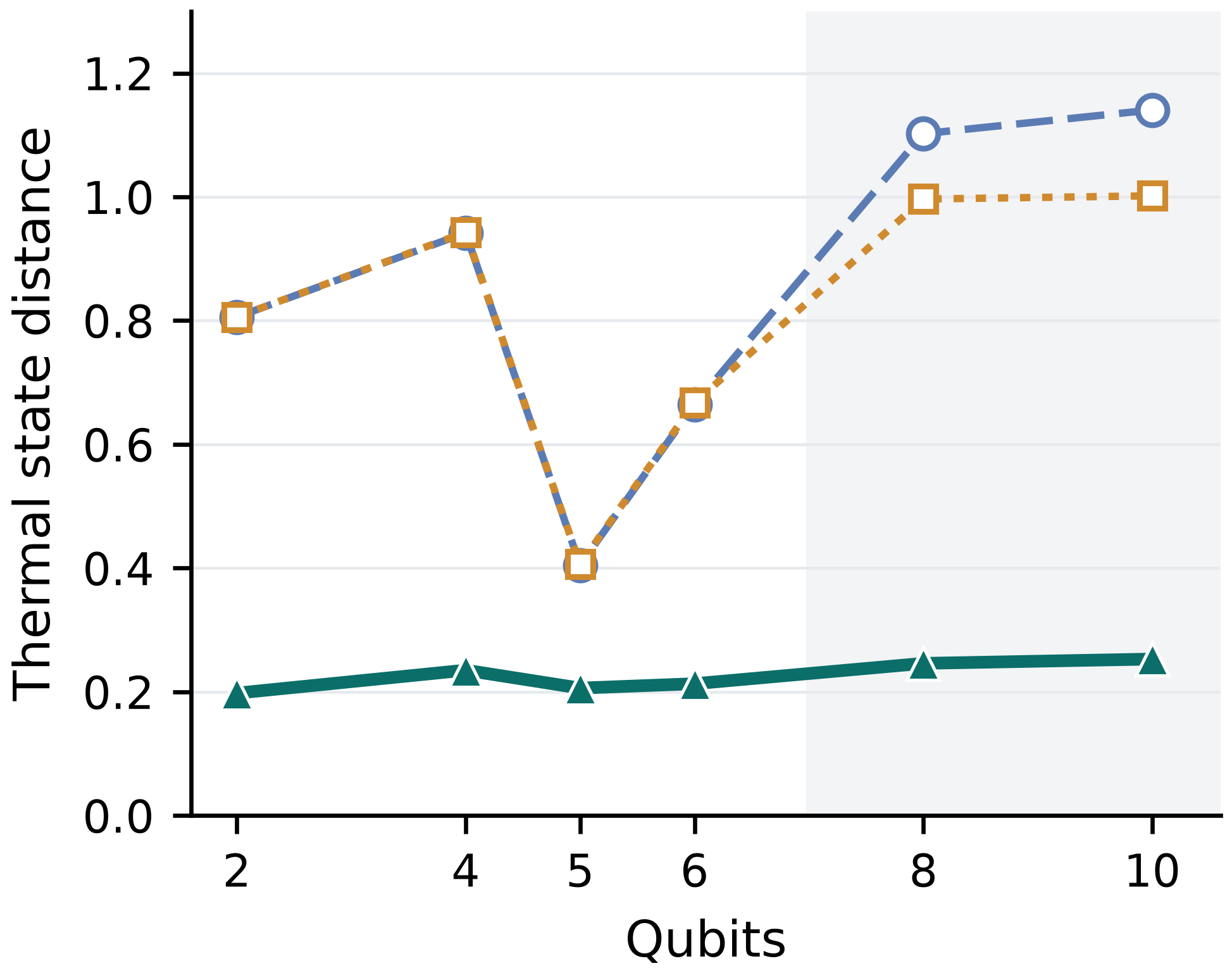}
    \caption{Mean state distance versus size}
    \label{fig:thermal-lines}
  \end{subfigure}\hfill
  \begin{subfigure}[b]{0.49\textwidth}
    \centering\includegraphics[width=\textwidth]{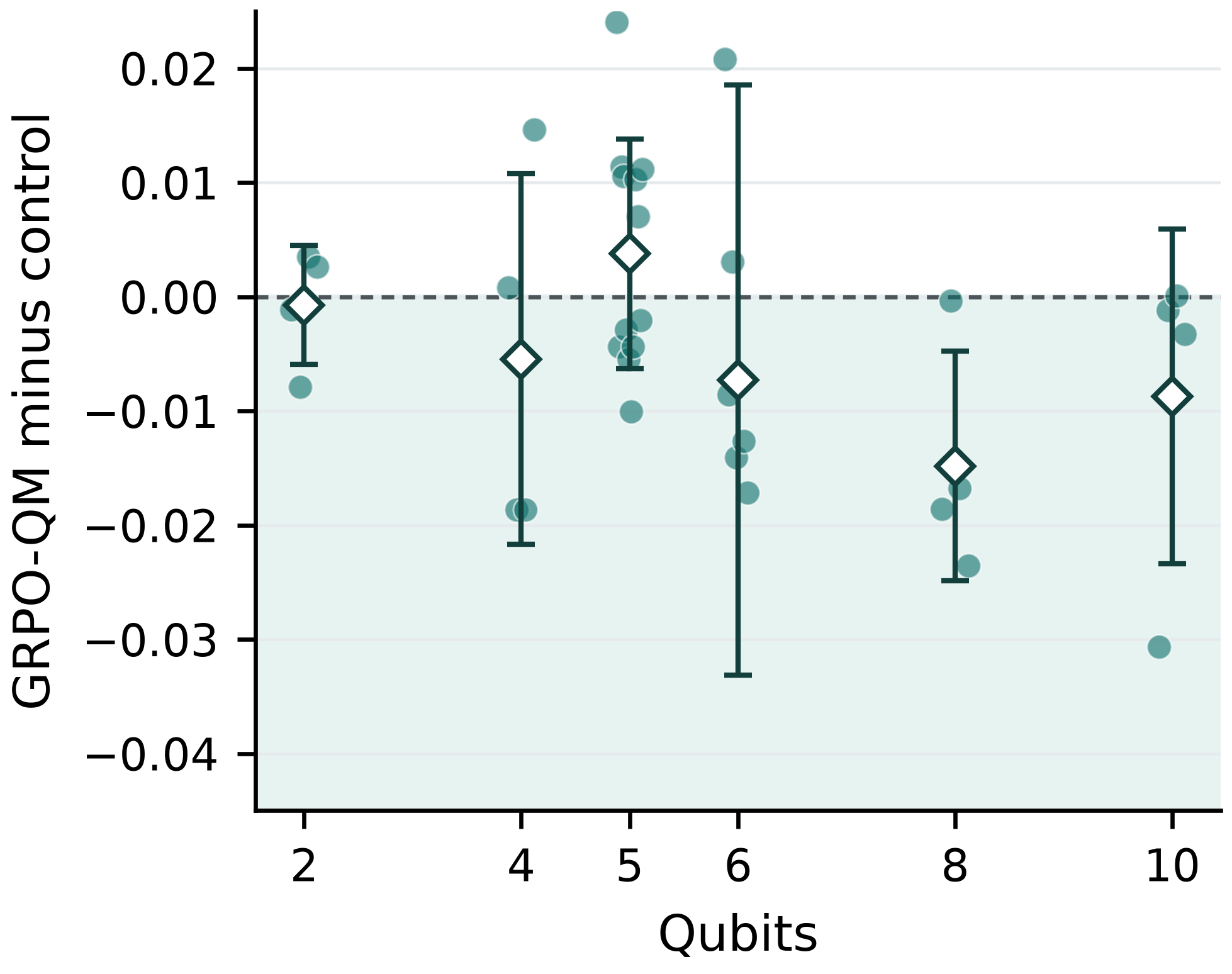}
    \caption{Paired per cell differences}
    \label{fig:thermal-paired}
  \end{subfigure}
  \caption{Thermal state distance.  In \subref{fig:thermal-lines}, \base{} is the
    blue dashed line with hollow circles, \flowgrpo{} the orange dotted line with
    hollow squares, and \method{} the solid teal line with filled triangles; the
    shaded region marks the compressed eight- and ten-qubit factors.
    \subref{fig:thermal-paired} shows the paired \method{} minus matched control
    difference in every thermal cell, which isolates the learned schedule; negative
    values favor \method{}, open diamonds are per size means, and bars are standard
  deviations across development-set cells.}
  \label{fig:thermal}
\end{figure}

\subsection{Complete system matrix}

Table~\ref{tab:allsystems} reports every compatible mean used in
Figure~\ref{fig:landscape}.  Nonthermal rows use the continuous normalized
purification factor representation.  Thermal rows use the specified dense or
thermofield representation.  Values from different representations answer
different physical questions and should not be pooled as one scaling curve.

\begin{table}[h]
  \caption{Complete common system matrix.  Values are mean state
  distances, and lower is better.}
  \label{tab:allsystems}
  \centering
  \scriptsize
  \begin{tabular}{llrrrr}
    \toprule
    Family & Qubits & BuresTomFlow & Flow-GRPO & Matched control & GRPO-QPS \\
    \midrule
    Random mixed & 2 & 0.6700 & 0.6703 & 0.6024 & 0.6074 \\
    Random mixed & 4 & 0.7003 & 0.7004 & 0.7042 & 0.7019 \\
    Random mixed & 5 & 0.9292 & 0.9291 & 0.9276 & 0.9288 \\
    Random mixed & 6 & 1.0817 & 1.0819 & 1.0804 & 1.0811 \\
    Random mixed & 8 & 1.2568 & 1.2570 & 1.2560 & 1.2556 \\
    Random mixed & 10 & 1.3374 & 1.3376 & 1.3373 & 1.3372 \\
    \addlinespace
    Thermal Ising & 2 & 0.8058 & 0.8058 & 0.1987 & 0.1980 \\
    Thermal Ising & 4 & 0.9419 & 0.9430 & 0.2405 & 0.2351 \\
    Thermal Ising & 5 & 0.4047 & 0.4063 & 0.2023 & 0.2061 \\
    Thermal Ising & 6 & 0.6643 & 0.6670 & 0.2205 & 0.2132 \\
    Thermal Ising & 8 & 1.1022 & 0.9965 & 0.2609 & 0.2461 \\
    Thermal Ising & 10 & 1.1405 & 1.0017 & 0.2618 & 0.2530 \\
    \addlinespace
    Cat & 2 & 0.9261 & 0.9264 & 0.2948 & 0.3283 \\
    Cat & 4 & 1.1874 & 1.1874 & 0.3612 & 0.3869 \\
    Cat & 5 & 1.2558 & 1.2567 & 0.5235 & 0.5414 \\
    Cat & 6 & 1.3049 & 1.3039 & 0.4896 & 0.5205 \\
    Cat & 8 & 1.3577 & 1.3558 & 0.5243 & 0.5935 \\
    Cat & 10 & 1.3879 & 1.3875 & 0.5546 & 0.6080 \\
    \addlinespace
    Dicke & 2 & 0.9235 & 0.9237 & 0.0513 & 0.0535 \\
    Dicke & 4 & 1.1323 & 1.1326 & 0.0633 & 0.0677 \\
    Dicke & 5 & 1.1974 & 1.1989 & 0.0635 & 0.0669 \\
    Dicke & 6 & 1.2534 & 1.2526 & 0.0611 & 0.0675 \\
    Dicke & 8 & 1.3252 & 1.3092 & 0.0640 & 0.0734 \\
    Dicke & 10 & 1.3725 & 1.3641 & 0.0614 & 0.0703 \\
    \addlinespace
    Cluster & 2 & 0.8700 & 0.8696 & 0.0896 & 0.0920 \\
    Cluster & 4 & 1.1305 & 1.1297 & 0.0799 & 0.0921 \\
    Cluster & 5 & 1.2105 & 1.2105 & 0.0775 & 0.0841 \\
    Cluster & 6 & 1.2702 & 1.2692 & 0.0779 & 0.0863 \\
    Cluster & 8 & 1.3407 & 1.3406 & 0.0774 & 0.0860 \\
    Cluster & 10 & 1.3769 & 1.3768 & 0.0813 & 0.0844 \\
    \addlinespace
    Shallow circuit & 2 & 0.9236 & 0.9234 & 0.4519 & 0.4832 \\
    Shallow circuit & 4 & 1.1541 & 1.1543 & 0.8668 & 0.9583 \\
    Shallow circuit & 5 & 1.2289 & 1.2287 & 1.0432 & 1.0922 \\
    Shallow circuit & 6 & 1.2811 & 1.2809 & 1.1652 & 1.2029 \\
    Shallow circuit & 8 & 1.3466 & 1.3465 & 1.2723 & 1.3125 \\
    Shallow circuit & 10 & 1.3807 & 1.3804 & 1.3474 & 1.3525 \\
    \bottomrule
  \end{tabular}
\end{table}

The complete matrix confirms that the \method{} pipeline strongly improves
thermal, cat, Dicke, and cluster states relative to both flow methods. The
\fixed{} column attributes most of this gain to the shared physical proposals and
posterior correction rather than to learning: it stays close to \method{} in
every cell and is in fact marginally lower on most cat, Dicke, cluster, and
shallow circuit cells. Random mixed factors change only slightly, and the
shallow circuit gain narrows with system size. This family dependence prevents a
universal scaling claim but clearly identifies where target-preserving physical
exploration is most useful.

\section{Mechanistic interpretation across physical families}

\subsection{Random mixed states separate the two representation regimes}

The dense four-qubit benchmark and the rank-16 factor study answer different
questions. The dense benchmark reveals a clear correction benefit under the full
density matrix architecture, while the scalable factor family changes little
beyond two qubits, where every method already achieves similar state distances.
Because the prior, rank, and representation differ between these two settings,
the contrast does not isolate optimization or support coverage as the sole cause.

\subsection{Cat, Dicke, and cluster states identify correction of flow coverage gaps}

For cat states, the \method{} pipeline lowers state distance by approximately
$56\%$ to $67\%$ relative to \base{} across the six sizes, and the same pattern
is even stronger for Dicke, and cluster states, where the flow error drops by
roughly $89\%$ to $95\%$. These results demonstrate that the complete pipeline
reaches states the initial flow underrepresents. The initialization comparison also
reveals that known family information explains much of the Dicke and cluster gap.
The data are consistent with the global refresh action supplying part of this
movement, but no action-level intervention has been conducted to test that
explanation.

\subsection{Thermal states motivate a scale selection hypothesis}

Thermal posteriors may contain temperature, field, and coupling directions with
different local widths. Because \method{} can adjust its proposal mixture using the current coordinate
and Born score, state-dependent proposal scales could be useful here. However,
these experiments did not measure local curvature and per-action acceptance
rates together, nor did they perform a controlled intervention that removes or
adds scale information. The observation therefore motivates a future test on new
thermal parameters rather than establishing the mechanism by itself.

\subsection{Shallow circuits reveal limits of the current representation}

The \method{} pipeline improves on \base{} by approximately $48\%$ at two qubits
but only about $2\%$ at ten qubits. This decreasing gain marks a limitation of
the tested representation and moves rather than identifying entanglement as the
cause. It does not imply that shallow circuit posteriors are intrinsically
inaccessible to a different representation or a nonlocal proposal.

\section{Validity boundaries and reproducibility}
\label{app:reproducibility}

\subsection{Physicality and numerical checks}

All reported factor states were reloaded and evaluated independently of their
training summaries. Across the four- to six-qubit factor studies, every output
has trace error below $1.12\times10^{-15}$ after normalization, and the maximum
ten-qubit thermal trace error is $4.45\times10^{-16}$. Positivity is guaranteed
by the factor map $FF^\dagger/\tr(FF^\dagger)$.

The ten-qubit implementation casts each complex channel to the model input type
before concatenation and releases temporary accelerator memory before posterior
collection. This ordering changes memory usage but not the scientific seeds,
measurement records, group sizes, flow steps, or proposal moves, and every
corrected ten-qubit calculation passed the same verification rules as the smaller
systems.

These checks establish that reported outputs are finite, positive, and trace one
under the declared representation. They do not establish posterior accuracy, which
is evaluated separately through exact distributional agreement at two qubits,
paired reconstruction at four qubits, and descriptive family-specific measures at
larger sizes.

\subsection{Representation dependent scope}

Dense global Bures distance is directly available through six qubits in the
predefined dense studies. The eight- and ten-qubit common family results use
rank-limited factors, and the corresponding thermal results use compressed
thermofield factors. A smaller distance within these representations does not
prove convergence to an unrestricted dense posterior; it establishes a comparison
among methods acting on the same specified physical description.

The scalable random mixed family uses a rank-16 Wishart prior, whereas the dense
four-qubit benchmark uses a full-rank prior. These two studies should not be
combined into a single random mixed scaling curve; their joint value is
diagnostic, showing how the correction behaves under two different posterior
geometries.

\subsection{Statistical scope and confirmation boundary}

All experiments were available during method development. They support mechanistic
comparisons and identify family-specific failure modes, but they do not estimate
performance on an untouched population. Coverage estimates are especially coarse
when only four or eight independent cells are available, so the paper reports
coverage error as a diagnostic and does not claim nominal calibration for the
large system studies.

The thermal effect is repeated across sizes, yet the same records used during method development also
informed method interpretation. A confirmatory test would freeze one policy, one
proposal family, and one evaluation protocol before exposing new thermal
parameters and state sets. The present result defines that testable claim but
does not replace it.

\subsection{Artifact integrity}

Artifact coverage differs across studies. The original reconstruction runs
saved terminal states rather than complete chains, and the first fixed-scale
correction discarded its returned policies. The new reproducibility replay
retains those policies and checkpoints in addition to the samples. The matched
continuous comparison retains complete chains, and the corrected optimizer study
retains final transition matrices, intermediate parameters, sampled training
paths, and optimizer states. Hash manifests bind the numerical artifacts to
their source snapshots. Public release remains a submission task; local
retention is not a completed public release.

\section{Supporting reconstruction and reference checks}
\label{app:additionalplots}

Figure~\ref{fig:familydistance} shows absolute mean state distances for the
family comparisons summarized in the main text.

\begin{figure}[h]
  \centering
  \begin{subfigure}[b]{0.32\textwidth}
    \centering\includegraphics[width=\textwidth]{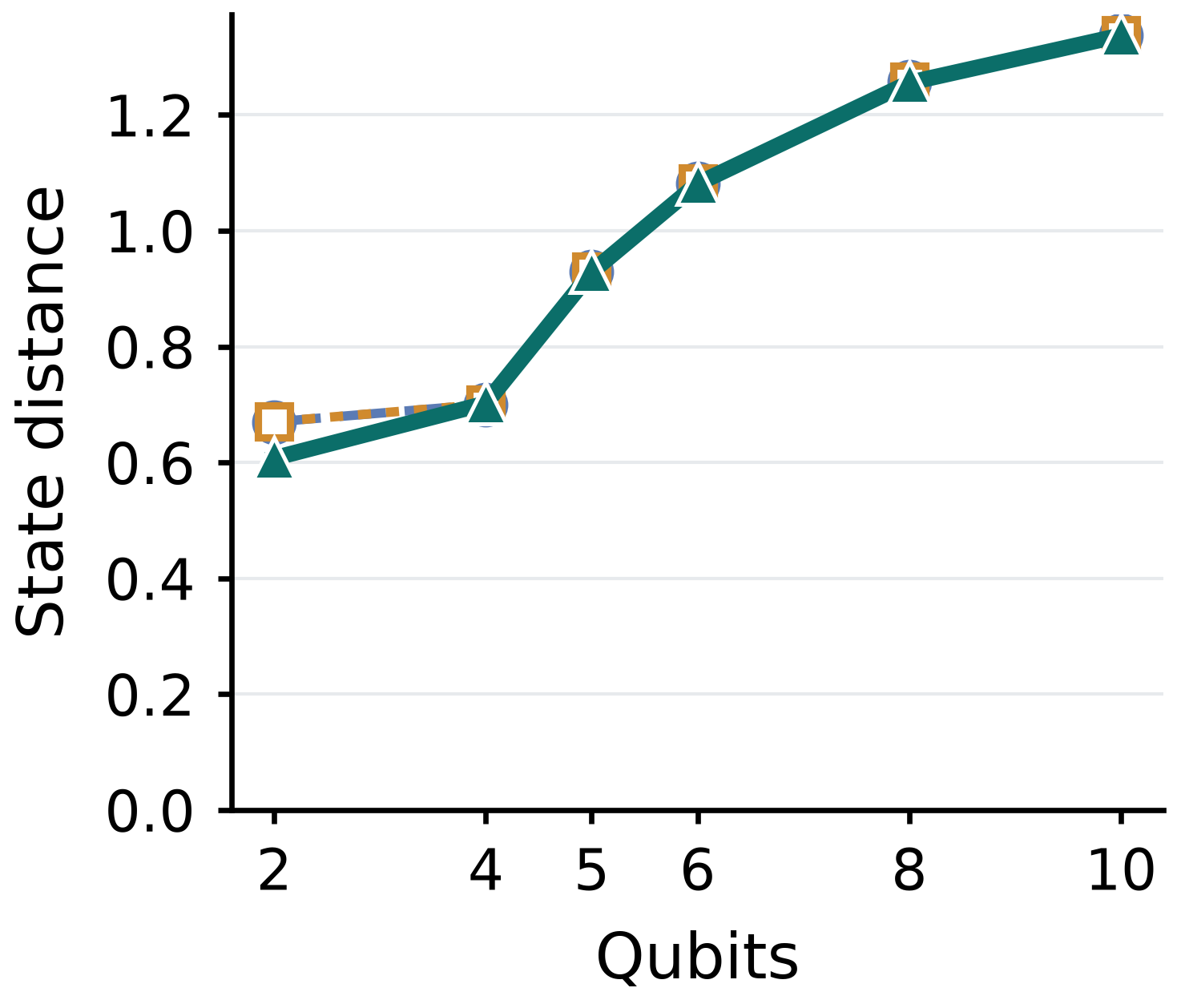}
    \caption{Random mixed}
  \end{subfigure}\hfill
  \begin{subfigure}[b]{0.32\textwidth}
    \centering\includegraphics[width=\textwidth]{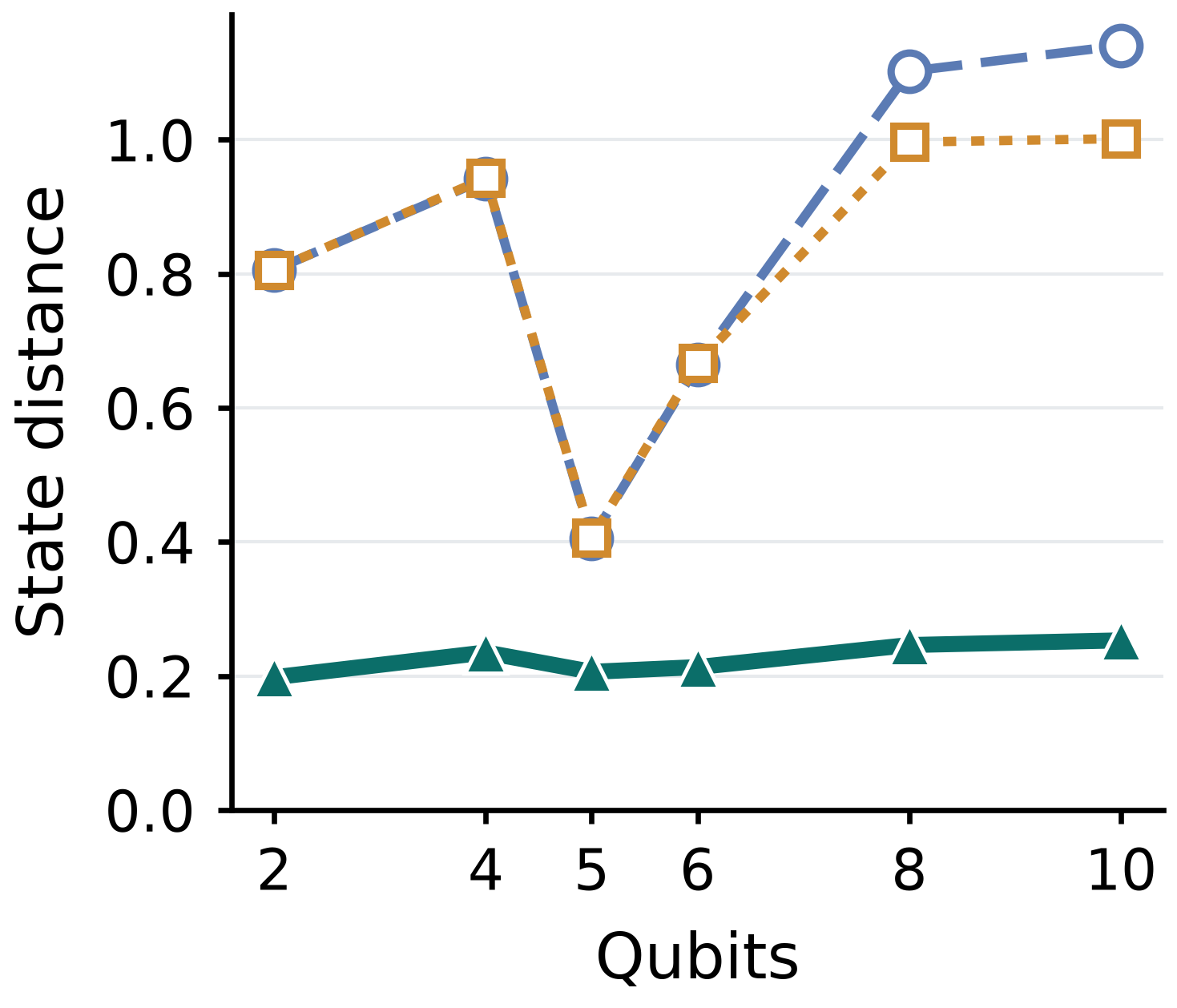}
    \caption{Thermal Ising}
  \end{subfigure}\hfill
  \begin{subfigure}[b]{0.32\textwidth}
    \centering\includegraphics[width=\textwidth]{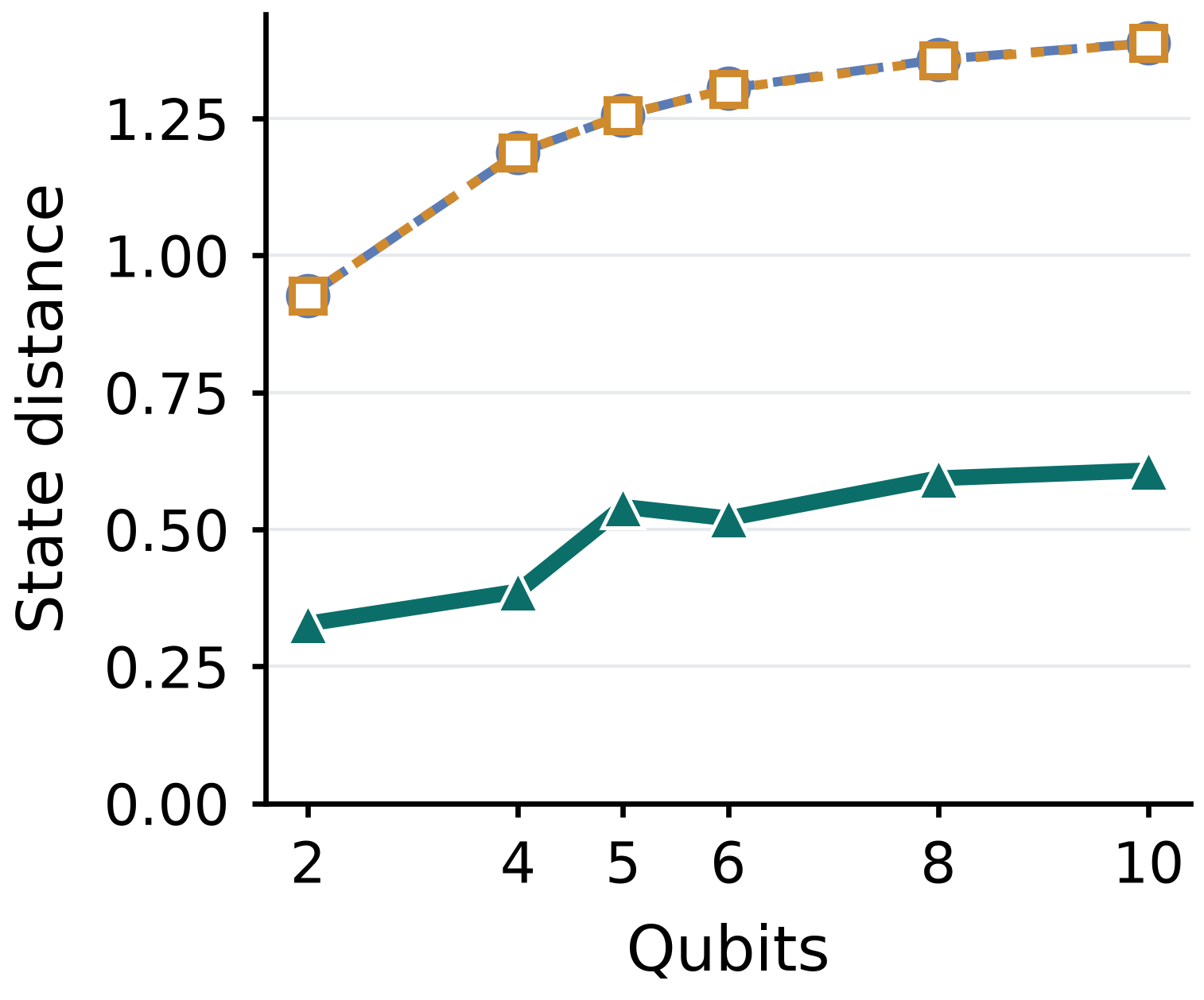}
    \caption{Cat}
  \end{subfigure}

  \vspace{3pt}
  \begin{subfigure}[b]{0.32\textwidth}
    \centering\includegraphics[width=\textwidth]{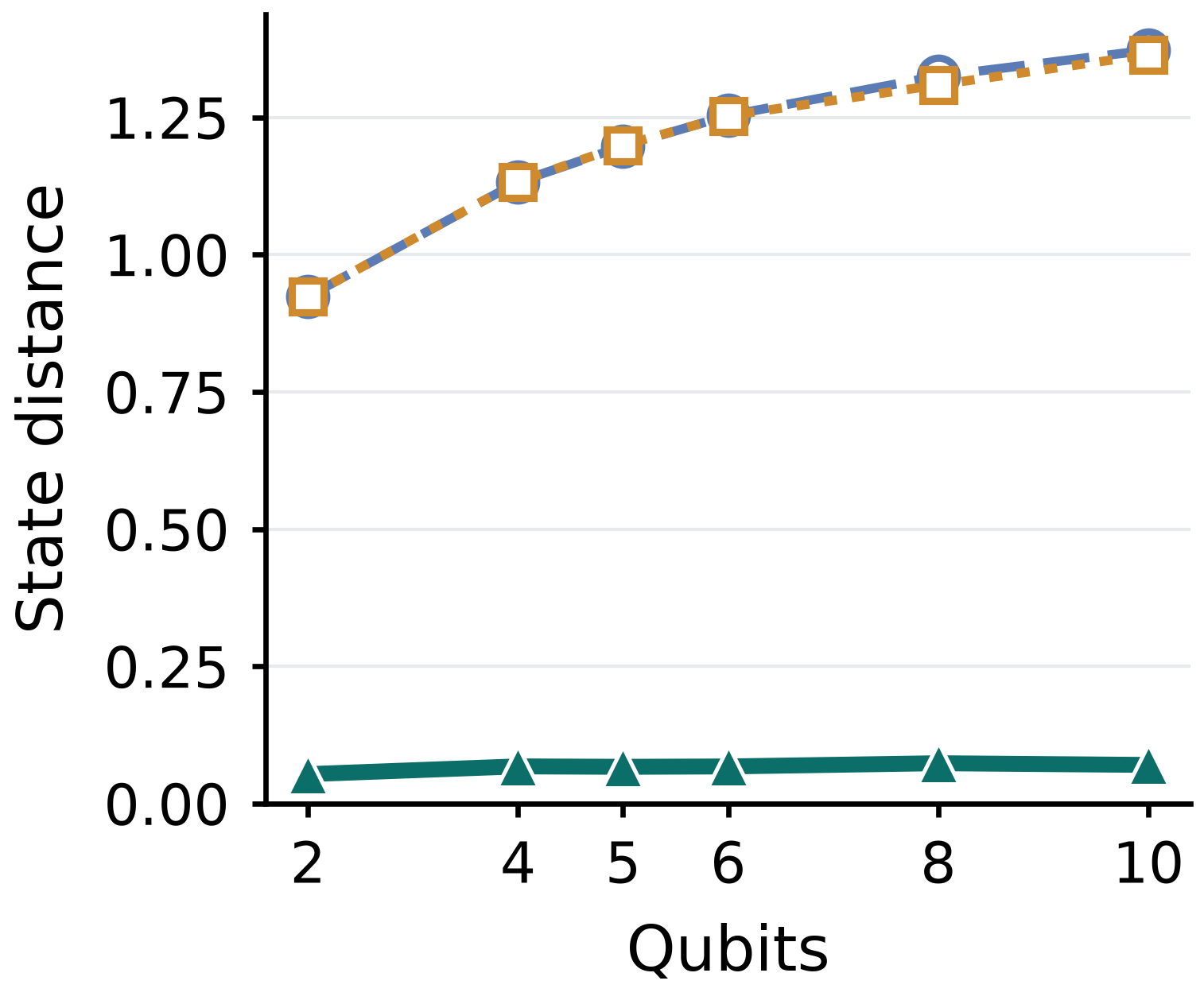}
    \caption{Dicke}
  \end{subfigure}\hfill
  \begin{subfigure}[b]{0.32\textwidth}
    \centering\includegraphics[width=\textwidth]{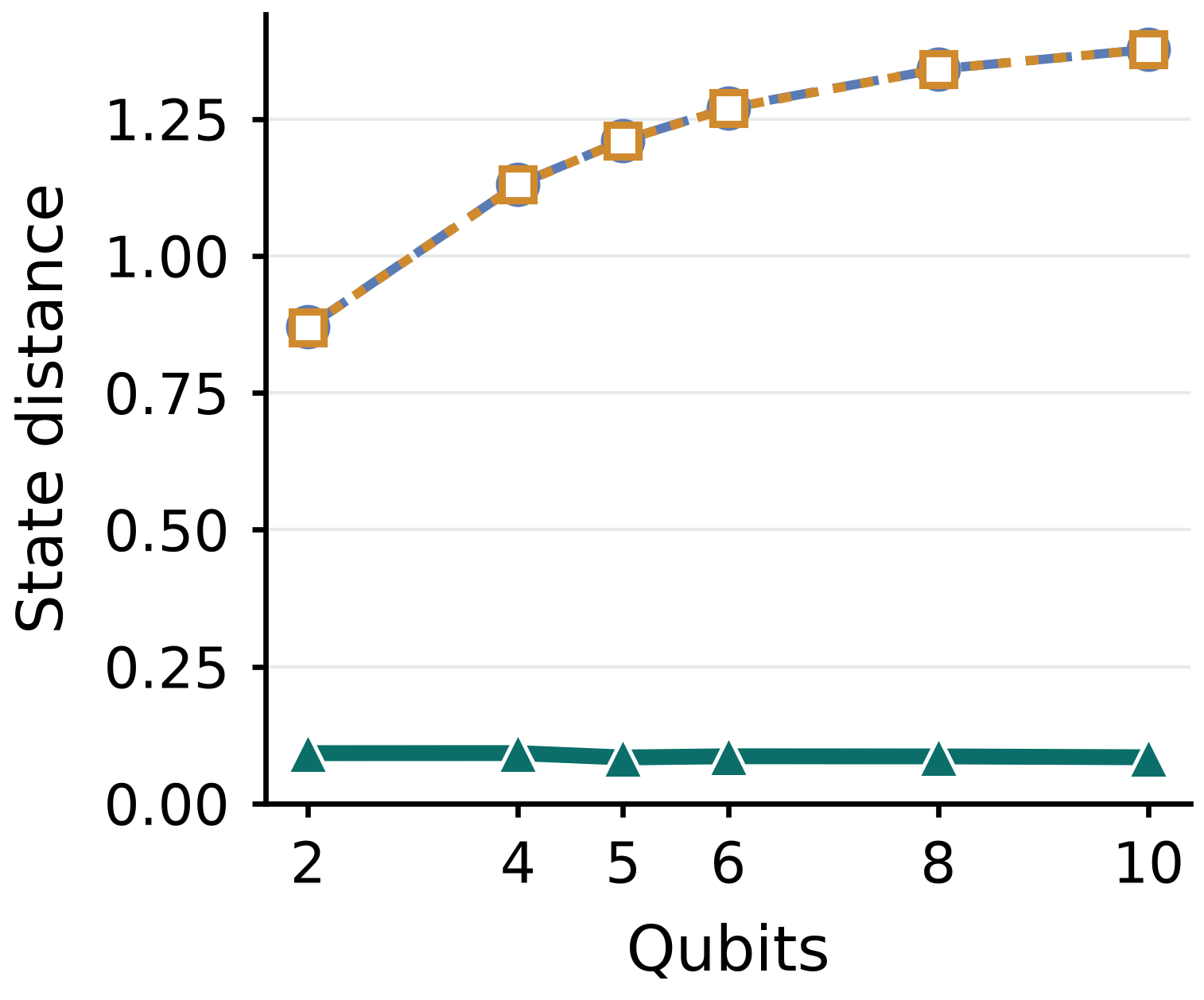}
    \caption{Cluster}
  \end{subfigure}\hfill
  \begin{subfigure}[b]{0.32\textwidth}
    \centering\includegraphics[width=\textwidth]{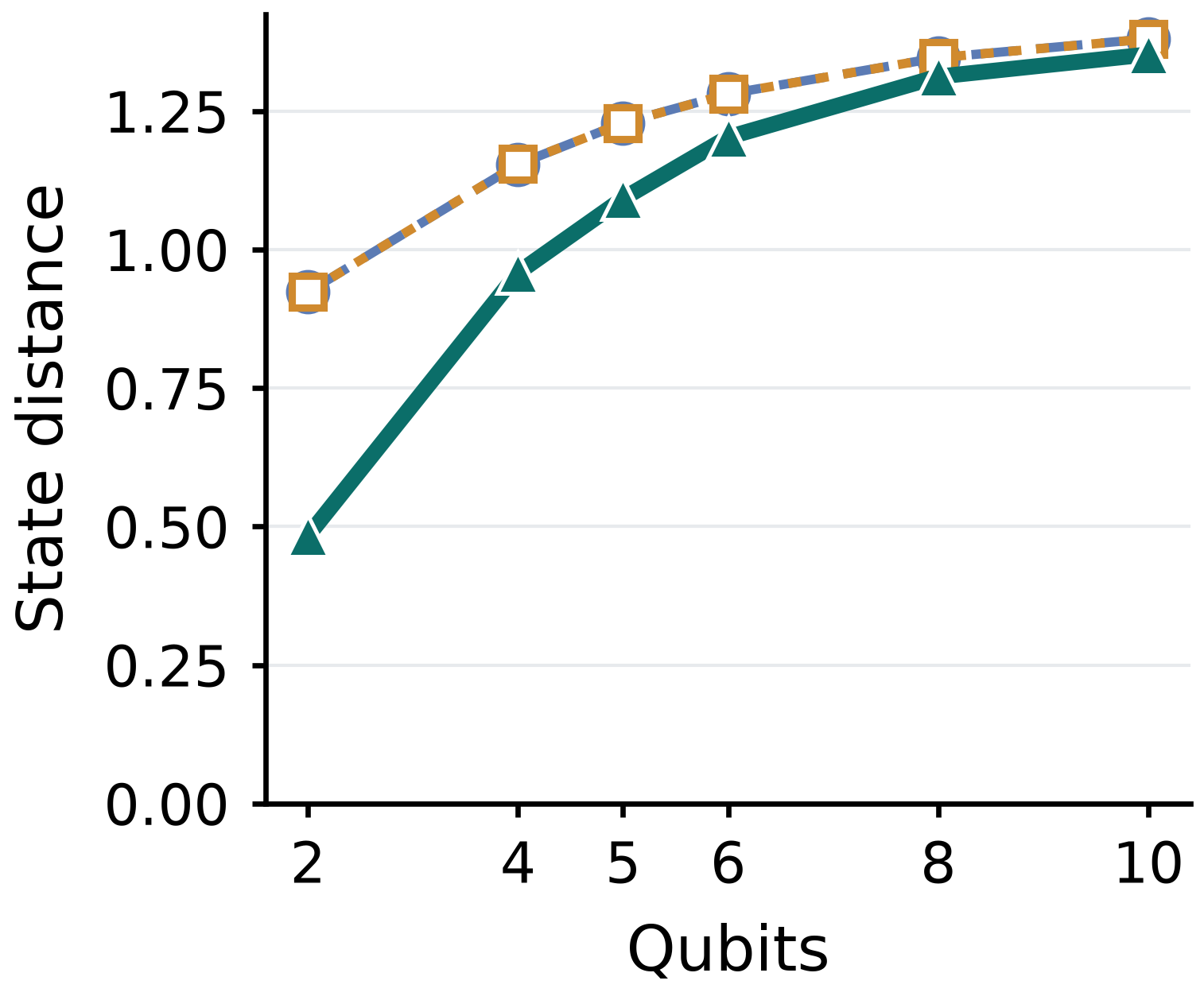}
    \caption{Shallow circuit}
  \end{subfigure}
  \caption{Absolute mean state distance by physical family and system size, each
    cell averaging the predefined 8-shot and 32-shot studies (lower is better).
    \base{} is drawn as blue dashed lines with hollow circles and \flowgrpo{} as
    orange dotted lines with hollow squares, so the two nearly coincident flow
    curves stay separable where they overlap; \method{} is the solid teal line
    with filled triangles.  \method{} separates most on Dicke, cluster, cat, and
    thermal states, overlaps the flow methods on random mixed factors, and
  converges toward them for large shallow circuits.}
  \label{fig:familydistance}
\end{figure}

\subsection{Coverage of numerical reference intervals}

The exact scalar family is also the one setting where credible intervals can be
checked against a known posterior rather than a finite comparator.  For each of
nine family by shot cells we draw 400 truths, simulate a fresh record for each,
form equal tail intervals from the numerical reference, not from GRPO-QPS samples, and
count how often the truth lands inside.  Table~\ref{tab:scalarcoverage} reports
the outcome: average coverage is within about one point of nominal at every
level.  The calibration is not uniform across subgroups.  The clearest exception
is the four-shot Bell cell, which covers $75.25\%$ at the nominal $80\%$ level,
where the record is least informative.  We therefore read this as
a check of reference intervals on the declared family, with a small development sample
subgroup warning, and we do not extend the claim to the larger systems, where
only a finite importance comparator is available.

\begin{table}[h]
  \caption{Coverage of numerical reference equal tail credible intervals.  The mean and range summarize nine cells, each with 400
  independently simulated truths and records.}
  \label{tab:scalarcoverage}
  \centering
  \small
  \begin{tabular}{ccc}
    \toprule
    Nominal level & Mean observed coverage & Cell range \\
    \midrule
    $50\%$ & $51.36\%$ & $48.50$ to $54.75\%$ \\
    $80\%$ & $80.33\%$ & $75.25$ to $83.50\%$ \\
    $90\%$ & $89.94\%$ & $87.00$ to $93.75\%$ \\
    $95\%$ & $94.50\%$ & $93.25$ to $96.00\%$ \\
    \bottomrule
  \end{tabular}
\end{table}

\subsection{Repetition on new seeds}
\label{sec:confirmation}

A further 145 cells repeat the training recipe with five new seeds.
Each posterior set retrains its policy, so this tests repetition of the recipe rather
than transfer of one frozen policy. Table~\ref{tab:confirmation} retains the
paired means. Earlier intervals used mean $\pm1.96$ standard errors, not
bootstrap resampling, and pooled cells reuse seed blocks across sizes.
We therefore report these results descriptively.

\begin{table}[tbp]
  \caption{Repetition on five new seeds. Policies were retrained for each posterior set.
Values are mean paired state distance differences across 25 cells per factor
family and 20 thermal cells. Negative values favor GRPO-QPS.}
  \label{tab:confirmation}
  \centering
  \small
  \setlength{\tabcolsep}{5pt}
  \begin{tabular}{lrll}
    \toprule
    Family & Cells & \method{} $-$ \base{} & \method{} $-$ control \\
    \midrule
    Random mixed & 25 & $-0.0003$ & $+0.0001$ \\
    Cat & 25 & $-0.7166$ & $+0.0122$ \\
    Dicke & 25 & $-1.1930$ & $-0.0009$ \\
    Cluster & 25 & $-1.1880$ & $-0.0024$ \\
    Shallow circuit & 25 & $-0.0416$ & $+0.0215$ \\
    Thermal Ising & 20 & $-0.7451$ & $-0.0006$ \\
    \bottomrule
  \end{tabular}
\end{table}

\section{Continuous physical families and reproducible comparison}
\label{app:continuous}

\subsection{Matched one-step learning objective}

The continuous comparison normalizes physical coordinates to \(u\in[0,1]^d\), and
its four actions use reflected Gaussian scales \(0.01,0.04,0.14\) or a uniform
prior refresh. The policy is a bounded linear residual on fixed logits:
\begin{equation}
 q_\phi(a\mid u,y)=
 \operatorname{softmax}_a\!\left(\log w_a+
 1.5\tanh[f_y(u)^\top W]_{a}\right).
 \label{eq:policy}
\end{equation}
Its features are the coordinates, their sine and cosine, the per-outcome log
likelihood, the per-outcome Born score, and a constant. The score is the
gradient of the log likelihood with respect to the coordinates. Fixed
hyperbolic tangent transforms bound the likelihood and score features. None of
these features requires the hidden truth or complete posterior enumeration.

For acceptance indicator \(z\), the one-step reward is
\begin{equation}
 R=z\{\delta(u,u')^2+0.02\},
 \qquad
 J(\phi)=\mathbb E_{u,y}
 \sum_a q_\phi(a\mid u,y)
 \mathbb E_{u'\sim K_a}
 [\alpha_\phi\{\delta(u,u')^2+0.02\}],
 \label{eq:objective}
\end{equation}
where \(\delta^2\) is the mean squared coordinate displacement, with a periodic
phase difference for cat. The objective deliberately tests immediate movement
rather than assuming an equivalence between movement and mixing. Direct
optimization integrates acceptance analytically and estimates \(J\) with
stratified action proposals; GRPO instead samples actions and acceptance events
at the same proposed state count.

Conditional on the recorded proposal, the policy-dependent factor in the event density is
\begin{equation}
 p_\phi(a,z;u,u')=
 q_\phi(a\mid u,y)
 \alpha_\phi(u,u',a)^z[1-\alpha_\phi(u,u',a)]^{1-z}.
 \label{eq:event}
\end{equation}
The full joint density also contains \(K_a(u'\mid u)\). It cancels from the replay
ratio only because the same recorded action and proposal appear in the numerator
and denominator and the proposal kernel is policy independent; it is not dropped
from the conditional event density itself. Gradients still pass through both
endpoint policy probabilities in \(\alpha_\phi\), so a rejected proposal keeps its
own event probability rather than being credited with an unobserved move. Groups
contain \(G=8\) proposals sharing a record and initial state, with advantages
\(A_g=R_g-(G-1)^{-1}\sum_{h\ne g}R_h\) computed without dividing by the
within-group standard deviation.
The update maximizes
\begin{equation}
 \widehat J_{\mathrm{clip}}=
 \frac1G\sum_g\min\{r_gA_g,
 \operatorname{clip}(r_g,0.8,1.2)A_g\},
 \qquad r_g=p_\phi/p_{\phi_{\mathrm{old}}}.
 \label{eq:matchedgrpo}
\end{equation}
At the collection policy, unclipped event differentiation targets the expected
reward under the stated fixed starts. The clipped replay used during training is
an approximate optimization update rather than an unbiased long-horizon mixing
estimator.

\subsection{What the direct and group-relative gradients estimate}

For fixed initial state and record, define the event density
\[
 h_\phi(a,u',z\mid u,y)=
 q_\phi(a\mid u,y)K_a(u'\mid u)
 \alpha_\phi^z(1-\alpha_\phi)^{1-z}.
\]
With proposal parameters fixed and the realized reward treated as an environment value, differentiation gives
\[
 \nabla_\phi\mathbb E_{h_\phi}[R]
 =\mathbb E_{h_\phi}[R\nabla_\phi\log h_\phi].
\]
Conditional on the common start, the other independently sampled group rewards
form a baseline independent of a member's own event, so the leave-one-out
baseline leaves this unclipped expectation unchanged. Because the discrete action
and Bernoulli acceptance terms both depend on \(\phi\), omitting the latter
generally changes the gradient. This identity is an expected reward statement at
the generating policy, not a guarantee across multiple clipped replay epochs.

Direct optimization instead uses \(\mathbb E_z[R]=\alpha_\phi(\delta^2+0.02)\);
for a uniform, balanced allocation of proposal samples across the four actions,
multiplying by \(4q_\phi(a\mid u,y)\) estimates the action-weighted objective.
Each initial state is replicated eight times, giving two proposal noises per
action for the direct-gradient variant, while the GRPO variant uses eight sampled actions from the same
start. The two variants match proposed state counts and the intended one-step
objective, but not the individual draws or the number of optimizer steps.

The policy has \(4(4d+2)\) trainable residual coefficients, giving 24 for one
coordinate and 40 for two. Adam runs for 48 updates with learning rate \(0.01\)
and gradient norm bound \(1\). Direct optimization takes one optimizer step per
update, whereas GRPO takes two clipped replay steps. Event log ratios are bounded
to \([-20,20]\) before exponentiation for numerical stability. This matched one-step experiment
uses no additional KL penalty. The historical four-step experiments instead use
terminal group normalization and regularization, so their outcomes are not
attributed to this revised objective.

\subsection{State maps and prior measures}

The physical maps are fixed before data generation.
For cat states, write
\[
 |\psi_\varphi\rangle=
 (|0\rangle^{\otimes n}+e^{i\varphi}|1\rangle^{\otimes n})/\sqrt2,
 \quad
 \rho_{\mathrm{cat}}=(1-\eta)|\psi_\varphi\rangle\langle\psi_\varphi|
 +\frac{\eta}{2}(|0^n\rangle\langle0^n|+|1^n\rangle\langle1^n|).
\]
The prior is uniform on \(\varphi\in[0,2\pi]\) and \(\eta\in[0.02,0.20]\). The
phase endpoints represent the same physical state, so the sampler uses reflected
moves on the coordinate interval and its reward displacement and joint
diagnostics account for periodicity; ordinary marginal W1 remains sensitive to
the coordinate cut and is interpreted accordingly.

The Dicke vector is the equal superposition of computational basis states with \(k=\min(2,n)\) excitations.
For \(m=\min\{\binom nk,15\}\), the implemented noisy state is
\[
 \rho_{\mathrm D}(\eta)=
 (1-\eta)|D_{n,k}\rangle\langle D_{n,k}|
 +\frac{\eta}{m}\sum_{b\in\mathcal N_m}|b\rangle\langle b|,
 \qquad \eta\sim\operatorname{Unif}[0.01,0.10].
\]
Here \(\mathcal N_m\) contains the first \(m\) occupied site combinations in
lexicographic order, so the specified nuisance mixture is not permutation
invariant once only part of the excitation sector is retained. For \(n=2\) and
\(m=1\) both terms are the same projector and the parameter is physically
unidentifiable; the prior remains a valid target for implementation checks, but
this case is not evidence of learning an unknown physical noise level.

For the linear cluster state
\(|C_n\rangle=\prod_{j=1}^{n-1}\mathrm{CZ}_{j,j+1}|+\rangle^{\otimes n}\), the map is
\[
 \rho_{\mathrm C}(\eta)=
 (1-\eta)|C_n\rangle\langle C_n|
 +\frac{\eta}{n}\sum_{j=1}^n Z_j|C_n\rangle\langle C_n|Z_j,
 \qquad \eta\sim\operatorname{Unif}[0.01,0.13].
\]
These maps admit factor representations of at most 3, 16, and \(n+1\) columns
respectively, and zero padding to the allocated factor width changes no state.
They preserve global structure by construction and do not estimate arbitrary
excitation sectors or cluster defects.

For Dicke and cluster each observed product Pauli outcome has probability affine
in the normalized noise coordinate, and for cat it has the form
\(a+(1-\eta)(b\cos\varphi+c\sin\varphi)\), with coefficients obtained by
evaluating explicit physical factors at basis parameter settings. Tests compare
these probabilities and their gradients against direct factor calculations, so
likelihood construction uses the actual physical family rather than a learned
posterior approximation.

\subsection{Records, reference integration and frozen selection}

For each repetition, twelve prior-drawn training truths receive alternating 8
and 32 observed outcomes, eight validation truths use the same two information
levels, and twelve evaluation truths use the sequence \(4,8,32,128\) repeated
three times. Every outcome draws an independent product Pauli setting and a
result sampled from the Born probabilities, and recorded truths are used for evaluation only, never for
policy features, rewards, or candidate selection. The train, validation, and
evaluation random streams differ by offsets \(0,1000,2000\) from a repetition
seed, which follows the rule \(82{,}000{,}000+10^6n+10^5f+10^4r\) with family
index \(f=0,1,2\) for Dicke, cluster, and cat and repetition \(r=0,1,2\). A
shared fixed-mixture reference calculation advances each training record for 64
steps and supplies two states per record as fixed training initializations, so training never
draws from the exact evaluation posterior.

Scalar references use trapezoidal grids of 2,049 and 4,097 points; cat uses a
\(129\times65\) coarse grid and a \(258\times129\) fine grid with phase midpoints
and trapezoidal noise weights. Every accepted reference keeps the maximum change
in ordinary, sine, and cosine coordinate moments below \(0.005\) between the two
resolutions, a numerical stability check rather than a proof of all posterior
tails or joint features. An initial cat grid had placed an atom at the phase cut,
creating a mean refinement artifact for boundary-concentrated posteriors; the
corrected midpoint rule was tested for reflection symmetry and rerun with the
same scientific seeds and acceptance threshold, with a largest moment difference
of \(0.0004869\) in the verification run. All eighteen original cat runs are
retained but superseded for this comparison, and their costs remain in the
accounting.

The predefined mixture set is
\begin{gather*}
 (0.2,0.3,0.3,0.2),\quad (0.9,0.03,0.03,0.04),\quad
 (0.03,0.9,0.03,0.04),\\
 (0.03,0.03,0.9,0.04),\quad (0.03,0.03,0.04,0.9).
\end{gather*}
Validation runs 32 chains and 256 transitions for each mixture, and the same
terminal coordinate W1 selects the mixture used by the direct, GRPO, and error-tuned methods. The
acceptance-tuned method reuses those same five conventional validation runs and picks the
average acceptance closest to \(0.234\), so it incurs no second validation
search. All selected settings are saved before evaluation collection.

\subsection{Sequential Monte Carlo and initialization}

SMC starts from 32 independent prior particles and anneals the likelihood with
inverse temperature coefficient \(\beta_k=[\min(1,k/16)]^2\) at step \(k\), so
incremental weights multiply by \(L_y^{\beta_k-\beta_{k-1}}\). Each of the first
sixteen updates is followed by systematic resampling and one Metropolis
rejuvenation proposal, which refreshes from the prior with probability \(0.1\)
and otherwise makes a reflected Gaussian move whose local scale is chosen from
\(0.01,0.04,0.14,0.3,0.5\) by validation W1; once the likelihood is fully
introduced, the remaining steps are ordinary posterior MCMC rejuvenation. This
structure is why a higher-budget SMC endpoint can perform similarly to a long
corrected chain.

The flow comparison evaluates existing BuresTomFlow and Flow-GRPO checkpoints on
the new records, projecting BuresTomFlow states by minimum Bures distance among
256 prior-drawn coordinate candidates and using those selected coordinates to
initialize every projected-flow Markov-chain method. The prior-start methods
likewise share exactly the same saved coordinate arrays. SMC keeps its own prior initialization, because
sequential weighting is intrinsic to the algorithm. These are therefore
controlled starting conditions among the Markov methods together with an
end-to-end particle comparison, not identical random samples for all algorithms.

\subsection{Distribution metrics, uncertainty and chain diagnostics}

Coordinate W1 compares each empirical marginal with the weighted fine reference,
with the CDF discrepancy evaluated at the reference grid locations. For cat, the
joint feature vector is \((\cos2\pi u_1,\sin2\pi u_1,u_2)\). We measure joint
discrepancy using the biased empirical squared maximum mean discrepancy, averaged
over Gaussian kernels with widths \(0.1,0.3,1.0\), against 256 independent
reference draws. Because this empirical statistic has a nonzero finite-sample
floor, we calibrate it using a second reference sample matched to the output size.

Posterior spread is the empirical coordinate variance divided by the reference
variance. For each record and coordinate, the sample 5th and 95th percentiles
form a nominal 90\% interval and inclusion of the simulated truth is kept as a
Boolean count; these are genuine sample-based intervals, unlike the earlier
scalar reference coverage study, but with only three evaluation records per
information level and training repetition the counts remain descriptive.

Full Markov histories retain the initial state and all 1,024 transitions, and the
final 512 states per chain supply rank-normalized split \(\widehat R\), bulk and
tail effective sample counts, and autocorrelation. The diagnostic unit is one
record, coordinate, and initialization rather than one training repetition,
giving 1,728 units per method across the continuous study; phase diagnostics
should be read alongside the periodic joint statistic, since an ordinary phase
coordinate contains a cut, and no record with a convergence warning is removed from the
accuracy summaries. We independently recomputed all $8{,}640$ diagnostic entries
with ArviZ~0.22.0, replacing the earlier effective sample calculation that split
chains twice: every $\widehat R$ agrees within $4.5\times10^{-16}$ and every
threshold count is unchanged. The corrected effective sample sizes reported below
describe the same retained trajectories, with no chain rerun or exclusion, and
the pooled chain W1 is computed directly from the coordinates and is likewise
unchanged.

\begin{figure}[htbp]
\centering
\includegraphics[width=\textwidth]{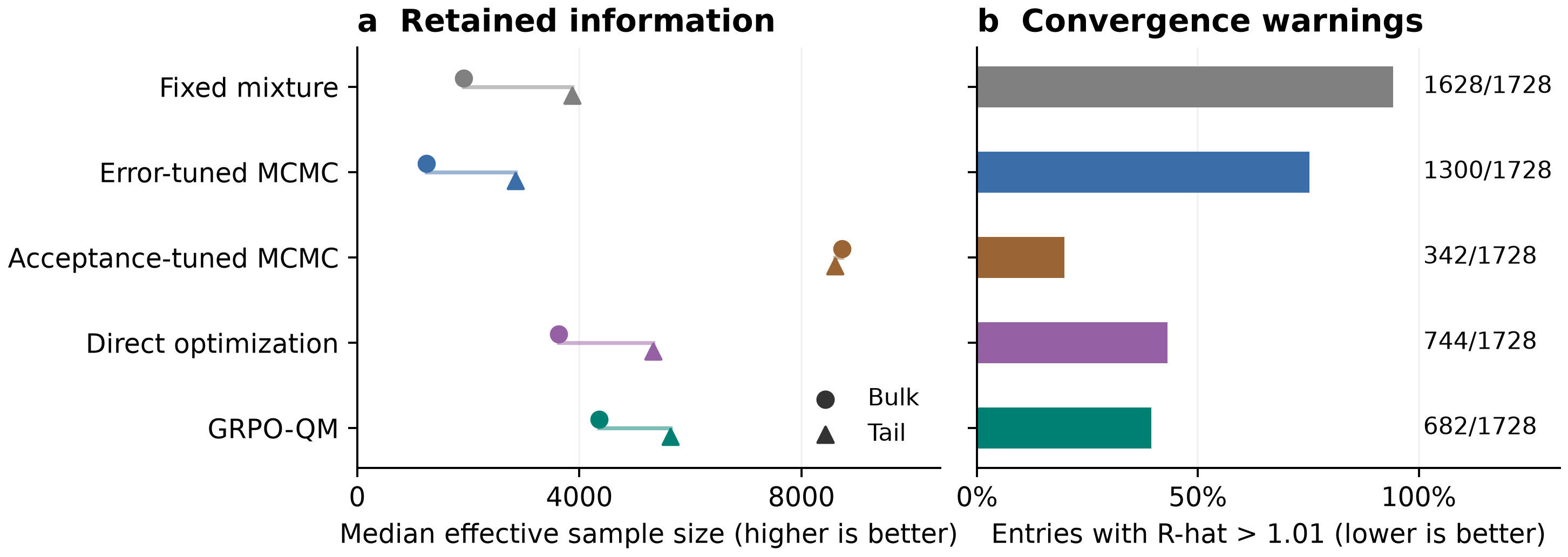}
\caption{Continuous trajectory diagnostics. (a) Circles and triangles show median bulk and tail effective sample sizes. (b) Bars show the fraction of entries with \(\widehat R>1.01\), with exact counts at right. Each method has 1,728 record, coordinate and initialization entries, not independent training repetitions. These descriptive summaries use every retained trajectory; convergence warnings do not trigger exclusions.}
\label{fig:diagnostics}
\end{figure}

Repetition-level pairing averages over the same twelve records for each method,
with ties defined by an absolute tolerance of $10^{-12}$. The 54 means combine
different physical conditions and are not treated as an independent, identically
distributed sample from a single population, so we report descriptive paired
effects, display every repetition, and attach no pooled significance claim to
selected family means. The secondary trajectory analysis was performed after
primary collection and did not change policy selection or the primary endpoint.

For each condition, let $d_r$ denote the difference between \method{}'s mean
error and the comparator's mean error on the same twelve records in training
repetition~$r$. We report all three values of $d_r$, their mean $\bar d$, and
the exploratory interval $\bar d\pm t_{0.975,2}\,s_d/\sqrt{3}$. This interval
assumes independent, approximately normal repetition effects, an assumption that
three repetitions cannot establish. The intervals are unadjusted descriptions,
not equivalence or confirmation tests. Figure~\ref{fig:paireduncertainty}
presents the four-qubit comparison with acceptance tuning. All sizes, both
starting conditions, both endpoints, and four paired comparators are retained in
the accompanying analysis.

\begin{figure}[htbp]
\centering
\includegraphics[width=\textwidth]{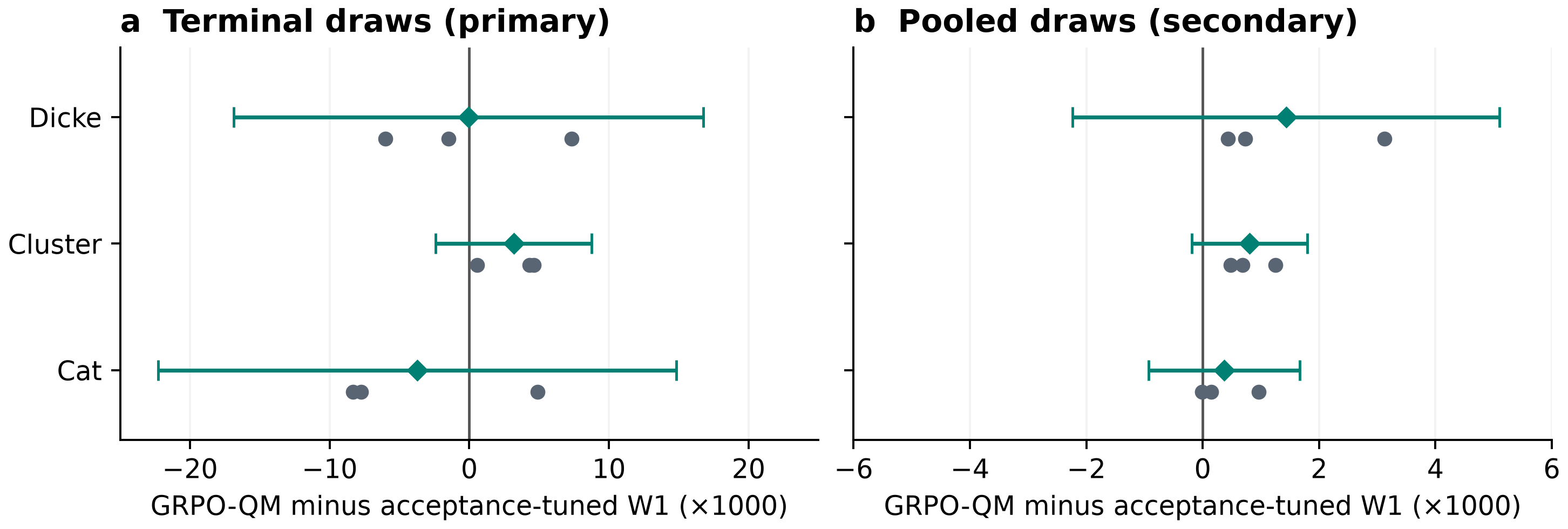}
\caption{Paired four-qubit effects against acceptance-tuned MCMC from common prior starts. Negative coordinate W1 differences favor GRPO-QPS. Grey dots show three training repetitions, each averaging twelve shared records; diamonds show their mean. Lines are exploratory, unadjusted 95\% Student t intervals with two degrees of freedom. All intervals include zero. (a) Terminal draws use 1,024 transitions. (b) Pooled draws use the final 512 states of 32 chains. The panels have different horizontal scales; neither establishes equivalence.}
\label{fig:paireduncertainty}
\end{figure}

\begin{figure}[htbp]
\centering
\includegraphics[width=\textwidth]{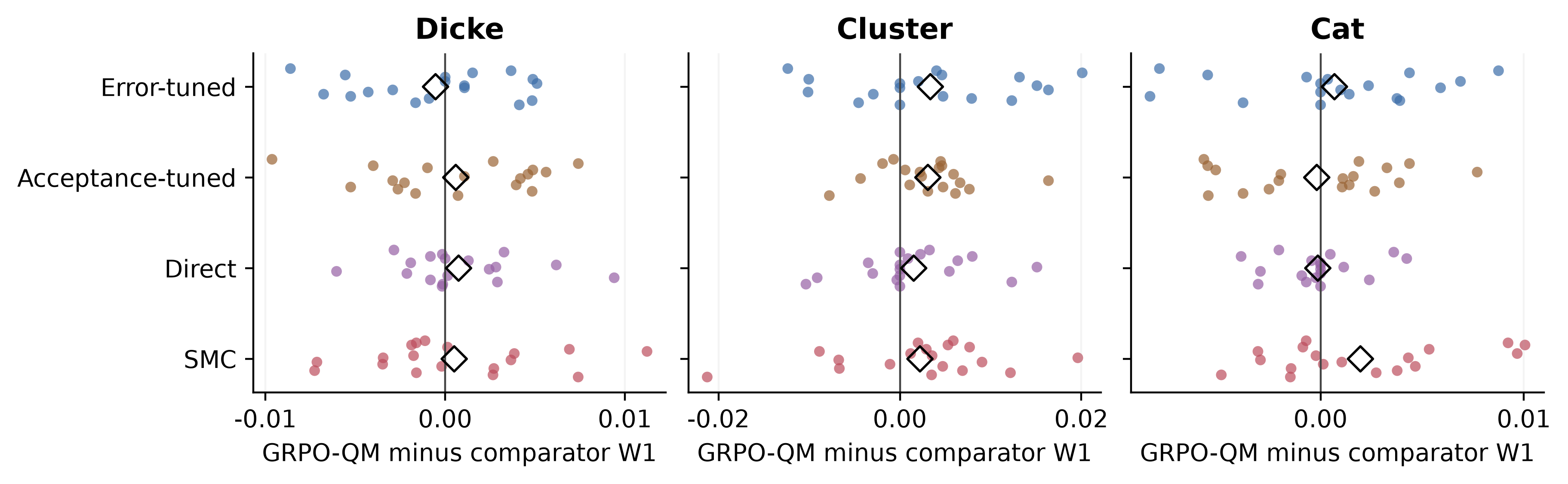}
\caption{Paired terminal W1 effects across all 54 continuous family, size and training repetitions, from prior starts at 1,024 transitions. Each dot is one repetition mean over twelve new records; open diamonds show the family mean. Negative values favor GRPO-QPS. The zero line separates directions, not statistical significance. Showing every repetition avoids selecting a favorable family or seed.}
\label{fig:paired}
\end{figure}

\subsection{Initialization and terminal comparisons}

Table~\ref{tab:starts} measures the effect of family information before any
posterior movement takes place; the terminal sample comparison at the final
budget appears in the main text (Figure~\ref{fig:budget}).
\begin{table}[t]
\centering\small
\caption{Four-qubit state error before posterior movement. Each mean uses 36 evaluation records and 32 states per record. Lower is better. The family prior ignores the new record; projection uses the known physical family.}
\label{tab:starts}
\begin{tabular}{lrrrr}
\toprule
Family & BuresTomFlow & Flow-GRPO & Family prior & Projected flow\\
\midrule
Dicke & 1.13540 & 1.13496 & 0.06348 & 0.08473\\
Cluster & 1.13584 & 1.13610 & 0.08962 & 0.11950\\
Cat & 1.18201 & 1.18238 & 0.64371 & 0.60210\\
\bottomrule
\end{tabular}
\end{table}

\begin{figure}[tbp]
\centering
\includegraphics[width=\textwidth]{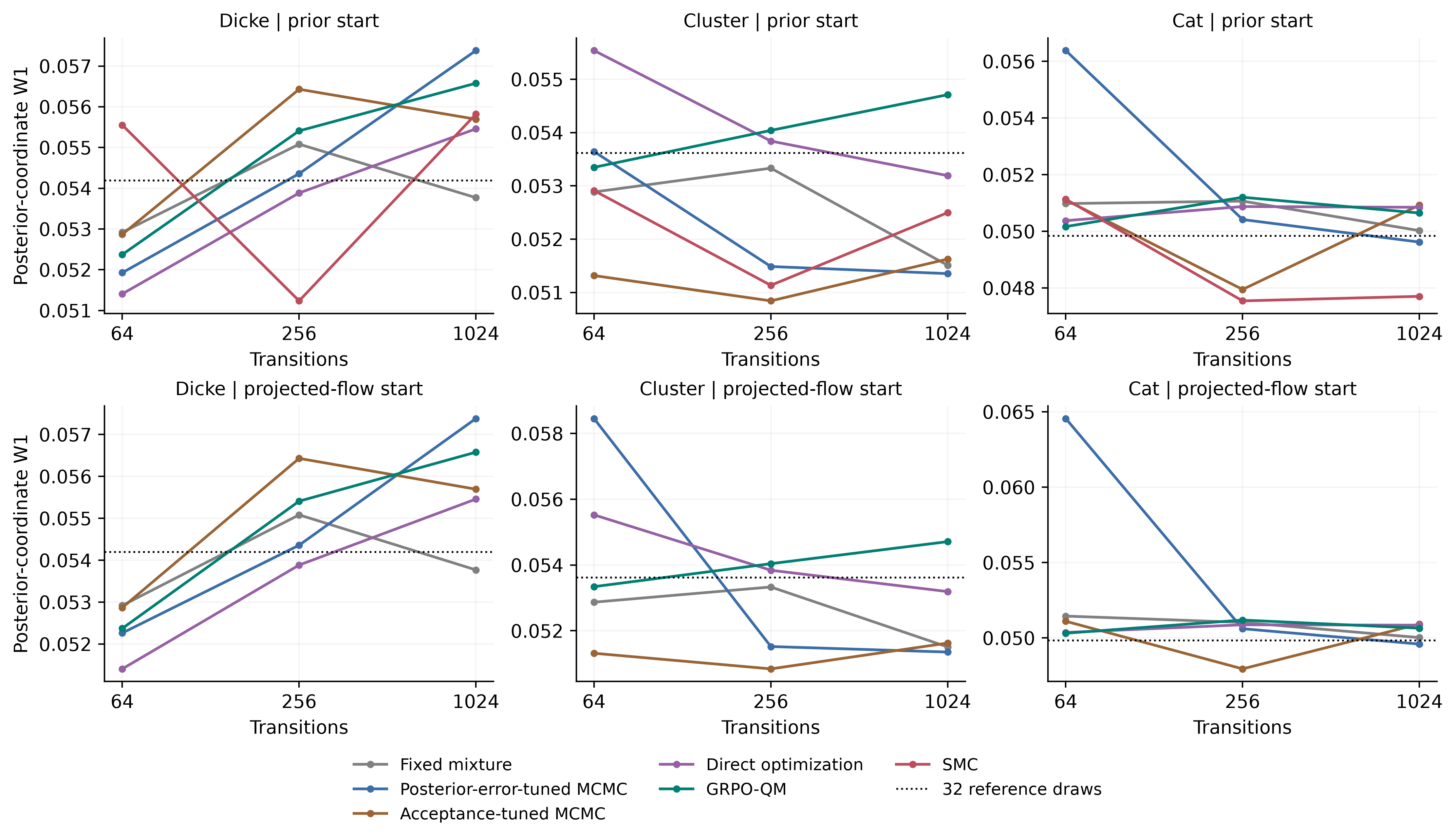}
\caption{Continuous posterior coordinate error against transition budget. Curves average all six sizes, three training repetitions and twelve records per repetition within each family. Upper panels use prior starts; lower panels use family-projected flow starts. The reference draw curve shows the error of 32 independently sampled reference coordinates. These are descriptive curves, not confidence intervals. SMC appears only in the prior start comparison.}
\label{fig:budget}
\end{figure}

\subsection{Complete continuous family and size comparison}

Table~\ref{tab:allnew} reports every family and size at the final prior start
budget. The same underlying records define both the method means and their paired
differences. This table complements the repetition level figure: means show the
scale of error, whereas pairing reveals how stable each difference is under
retraining. No family is excluded because its learned comparison is unfavorable.

\begin{table}[h]
\centering\scriptsize
\caption{All continuous terminal coordinate W1 means at 1,024 transitions, from prior starts. Each row averages three training repetitions and twelve evaluation records per repetition. Fixed denotes the original mixture; error-tuned and acceptance-tuned denote the two distinct conventional selection criteria. Lower is better.}
\label{tab:allnew}
\begin{tabular}{lr rrrrrr}
\toprule
Family & Qubits & Fixed & Error-tuned & Acceptance-tuned & Direct & GRPO-QPS & SMC\\
\midrule
Dicke & 2 & 0.05551 & 0.05745 & 0.05551 & 0.05560 & 0.05574 & 0.05804 \\
Dicke & 4 & 0.05375 & 0.06027 & 0.05999 & 0.05970 & 0.05993 & 0.05080 \\
Dicke & 5 & 0.05282 & 0.05298 & 0.05042 & 0.05252 & 0.05607 & 0.05675 \\
Dicke & 6 & 0.05737 & 0.05521 & 0.05265 & 0.05501 & 0.05215 & 0.05562 \\
Dicke & 8 & 0.05263 & 0.06376 & 0.05927 & 0.05294 & 0.05730 & 0.05968 \\
Dicke & 10 & 0.05054 & 0.05462 & 0.05632 & 0.05697 & 0.05828 & 0.05405 \\
Cluster & 2 & 0.05226 & 0.05409 & 0.05739 & 0.05617 & 0.05801 & 0.05362 \\
Cluster & 4 & 0.05179 & 0.05456 & 0.05397 & 0.05206 & 0.05714 & 0.05657 \\
Cluster & 5 & 0.05003 & 0.04868 & 0.05088 & 0.05052 & 0.05439 & 0.04628 \\
Cluster & 6 & 0.05407 & 0.05017 & 0.04603 & 0.05143 & 0.05225 & 0.05185 \\
Cluster & 8 & 0.05754 & 0.05244 & 0.05119 & 0.05884 & 0.05569 & 0.05404 \\
Cluster & 10 & 0.04337 & 0.04818 & 0.05030 & 0.05012 & 0.05077 & 0.05263 \\
Cat & 2 & 0.04681 & 0.04492 & 0.04726 & 0.04651 & 0.04752 & 0.03824 \\
Cat & 4 & 0.04643 & 0.04969 & 0.05047 & 0.04680 & 0.04674 & 0.04605 \\
Cat & 5 & 0.05063 & 0.04772 & 0.04866 & 0.05177 & 0.05234 & 0.04546 \\
Cat & 6 & 0.05043 & 0.04560 & 0.04759 & 0.05053 & 0.04905 & 0.04994 \\
Cat & 8 & 0.05270 & 0.05800 & 0.05642 & 0.05577 & 0.05639 & 0.05290 \\
Cat & 10 & 0.05315 & 0.05179 & 0.05517 & 0.05372 & 0.05184 & 0.05365 \\
\bottomrule
\end{tabular}
\end{table}

\section{Multimodal thermal comparison with conventional MCMC}
\label{app:multimodal}

\subsection{Posterior construction and evaluation metric}

This comparison uses the thermal transverse-field Ising family with coordinates
$(J,h,\beta)$ and widens the field prior to $h\in[-1.8,1.8]$. Records contain
only $Z$-basis outcomes. Under this observation model, the likelihood is invariant
to the sign of $h$, which produces two exact mirror posterior modes at $\pm h$.
As the measurement record becomes more informative, each mode becomes narrow.
Local proposals can move efficiently within one mode but rarely cross the
low-probability region between modes, while global prior refresh can change modes
but has decreasing acceptance. This construction therefore tests whether a
sampler can combine reliable mode changes with efficient local exploration.

The primary metric is the minimum effective sample size over $J$, $h$, $\beta$,
and a binary mode indicator, reported per $1{,}000$ likelihood calls. Reporting
the minimum ensures that a sampler must mix effectively in every measured
quantity rather than only in the mode indicator. Each Metropolis proposal is
charged one new likelihood evaluation, with the current-state likelihood cached.
The adaptation period of Haario adaptive Metropolis is included in its cost. We
also evaluate prior independence, fixed mixtures with a refresh component, and a
reflection mixture supplied with the exact sign symmetry. The reflection method
is included as an informative comparator because it tests whether explicit
knowledge of the symmetry is sufficient by itself.

\subsection{Learned proposal, reward, and training schedule}

The learned kernel is the state-dependent Gaussian mixture in
Eq.~\ref{eq:learnedmixture}. The neural policy predicts mixture weights, a drift,
and a Cholesky factor in logit coordinates. The complete mixture density is
evaluated in both directions in the Metropolis ratio. After the policy is frozen,
this construction preserves the posterior for every learned parameter setting
covered by Proposition~1, subject to the stated irreducibility and finite-chain
conditions.

The single-record experiments use a trajectory-variance objective over
$(J,h,\beta)$ and the mode indicator, with posterior reference moments. For the
transfer experiment, these reference moments are estimated from long
prior-independence chains on the training records only. Objectives computed from
a trajectory's own empirical statistics were also examined during development,
but oscillatory trajectories could increase those rewards without improving the
estimates, so they are not used for the reported transfer result. Training only
on the narrowest posteriors can drive the acceptance probability toward zero.
The transfer training set therefore combines records with broader and narrower
posterior modes. Under this schedule, all three reported transfer seeds train
successfully.

\subsection{Single-record four-qubit and six-qubit results}

At four qubits with 200 $Z$ shots, four of five training seeds outperform the
strongest conventional sampler on the same record. The ratios of learned minimum
ESS to the best conventional minimum ESS are $1.34$, $1.84$, $1.17$, and
$2.40$ for the four successful seeds; the remaining seed fails because its
acceptance probability approaches zero. The median ratio among the successful
seeds is $1.6\times$. Each successful learned policy also exceeds the Haario
adaptive-Metropolis result by a factor between $4.5\times$ and $6.9\times$. For
seed 0, the learned ESS vector is $(154,195,115,148)$ for
$(J,h,\beta,\mathrm{mode})$, compared with $(86,109,98,90)$ for prior
independence. The initial local random walk has a minimum ESS of 8 per $1{,}000$
likelihood calls and includes neither a prior-refresh move nor a reflection move.

At six qubits with 800 $Z$ shots, the posterior modes are narrower and the
acceptance rate of global prior refresh is approximately $3\%$.
Table~\ref{tab:multimodal6q} reports the complete comparison. The learned kernel
achieves a minimum ESS of 102, compared with 28 for prior independence, 27 for
the fixed mixture, 20 for Haario adaptive Metropolis, and 17 for the
symmetry-aware reflection mixture. The reflection method changes modes rapidly
but remains inefficient in $J$ and $\beta$, which explains its low minimum ESS.

\subsection{Transfer to unseen records}

The transfer experiment uses eight six-qubit records whose true parameters are
drawn from the prior. The five training records contain 300, 300, 1,000, 1,000,
and 3,000 $Z$ shots. Each of the three held-out records contains 3,000 shots. One
record-conditioned policy is trained for each random seed. Its inputs are the
current state, current log likelihood, and eight summary statistics of the
record's $Z$ outcomes. The posterior moments used by the training objective are
estimated only from long prior-independence chains on the five training records;
no exact reference or held-out posterior information enters training or model
selection. The conventional fixed mixture is also selected using only the
training records.

Each frozen policy is evaluated using 2,048 chains with 400 steps per chain. The
nine held-out ratios are reported in Table~\ref{tab:multimodaltransfer}. Every
ratio is at least one, and the mean ratios by training seed are $1.49$, $1.47$,
and $1.24$. All three transfer training runs complete without the acceptance
failure observed in one of the single-record four-qubit runs. On held-out record
5, the learned minimum ESS values are 44.8, 40.8, and 33.0, compared with 27.6,
28.2, and 27.4 for prior independence and approximately 16.5 for Haario adaptive
Metropolis.

\subsection{Cost and scope}

Training the transfer policy requires approximately $25.6$ million likelihood
calls. If this training cost is included in the deployment budget, the measured
sampling advantage offsets the training cost after roughly fifty served records
under the present accounting. The result is therefore best interpreted as an
amortized proposal-learning benefit across repeated inference tasks, not as an
improvement without additional cost for one posterior.

The scope is intentionally limited. On the broader four-qubit, 200-shot posterior
modes, prior independence is already highly effective, and the transfer policy
has mean ratios of approximately $0.83$ to $0.87$. On smooth unimodal factor
posteriors in 512 real dimensions, tuned HMC achieves approximately
$13\times$ the efficiency of pCN when each gradient is charged as two likelihood
calls, and none of the learned proposal classes tested here is competitive. The
reported transfer study contains three training seeds and three held-out records
from one nonlinear physical family. The multimodality arises from the
parameterized physical family and the observation symmetry. Extending the same
comparison to additional nonlinear families and many-parameter multimodal models
remains future work.

\subsection{Reproducibility record}

The single-record experiments are produced by
\texttt{scripts/grpo\_qm/mcmc\_failure\_premise/learned\_proposal\_grpo.py}, and
the transfer experiment by \texttt{multimodal\_campaign.py} in the same
directory. Recorded outputs include
\texttt{learned\_proposal\_*\_win\_seed*.json},
\texttt{learned\_proposal\_n6\_*\_narrow6\_z800.json}, and
\texttt{multimodal\_campaign\_*\_n6mixed.json}. The experiments use one NVIDIA
GB10, PyTorch 2.11.0+cu130, float32 matrix-exponential likelihood evaluation, and
float64 accept or reject calculations. The matrix-exponential implementation was
checked to within $4\times10^{-3}$ on log-target values near 7,000. Measurement
records are fixed by seed 1000 across policy seeds. The scripts and outputs are
recorded on branch \texttt{parivesh/mcmc\_failure\_experiments}.

\section{Enumerated state proof of principle}
\label{app:finite}

\subsection{Graph construction and exact target}

The 48-state study is separate from the continuous scalar quadrature. Each seeded state set
contains sixteen physical states at each rank \(1,2,4\), with total prior masses
\(0.30,0.45,0.25\); a seeded complex Gaussian QR calculation gives each
eigenbasis, and independent positive random weights normalized to sum to one give
the nonzero spectrum. The set seed fixes the graph, so different seeded state sets contain
different states. Five truth selection conditions (rank one, rank two, full
rank, boundary rank, and multimodal rank one) and three shot budgets
\(4,16,128\) yield fifteen posteriors per posterior set; these select truths and
measurement design rather than five separate graph constructions.

For states \(\rho_i\), prior weights \(p_{0i}\) and counts \(y_{so}\), the exact finite target is
\[
 \pi_i(y)=\frac{p_{0i}\exp(\ell_i)}
 {\sum_j p_{0j}\exp(\ell_j)},\qquad
 \ell_i=\sum_{s,o}y_{so}\log p(o\mid s,\rho_i).
\]
Nonmultimodal records use the \(XX,XY,YX,YY,ZX,ZZ\) bases, whereas multimodal
records use the \(XX,YY,ZZ\) bases. The spectrum proposal connects states of the
same rank with weights \(\exp(-\|s_i-s_j\|^2/0.03)\). The eigenbasis proposal uses
gauge-invariant basis coordinates at temperature \(0.6\), and the joint proposal
uses spectrum and scaled basis coordinates across ranks at temperature \(0.08\).
Self-proposals are excluded before row normalization in all three distance
kernels. The fourth action samples the rank-weighted prior. The saved graphs and
proposal matrices specify the exact numerical basis-coordinate construction.

\subsection{Rate, aggregation and intervention definitions}

The corrected matrix is reversible, and the eigensolve applies the symmetric
similarity transformation \(D_\pi^{1/2}TD_\pi^{-1/2}\) in float64 with a Hermitian
eigensolver. We use the absolute spectral gap, including negative eigenvalues in
the maximum magnitude, and apply no lazification. States below \(10^{-14}\) of the
largest posterior mass are excluded from the eigensolve only, with their outgoing
mass returned to the diagonal; across 360 evaluation cells the excluded mass is at
most \(3.19\times10^{-14}\), and the maximum detailed balance and stationarity
residuals are \(1.39\times10^{-17}\) and \(4.44\times10^{-16}\). These numerical
checks support the finite calculation rather than an unmeasured continuous rate.

Let \(g_{ri}\) and \(b_{ri}\) denote the learned and fixed rates for repetition
\(r\) and posterior cell \(i\). The absolute effect is \(\overline{g-b}\), the
relative change of pooled means is \((\bar g-\bar b)/\bar b\), and the mean
cell-wise percentage is \(\overline{(g-b)/b}\) with the median taken over those
ratios; these weight heterogeneous baseline rates differently and must not be
interchanged. Opportunity recovery divides the learned absolute gain by the gain
of a separately, directly optimized policy of the same form, which is a
diagnostic reference rather than an operationally available posterior
reference.

Across the 360 cell-level comparisons, the selected policy has 173 cases favoring
the learned policy, 100 ties, and 87 cases favoring the fixed mixture. At the
set level, 19 comparisons favor the learned policy and five favor the fixed
mixture, and its mean cell-wise relative gain stays near
5\% when the slowest 1, 5, 10, or 25\% of baseline cells are removed. This
sensitivity check does not make it equal to the \(0.612\%\) pooled relative gain;
both quantities are reported to prevent a denominator dependent headline.

The state dependence intervention likewise depends on its averaging measure.
Replacing \(q(a\mid i)\) by its unweighted node average removes an absolute rate
contribution of \(+0.001618\), whereas a posterior-weighted average gives
\(-0.000523\); both interventions rebuild the proposals and recompute the reverse
correction, and the sign change prevents a unique causal attribution to state
dependence from the original intervention alone.

\subsection{Independent training and final policy trajectories}

Eight repetitions use disjoint sets of twelve training sets, 96 unique training
state sets in total, and evaluate on 24 unseen evaluation sets. A two-way percentile bootstrap
uses 20,000 resamples: it resamples training repetitions and evaluation sets,
then averages over their Cartesian product, preserving the shared set structure.
The resulting intervals describe variation under the development study and its
resampling scheme rather than probabilities that a future confirmation will
succeed.

\begin{table}[h]
\centering\small
\caption{Repeated finite-state learning. Values are absolute spectral gap changes relative to the optimized fixed mixture with 95\% two-way percentile intervals over training repetitions and shared evaluation sets. The restricted tests retain spectral validation selection.}
\label{tab:finite}
\begin{tabular}{lrr}
\toprule
Inputs and reward & Mean effect & Interval\\
\midrule
Enumerated features and spectral reward & \(+0.001119\) & \([+0.000374,+0.001860]\)\\
Local inputs and local reward & \(+0.000234\) & \([-0.000127,+0.000585]\)\\
\bottomrule
\end{tabular}
\end{table}

Rich feature policies recover \(7.51\%\) of the directly optimized policy class
opportunity and policies using local inputs and local rewards recover \(1.57\%\), both below
the predefined \(70\%\) threshold for substantial recovery. The local input
representation still originates from the enumerated pipeline, so it should not be
conflated with the continuous implementation with chain-independent features; overall the
finite-state evidence establishes a limited learned spectral effect under rich
information.

The selected policy is evaluated on each of the 360 posteriors
with 24 dispersed chains, 1,024 discarded transitions, and 8,192 retained
transitions without thinning. It raises mean mode transition counts from 3650.5
to 3915.1, but the paired set-level interval for effective samples per counted
transition work is \([-0.01893,+0.00941]\) and the posterior \(L^1\) error
interval also includes zero. The same policy responsible for the spectral result
is thus checked at finite length without substituting an earlier checkpoint, and
enumeration and precomputation prevent these counts from being read as continuous
likelihood throughput.

\section{Physical averages and movement rewards}
\subsection{Spectral gains do not transfer to every observable}
\label{sec:observablevariance}

A spectral gap describes the slowest transition mode, but each physical quantity
overlaps that mode to a different degree, so a larger gap need not lower the
estimation error for every average. Let $P$ be a stationary kernel, $f$ a scalar
observable, $\widetilde f=f-\E_\pi f$, and $\bar f_T=T^{-1}\sum_{t=1}^T
f(X_t)$. The stationary covariances give
\begin{equation}
 \operatorname{Var}_\pi(\bar f_T)=\frac{1}{T}\left[
 \langle\widetilde f,\widetilde f\rangle_\pi+
 2\sum_{k=1}^{T-1}\left(1-\frac{k}{T}\right)
 \langle\widetilde f,P^k\widetilde f\rangle_\pi\right],
 \label{eq:finitevariance}
\end{equation}
and for a reversible kernel with orthonormal eigenfunctions $v_j$ the lag
covariance is $\sum_{j\geq2}\lambda_j^k\langle\widetilde f,v_j\rangle_\pi^2$.
Changing the kernel generally moves several eigenvalues and eigenvectors at once,
so a larger spectral gap alone does not determine the variance of any particular
observable; the gap bounds the slowest mode but not the projection of a given
observable onto it. Evaluating Eq.~\ref{eq:finitevariance} exactly for
all eight retained enumerated policies across the 24 evaluation sets, five
conditions, and three shot counts, the learned policy lowers the stationary
sample mean variance of the $XX$ correlation but raises it for rank and purity
(Table~\ref{tab:observablevariance}). The spectral gain reported above is
therefore real but observable-specific, one reason it does not translate into a
uniform sampling advantage.

\subsection{Movement is not a sufficient sampling objective}
\label{sec:movementcounterexample}

The same principle admits a sharp closed-form illustration that requires neither
training noise nor imperfect acceptance. Consider the four computational basis
pure states of two qubits under a uniform posterior, where distinct states have
squared Bures distance two. Let $H$ be the orthogonal matrix whose columns are
the constant, $ZI$, $IZ$, and $ZZ$ sign vectors, each divided by two, and define
\begin{equation}
 P_A=H\operatorname{diag}(1,0.9,0.1,0.1)H^\top,
 \qquad P_B=H\operatorname{diag}(1,0.7,0.7,0.7)H^\top.
\end{equation}
Both kernels have positive entries, rows summing to one, and symmetric transition
probabilities, so each preserves the uniform posterior and can serve as its own
proposal with acceptance probability one. From stationarity,
$\E\bures^2(\rho_{X_0},\rho_{X_h})=2\,(1-\tfrac14\operatorname{tr}P^h)$, and
$P_A$ produces larger movement at both one-step and four-step horizons yet has the smaller
spectral gap and the higher asymptotic variance for the $ZI$ average
(Table~\ref{tab:movementcounterexample}); summing the geometric covariance series
of an eigenfunction with eigenvalue $\lambda$ gives asymptotic variance
$(1+\lambda)/(1-\lambda)$. Because both acceptance rates equal one, an acceptance
bonus cannot reverse the reward ordering. These illustrative kernels are not
claimed to lie in the fitted proposal library; they establish that rewarding
movement alone cannot guarantee accurate physical averages, and they motivate an
objective that instead targets the correlations those averages depend on.

\begin{figure}[tbp]
\centering
\includegraphics[width=0.9\textwidth]{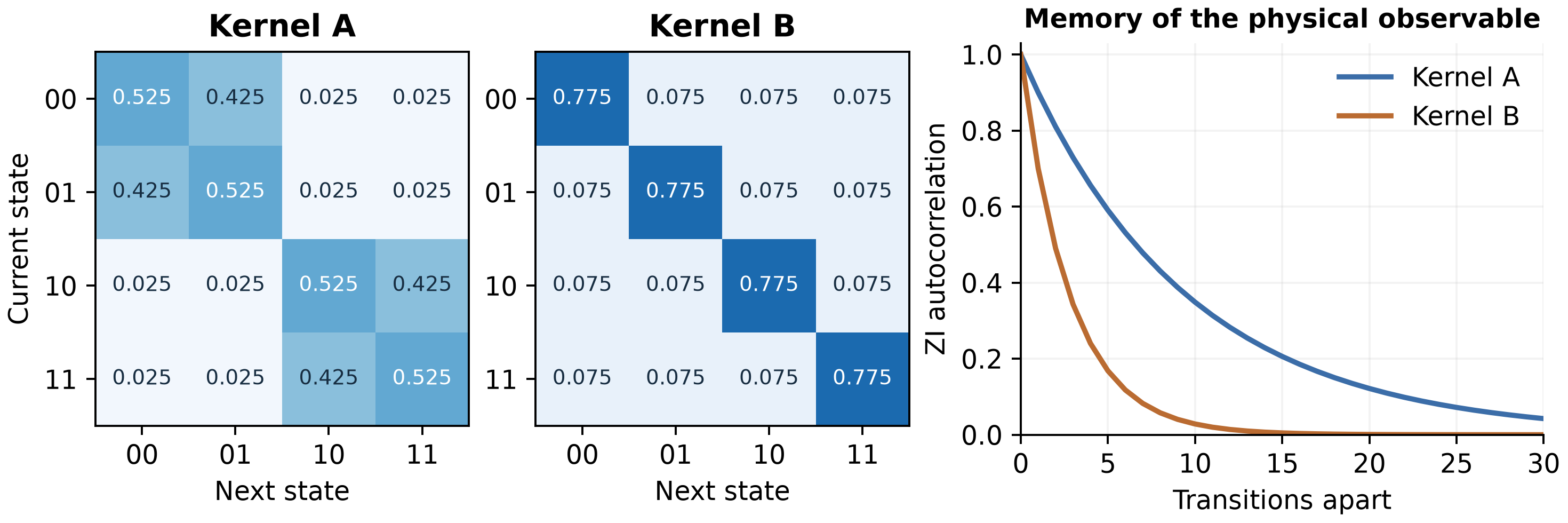}
\caption{Greater movement does not necessarily reduce correlation in a physical observable. Left and middle: exact transition probabilities between the four computational-basis states. Kernel $A$ moves frequently within pairs that share the same $ZI$ value. Right: stationary autocorrelation $\langle f,P^k f\rangle_\pi$ for $f=ZI$. Kernel $B$ loses this memory faster despite its smaller movement reward.}
\label{fig:movementmechanism}
\end{figure}

\section{Cost, verification and release boundaries}
\label{app:costs}

The matched continuous comparison separates numerical reference setup, the shared
reference-chain generation, policy training, candidate validation, pretrained flow
initialization, family projection, and collection. For \(B\) transitions each
Markov record uses \(32(B+1)\) likelihood evaluations, and SMC uses the same
total including prior weighting and rejuvenation. Feature evaluation uses the
analytic likelihood and score representation derived from the physical maps, and
its setup cost is recorded separately rather than counted as free quantum
computation.

All 54 completed runs pass artifact verification together with the validation
and evaluation reference checks. Verification recomputes 22,680 primary error
values and checks 6,696 source and result hashes. Twelve focused tests cover
physical likelihood agreement, support handling, reciprocal correction,
individual-state features, event gradients, matched direct expectations,
numerical integration, particle resampling, and likelihood accounting. The
corrected cat executions replace only the earlier runs that failed the precision
check; those earlier runs remain retained for traceability and cost accounting.

The additional verification analysis checks 1,159 unique input hashes and recomputes
17,280 observable variances from retained transition matrices without error.
Seven additional focused tests verify standard diagnostics and the covariance
formula, including the physical movement counterexample. The resulting 19-test
suite passes without modifying any historical samples or checkpoints.

Median full collection times over the twelve-record continuous runs are
approximately \(1.34\) seconds for the fixed and error-tuned mixtures, \(1.35\)
seconds for direct optimization and GRPO-QPS, and \(1.48\) seconds for the
acceptance-tuned mixture. These timings describe the saved implementation on the
reported accelerator and should not be interpreted as a hardware-independent
ranking. Policy training ran in a fixed order, so initialization overhead can
affect small timing differences. Historical flow pretraining was shared across
experiments, and its total cost cannot be reconstructed completely.

The finite graph has a different computational profile. On an RTX A6000, the
mean cost per posterior is 43.6 ms for constructing the graph posterior and
features, 0.272 ms for policy inference over 48 states, 0.083 ms for a fixed
mixture, 0.295 ms for constructing the corrected transition matrix, and 1.119 ms
for eigendecomposition. One candidate policy trains in about 17 seconds. Once the
transition matrices are precomputed, sample collection reduces to categorical
draws, so proposal counts alone do not describe the computational cost.

The retained artifacts include source snapshots, selected and rejected checkpoints,
records, references, transition matrices, complete chains, SMC ancestry, and
costs. Figure code and plotted values are also retained. An anonymized public
release remains to be prepared.

\section{Complete numerical comparisons}
\label{app:mainfigurevalues}
These tables collect the complete numerical comparisons discussed in the main text. Error bars are standard deviations where specified, not confidence intervals.

\begin{table}[htbp]
\centering\small
\caption{Secondary pooled chain W1, averaged over six sizes and three training repetitions per family, from prior starts. Each record contributes the final 512 states of 32 chains. These draws are correlated. SMC is not included because its retained outputs have a different structure. Lower is better.}
\label{tab:pooled}
\begin{tabular}{lrrr}
\toprule
Method & Dicke & Cluster & Cat\\
\midrule
Fixed mixture MCMC & 0.005967 & 0.005795 & 0.007657\\
Error-tuned MCMC & 0.007642 & 0.008804 & 0.011073\\
Acceptance-tuned MCMC & 0.003382 & 0.003049 & 0.004589\\
Direct optimization & 0.003910 & 0.004600 & 0.006176\\
GRPO-QPS & 0.003594 & 0.004196 & 0.006181\\
\bottomrule
\end{tabular}
\end{table}

\begin{table}[htbp]
\centering\small
\caption{Four-qubit terminal posterior coordinate W1 at matched likelihood work, from prior starts. Values average 36 new records per family. Lower is better. No superiority test is inferred from these means; all three repetition effects and all sizes are retained in Appendix~\ref{app:continuous}.}
\label{tab:terminal}
\begin{tabular}{lrrr}
\toprule
Method & Dicke & Cluster & Cat\\
\midrule
Fixed mixture MCMC & 0.05375 & 0.05179 & 0.04643\\
Error-tuned MCMC & 0.06027 & 0.05456 & 0.04969\\
Acceptance-tuned MCMC & 0.05999 & 0.05397 & 0.05047\\
Direct optimization & 0.05970 & 0.05206 & 0.04680\\
GRPO-QPS & 0.05993 & 0.05714 & 0.04674\\
SMC & 0.05080 & 0.05657 & 0.04605\\
\bottomrule
\end{tabular}
\end{table}

\begin{table}[htbp]
\centering\small
\caption{Exact stationary sample mean variance at 1,024 transitions, averaged equally over all 2,880 policy-condition pairs. Lower is better. Repeated policies share evaluation sets, so rows do not constitute 2,880 independent experiments. Constant and nearly constant observables remain in the means.}
\label{tab:observablevariance}
\begin{tabular}{lrrr}
\toprule
Observable & Optimized fixed & GRPO-QPS & Relative change\\
\midrule
Rank & 0.00662961 & 0.00700456 & $+5.66\%$\\
Purity & 0.00030304 & 0.00031379 & $+3.55\%$\\
$XX$ correlation & 0.00030214 & 0.00029945 & $-0.89\%$\\
\bottomrule
\end{tabular}
\end{table}

\begin{table}[htbp]
\centering\small
\caption{Movement and sampling need not have the same ordering. Both kernels preserve a uniform posterior over four physical two-qubit states and accept every proposal. The final column is $\lim_{T\to\infty}T\operatorname{Var}(\bar f_T)$ for $f=ZI$, whose posterior variance is one.}
\label{tab:movementcounterexample}
\begin{tabular}{lrrrr}
\toprule
Kernel & One-step movement & Four-step movement & Gap & $ZI$ asymptotic variance\\
\midrule
$P_A$ & 0.95000 & 1.17185 & 0.1 & 19.0000\\
$P_B$ & 0.45000 & 1.13985 & 0.3 & 5.6667\\
\bottomrule
\end{tabular}
\end{table}

\section{Main-track comparison protocol and provenance}
\label{app:maintrackprotocol}

Table~\ref{tab:maintrack} compares each method as a single state estimate scored
by Bures distance to the simulated truth. Every method sees the same truths and
the same eight total random-Pauli outcomes per record, reproduced exactly from
the saved factor reconstruction campaign as verified by regenerating its
importance reference to machine precision. Each cell averages three seeds and the
eight evaluation truths per seed.

\paragraph{GRPO-QPS and matched control.} These columns are genuine posterior means
of the collected chains, not the importance reference: the point estimate is
$d_{\mathrm B}(\bar\rho,\rho_\star)$ with $\bar\rho$ the projected mean of the
twelve retained factor samples per record, from the frozen factor policy and its
fixed-mixture control of the reconstruction campaign. The historical campaign divided the log-likelihood policy feature (and, in the
coordinate-box sampler used for Dicke, cluster and cat, the score feature) by a
statistic of the live chain batch, so each chain's kernel selection depended on the
other chains, which the frozen-policy stationarity argument does not cover
(Table~\ref{tab:evidenceladder}). The corrected experiment freezes these scales
before both training and collection. It retrains the policy and recollects
samples using the same flow-initialized states, truths, records, seeds, and budgets.
It therefore does not isolate a collection-only intervention.

The original correction saved the samples but discarded the returned policy
checkpoints. A new reproducibility replay retains those checkpoints, the final
policy, optimizer history, fixed scales, input hashes, and per-record errors.
The replay is compared directly with the saved corrected samples; its verification
is an implementation and reproducibility check, not an additional independent
scientific repetition. The maximum likelihood, shadow, and neural estimates
are unchanged, so they do not require another training run.

\paragraph{Maximum likelihood.} For the structured families the likelihood is
maximized over the declared low-dimensional coordinate by a dense grid (401 points
in one dimension, a $61\times31$ grid for cat) and selection of the maximum-likelihood grid point,
with ties broken by the first maximizing grid point. For random-mixed states it is maximized over
the rank-16 complex factor by Adam (learning rate $0.05$, three random restarts
from standard complex Gaussian factors, 300 steps each), keeping the best restart
by likelihood; with eight outcomes this problem is underdetermined, so the estimate
is the located local optimum, not a unique maximizer.

\paragraph{Classical shadow.} Each shot contributes the single-qubit snapshot
$3\,|\psi\rangle\langle\psi|-I$ for the measured Pauli eigenstate, tensored across
qubits; snapshots are averaged over the eight outcomes and the Hermitian estimate
is projected to a physical estimate by clipping negative eigenvalues and
renormalizing the trace. This is a positivity-and-trace repair, not the
Frobenius-nearest density matrix.

\paragraph{Neural density operator.} A purified complex restricted Boltzmann
machine over $n$ observed and three auxiliary spins, with $2(n+3)$ hidden units,
defines a rank-eight purification; it is trained per record by exact maximum
likelihood over the enumerated observed states (Adam, learning rate $0.02$, 600
steps, two restarts, weight decay $10^{-3}$), keeping the best restart. The rank
and tuning budget are fixed and modest, so its numbers reflect a lightly tuned
implementation rather than the best attainable neural estimator.

\paragraph{Prior mean.} The prior-mean estimate averages $4{,}096$ prior draws
into a single density matrix and scores its Bures distance to each truth, using no
measurement record. On random-mixed states this estimate is close to the maximally
mixed state and is already competitive with the samplers, which is why we
attribute the random-mixed advantage over maximum likelihood to prior averaging
rather than to the measurement update.

The estimator code, per-cell outputs, and a hashed manifest are retained with the
comparison artifacts.

\begin{table}[t]
\centering\small
\caption{Point-estimate Bures distance to the simulated truth (lower is better),
  mean over three seeds from eight total random-Pauli outcomes per record. The \method{} and matched-control columns are genuine posterior means of chains
  collected with fixed individual-state feature scales
  (Appendix~\ref{app:maintrackprotocol}); maximum likelihood and the pipeline use the declared family or
  factor, while the classical shadow, the RBM neural density operator, and the amortized
  neural operator are family-agnostic. The neural operator is a single network
  that maps a measurement record to a state, trained once on simulated records and
  applied without per-record fitting; its column is the mean over five training seeds. The prior-mean
  column is the prior-mean state estimate
  $d_{\mathrm B}(\mathbb{E}_{\mu_0}\rho,\rho_\star)$, which uses the prior but no
  measurement. The smallest mean in each row is bold; this does not establish
  a statistically resolved ordering. Figure~\ref{fig:reconstructiondecomposition}
  shows every paired seed effect for the two schedulers. Baseline protocols and
  provenance are in Appendix~\ref{app:maintrackprotocol}.}
\label{tab:maintrack}
\setlength{\tabcolsep}{4pt}
\begin{tabular}{lrrrrrrrr}
\toprule
Family & Qubits & MLE & \shortstack[r]{Classical\\shadow} & \shortstack[r]{RBM-\\NNQST} & \shortstack[r]{Neural\\operator} & \shortstack[r]{Prior\\mean} & \shortstack[r]{Matched\\control} & \shortstack[r]{GRPO-\\QM} \\
\midrule
Dicke & 4 & 0.101 & 1.076 & 1.070 & 0.177 & 0.045 & 0.044 & \textbf{0.041} \\
Dicke & 6 & 0.104 & 1.227 & 1.236 & 0.098 & 0.048 & 0.044 & \textbf{0.043} \\
\addlinespace
Cluster & 4 & 0.141 & 1.084 & 1.144 & 0.296 & 0.059 & 0.057 & \textbf{0.051} \\
Cluster & 6 & 0.138 & 1.255 & 1.290 & 0.114 & 0.059 & 0.060 & \textbf{0.051} \\
\addlinespace
Cat & 4 & \textbf{0.452} & 1.119 & 1.053 & 0.579 & 0.561 & 0.467 & 0.520 \\
Cat & 6 & 0.624 & 1.281 & 1.209 & 0.562 & 0.561 & \textbf{0.519} & 0.546 \\
\addlinespace
Random mixed & 4 & 1.194 & 0.856 & 1.204 & 0.792 & \textbf{0.544} & 0.557 & 0.557 \\
Random mixed & 6 & 1.282 & 1.044 & 1.314 & 1.153 & \textbf{1.016} & 1.020 & 1.020 \\
\bottomrule
\end{tabular}
\end{table}

\section{Reward design and a matched optimizer comparison}
\label{app:rewarddesign}

The physical counterexample establishes that movement is not sufficient for
accurate estimation. Two further questions remain: whether a chain statistic
ranks proposals correctly within one posterior, and whether a sampled gradient
can optimize an objective whose exact derivative is available. We test these
questions separately. Every result in this section uses a new calculation with
explicit trajectory lengths, fixed penalty units, and retained checkpoints.
The earlier reward-ranking and optimizer summaries are superseded, rather than
combined with the corrected observations.

\subsection{One target and two ways to construct a corrected kernel}

The continuous sampler corrects the selected action using
Eq.~\ref{eq:general_mh}. The enumerated study instead sums the proposals first:
\begin{align}
 Q_\phi(i,j)&=\sum_a q_\phi(a\mid i)K_a(i,j), \\
 P_\phi(i,j)&=Q_\phi(i,j)
 \min\left\{1,\frac{\pi_jQ_\phi(j,i)}{\pi_iQ_\phi(i,j)}\right\},
 \quad i\ne j, \label{eq:marginalmh}\\
 P_\phi(i,i)&=1-\sum_{j\ne i}P_\phi(i,j).
\end{align}
In both cases the accepted probability flow is symmetric after multiplication
by the target probability. For Eq.~\ref{eq:marginalmh} it equals
$\min\{\pi_iQ_\phi(i,j),\pi_jQ_\phi(j,i)\}$. The two constructions can nevertheless
give different transition probabilities because summing and taking the minimum
do not commute. Our exact optimizer comparison uses only
Eq.~\ref{eq:marginalmh}; its gradient findings are not measurements of the
continuous action-conditioned estimator.

The graph construction, five target conditions, and Born measurement model follow
Appendix~\ref{app:finite}. We retain every strictly positive target probability
represented in double precision, without a relative probability cutoff. Nodes
whose represented target probability is zero, including arithmetic underflow,
are recorded explicitly. The Metropolis calculation takes logarithms of positive
probabilities directly, including subnormal values; it does not raise them to
the smallest normal floating-point value. Each
proposal is restricted to the retained support and renormalized row by row;
an action with no remaining outgoing support becomes a self proposal.
The policy maps 38 exact features and an intercept to four action logits,
giving 156 trainable parameters, followed by softmax,
without the bounded residual used in the earlier spectral study. This same
policy class, proposal set, and support are used by every optimization variant.

The exact posterior, action-specific expectations, and complete transition matrix
are available in this experiment. The exact-reference reward also uses exact posterior
means. These are deliberate diagnostic advantages, not inputs claimed to be
available in continuous tomography. The study tests optimization on 45 diagnostic
posteriors from three state sets, five target conditions, and 4, 16, or 128 shots.
Three independent training random seeds repeat each posterior. They measure
training variability on these cases, not transfer to new state sets.

\subsection{Reward ranking with explicit initialization and stalled-chain handling}

We draw sixteen fixed proposal mixtures per posterior from a Dirichlet
distribution with all four concentration parameters equal to $0.5$.
For each mixture we retain 64 chains with 513 states each, including the
initial state. The ranking study uses 4 and 16 shots, giving thirty posteriors
and 480 mixture-posterior pairs. Each pair is evaluated twice: once from the
exact stationary target and once from the declared prior conditioned on the
retained positive-target support. These starts answer
different questions. Stationary starts remove initialization bias; prior starts
expose a finite-time approach to the target.

The physical quantities are rank, purity, and the $XX$ correlation. For each
quantity we use $w_k=\max\{\max_i f_k(i)-\min_i f_k(i),10^{-9}\}^{-2}$,
computed on the full graph before support restriction. Weighting its variance
by $w_k$ is equivalent to measuring that quantity in units of its graph range.
The five reward definitions are mean squared
Bures step distance, negative variance of eight block means, negative lag-one
autocorrelation, an effective sample count, and negative variance of the chain
means across replicas. The Bures reward uses the actual pairwise physical
distance, not whether two node indices differ. The effective count truncates
positive empirical correlations before lag 64; it is an exploratory training statistic, not
the rank-normalized bulk or tail ESS used for final chain diagnostics elsewhere.
All four observable-based rewards sum their component scores with these same
$w_k$. In particular, the correlation and effective-count rewards are
range-weighted sums, not equal averages of dimensionless component scores.
The weights are fixed within a posterior but can differ across seeded state sets, which
also limits the interpretation of a pooled association.

A trajectory with variance at most $10^{-20}$ in a globally nonconstant
observable receives lag-one correlation one and effective count zero. An
observable whose range on the retained graph is at most $10^{-12}$ is marked constant and
excluded from these reward contributions. We retain counts of both cases.
Agreement rewards have a different limitation: chains frozen in one wrong
mode can have zero block or cross-chain variance. We report that degeneracy
rather than treating agreement as proof of recovery. Tests include identical
frozen chains, separated frozen chains, a common wrong mode, and genuinely
constant observables.

All correlations use Spearman's average ranks for ties. A constant input vector
has undefined correlation and is counted separately, not assigned a sign.
The pooled correlation and every within-posterior correlation use the same
mixtures, reward values, and negative exact variance at 1,024 states.
Only the grouping changes. Set-level means preserve the dependence among conditions
from the same seeded state set. With only three state sets, we use these comparisons descriptively
and do not attach independence-based significance tests to thirty related
posteriors.

\subsection{Matching the complete objective, not only its reward}

Write $N$ for the number of states in a training trajectory,
$\tau=(X_0,\ldots,X_{N-1})$, so it contains exactly $N-1$ transitions.
Let $\mu_k=\E_\pi f_k$ and define
\begin{equation}
 R(\tau)=-\frac{1}{s_0}\sum_k w_k
 \left[\frac{1}{N}\sum_{t=0}^{N-1} f_k(X_t)-\mu_k\right]^2,
 \qquad
 s_0=\max\{V_N(P_{\phi_0}),10^{-10}\}.
 \label{eq:exact-referencereward}
\end{equation}
Here $V_N$ is the weighted sum of the stationary variances in
Eq.~\ref{eq:finitevariance}. Under $X_0\sim\pi$,
$\E R=-V_N(P_\phi)/s_0$. The fixed $s_0$ is shared by exact and sampled
optimization and never recomputed during training.

Both matched variants minimize
\begin{equation}
 J(\phi)=\frac{V_N(P_\phi)}{s_0}
 +\frac{\lambda}{|\mathcal S|}\sum_{i\in\mathcal S}
 \KL\!\left(q_\phi(\cdot\mid i)\,\Vert\,q_{\rm fixed}\right),
 \qquad \lambda=0.5 .
 \label{eq:matchedvarianceobjective}
\end{equation}
The state average in the penalty is uniform over graph nodes, not posterior
weighted. We first tune the fixed mixture for 120 Adam steps against $V_N$.
Every learned variant starts at that mixture, with zero state-dependent coefficients.
The reference mixture is a separately optimized comparator for each diagnostic
posterior, not a shared policy evaluated on unseen problems.

Direct optimization differentiates the covariance sum through
Eq.~\ref{eq:marginalmh}. Its computation uses binary composition of
$P^n$, $\sum_{k<n}P^k$, and $\sum_{k<n}(n-k)P^k$, which avoids forming
hundreds of sequential matrix powers. An independent covariance-sum
implementation checks both values and derivatives, including negative
eigenvalues and sample counts 255, 256, and 1,024.

For sampled optimization, eight initial states are drawn independently from
$\pi$. Each starts a group of $G=16$ conditionally independent trajectories.
The leave-one-out advantage is
$A_g=R_g-(G-1)^{-1}\sum_{h\ne g}R_h$. At the generating policy the reward
gradient is estimated by
\begin{equation}
 -\frac{1}{BG}\sum_{b=1}^{B}\sum_{g=1}^{G}
 A_{bg}\sum_{t=0}^{N-2}
 \nabla_\phi\log P_\phi(X_{bgt},X_{bg,t+1}),
 \qquad B=8.
 \label{eq:matchedtrajectorystep}
\end{equation}
We add the exact derivative of the same penalty in
Eq.~\ref{eq:matchedvarianceobjective}.

\begin{proposition}[Matched trajectory objective]
Assume that the target and initial distribution are independent of $\phi$,
the trajectory probabilities are differentiable at the evaluated parameter,
and the required expectations are finite. With conditionally independent
replicas given the initial state, Eq.~\ref{eq:matchedtrajectorystep} is an
unbiased derivative of $V_N(P_\phi)/s_0$.
\end{proposition}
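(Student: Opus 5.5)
The plan is to prove the statement as a score-function (likelihood-ratio) identity, applied to the trajectory law induced by the corrected kernel in Eq.~\ref{eq:marginalmh}. Three steps are needed: show that the expected reward equals $-V_N(P_\phi)/s_0$ for every $\phi$ near the evaluated parameter; differentiate that expectation with the log-derivative trick; and show that the leave-one-out baseline adds no bias. The penalty term needs no argument, because it is handled by its exact derivative and the statement concerns only $V_N(P_\phi)/s_0$.

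\emph{Step 1: the reward identity.} Fix $\phi$ and take $X_0\sim\pi$. By the detailed-balance argument of Proposition~1, applied to Eq.~\ref{eq:marginalmh}, $P_\phi$ is $\pi$-reversible. Hence every $X_t$ has law $\pi$, and the pair $(X_t,X_{t+s})$ has the stationary lag-$s$ joint law. For each observable $k$, $\mu_k=\E_\pi f_k$ does not depend on $\phi$. Therefore $\E[(N^{-1}\sum_t f_k(X_t)-\mu_k)^2]$ is exactly the stationary variance of the sample mean, which Eq.~\ref{eq:finitevariance} gives with $T=N$. Weighting by $w_k$ and dividing by $s_0$ yields $\E_\phi R=-V_N(P_\phi)/s_0$. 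The key point is that $w_k$, $\mu_k$ and $s_0$ are all fixed constants, so $R(\tau)$ depends on $\phi$ only through the law of $\tau$. Because the identity holds on a neighbourhood of the evaluated parameter, its $\phi$-derivative at that parameter is $\nabla_\phi\E_\phi R=-\nabla_\phi V_N(P_\phi)/s_0$.

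\emph{Step 2: differentiation.} On the finite graph the trajectory probability factorizes as $p_\phi(\tau)=\pi(X_0)\prod_{t=0}^{N-2}P_\phi(X_t,X_{t+1})$, and $\pi$ does not depend on $\phi$. Differentiability and finiteness are assumed. Writing $\E_\phi R=\sum_\tau p_\phi(\tau)R(\tau)$ as a finite sum, we get $\nabla\E_\phi R=\E_\phi[R(\tau)\sum_{t}\nabla\log P_\phi(X_t,X_{t+1})]$. Trajectories containing a zero-probability transition contribute nothing and can be dropped. Rejection self-loops $P_\phi(i,i)$ enter the product like any other transition, so the score of a rejected step is included automatically.

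\emph{Step 3: the baseline, which I expect to be the main obstacle.} This is the only place where the group structure matters. Fix a group $b$ with initial state $x_0$. Given $x_0$, the replicas are conditionally independent by hypothesis. The baseline $(G-1)^{-1}\sum_{h\ne g}R_h$ is a function of the other replicas only, so it is conditionally independent of $\tau_g$. Hence $\E[\text{baseline}\cdot S(\tau_g)\mid x_0]=\E[\text{baseline}\mid x_0]\,\E[S(\tau_g)\mid x_0]$, where $S$ denotes the score. The second factor vanishes, because $\sum_\tau p_\phi(\tau\mid x_0)=1$ for all $\phi$ and differentiating gives zero; finiteness of expectations justifies the interchange. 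Therefore $\E[A_{bg}S(\tau_{bg})]=\E[R_{bg}S(\tau_{bg})]$, which equals $\nabla\E_\phi R$ because $x_0\sim\pi$.

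\emph{Conclusion.} Averaging over the $B$ i.i.d.\ starts and $G$ replicas preserves the expectation. Applying the minus sign in Eq.~\ref{eq:matchedtrajectorystep} then gives expected value $-\nabla\E_\phi R=\nabla V_N(P_\phi)/s_0$, as claimed. The statement is unbiasedness at the generating policy only, so clipped replay epochs lie outside its scope.
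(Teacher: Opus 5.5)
Your proposal is correct and follows essentially the same route as the paper's proof. Both arguments factor the path probability so that its log-derivative is the sum of transition scores, use conditional independence of the replicas given the start together with the zero-mean path score to show that the leave-one-out baseline adds no bias, and identify $\mathbb{E}R=-V_N(P_\phi)/s_0$ under a stationary start. You simply make explicit what the paper leaves implicit: $\pi$-reversibility of $P_\phi$ for every $\phi$, which is why the reward identity holds on a neighbourhood, and dropping zero-probability trajectories, which is justified because a nonnegative differentiable function has zero gradient where it vanishes.
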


\begin{proof}
The path probability is
$\pi(X_0)\prod_{t=0}^{N-2}P_\phi(X_t,X_{t+1})$.
Differentiating its logarithm gives the sum in
Eq.~\ref{eq:matchedtrajectorystep}; differentiating the expected reward
therefore gives the score identity. Conditional on an initial state, the other
replicas' rewards are independent of the selected replica and its expected
path score is zero. Subtracting their mean does not change the expectation.
Finally, Eq.~\ref{eq:exact-referencereward} identifies the expected loss with $V_N/s_0$.
\end{proof}

This statement concerns the on-policy, unnormalized score. Metropolis ties
require a branch convention; all numerical comparisons use the same
differentiation convention on both sides. Clipped replay away from the
generating policy and division by a group's random reward standard deviation
are separately tested approximations, not covered by the proposition.

\subsection{Separating normalization, clipping, replay, and penalty scaling}

Every training variant performs 120 Adam steps at learning rate $0.03$ and uses $N=256$
states. The final policy is evaluated at 256 and 1,024 states without checkpoint
selection. The primary trajectory-score variant samples a fresh batch before each step.
A group-normalized variant replaces the leave-one-out advantage by
$(R_g-\bar R)/\max\{\operatorname{sd}(R),10^{-8}\}$.
This random division changes both the estimator's direction and the relative
size of its reward and penalty gradients. It is not a direction-only intervention.

The clipping variant uses a per-transition ratio
$r_t=P_\phi(X_t,X_{t+1})/P_{\rm old}(X_t,X_{t+1})$ and clip width $0.2$.
At a fresh on-policy step, $r_t=1$ and clipping leaves the derivative unchanged.
A separate replay variant makes two updates per batch, with 60 sampled batches
and the same 120 optimizer steps. Its second update is an approximate
per-transition surrogate, not full-path importance sampling. This distinction
also means that equal optimizer steps do not give equal trajectory-sampling
costs.

Two additional variants expose the penalty-scaling error directly. Dividing the score
sum by $N-1$ but leaving the penalty unchanged makes the penalty $N-1=255$
times stronger in trajectory-objective units. Dividing both terms by 255
instead rescales the whole objective. The corresponding gradients are exactly
proportional; Adam's numerical epsilon means their realized parameter updates
need not be exactly identical. We also compare exact and sampled optimization
with the penalty removed. No variant uses gradient clipping.

Writing $H=N-1$ exposes the changed objective units explicitly:
\begin{equation}
 J_{\rm mean}=\frac{V_N}{H s_0}+\lambda\overline{\KL},
 \qquad
 H J_{\rm mean}=\frac{V_N}{s_0}+H\lambda\overline{\KL}.
 \label{eq:penaltyunits}
\end{equation}
The intervention changes the balance between improving estimates and staying
close to the initial mixture, not just the overall gradient magnitude.

Checkpoints are retained at the first step, every twentieth step, and the final
step, together with optimizer state, random-generator state, and the last
training trajectories. At each diagnostic checkpoint we record the reward and
penalty gradient norms, their angle, the exact total-gradient norm, and the
actual parameter-step norm. A cosine is undefined when the product of its
gradient norms is at most $10^{-20}$, and these cases remain explicit.

\subsection{Replication, uncertainty, and computational scope}

The 135 training calculations comprise 45 posteriors repeated under three
training random seeds; nine training variants give 1,215 final policies. Conditions within
a seeded state set are related observations. We first average paired differences over
the fifteen conditions within each state set and training repetition, forming a
$3\times3$ array. The exploratory interval resamples its state-set and training-seed
axes independently with replacement in 10,000 draws and uses percentile
endpoints. It reflects both sources of variation but is weakly determined
by only three state sets and three training repetitions. We show the underlying
state-set and repetition effects and make no confirmatory significance claim.

The primary effect is the difference of mean exact variances. A reported
percentage divides that difference by the pooled tuned-mixture mean, rather
than averaging percentages with possibly tiny per-case denominators.
Every training variant shares the enumerated features and full matrix construction.
We record wall time and sampled transition counts separately. Direct
differentiation has no sampled transitions, but it still builds and
differentiates a complete matrix; it is not a zero-cost baseline.
The exact objective is also evaluated during sampled training for diagnostics.
These are matched-objective and matched-step comparisons, not matched
wall-clock deployment benchmarks.

\subsection{Corrected results and their limits}

Table~\ref{tab:correctedranking} separates pooled reward associations from
the ranking of mixtures for a single posterior. Within-posterior
autocorrelation and effective-count associations are positive on average
under both initializations. Their earlier negative averages were not robust
to correcting the reward definitions and handling of stalled chains. Cross-chain
agreement ranks best in this experiment but can fail when all chains become trapped in the same incorrect mode.
It warrants further study, but these results do not validate it as a training reward.

\begin{table}[tbp]
\centering\small
\caption{Reward ranking against negative exact physical variance. Spearman
correlations use average ranks for ties. The within-posterior column averages
only defined correlations; the last column gives positive/defined counts.
Stationary and prior starts use the same thirty posteriors and sixteen mixtures.
These related observations from three state sets do not support a pooled
independence-based significance test.}
\label{tab:correctedranking}
\begin{tabular}{llrrr}\toprule
Start & Reward & Pooled & Within posterior & Positive/defined\\\midrule
Stationary & Bures movement & -0.735 & 0.282 & 17/24\\
Stationary & Block agreement & 0.875 & 0.196 & 16/24\\
Stationary & Negative lag-one correlation & -0.464 & 0.200 & 17/24\\
Stationary & Effective count & -0.526 & 0.180 & 14/24\\
Stationary & Cross-chain agreement & 0.951 & 0.630 & 20/24\\
\midrule
Prior & Bures movement & -0.716 & -0.181 & 9/30\\
Prior & Block agreement & 0.344 & 0.416 & 22/30\\
Prior & Negative lag-one correlation & 0.188 & 0.209 & 20/30\\
Prior & Effective count & 0.467 & 0.198 & 18/30\\
Prior & Cross-chain agreement & 0.251 & 0.776 & 28/30\\
\bottomrule\end{tabular}
\end{table}

The matched training result is a partial recovery, not a null result.
Table~\ref{tab:correctedtraining} reports every training variant.
The raw score reduces variance in the pooled means of all three training
repetitions. Direct optimization is deterministic for a given posterior,
so repeating its execution does not create independent optimization evidence.
Its three identical repetition values enter the paired comparison without
claiming extra random replication.

\begin{table}[tbp]
\centering\small
\caption{All matched-objective variants at 1,024 states. Lower variance is better.
Variance and paired differences are multiplied by $10^3$.
The tuned-mixture mean is $1.344$ on this scale.
Intervals are exploratory two-way percentile intervals over three state sets and
three training seeds, not simultaneous confidence statements. Percentages
use differences of pooled means.}
\label{tab:correctedtraining}
\begin{tabular}{lrrr}\toprule
Training variant & Variance & Reduction (\%) & Difference [95\% interval]\\\midrule
Exact gradient & 1.174 & 12.63 & $-0.170\ [-0.196, -0.137]$\\
Trajectory score & 1.259 & 6.32 & $-0.085\ [-0.146, -0.029]$\\
Group normalized & 1.262 & 6.12 & $-0.082\ [-0.113, -0.027]$\\
One fresh clipped update & 1.259 & 6.32 & $-0.085\ [-0.146, -0.029]$\\
Two replay updates & 1.251 & 6.95 & $-0.093\ [-0.141, -0.036]$\\
Mean score, unscaled penalty & 1.338 & 0.43 & $-0.006\ [-0.007, -0.004]$\\
Mean score, scaled penalty & 1.246 & 7.28 & $-0.098\ [-0.141, -0.046]$\\
Exact gradient, no penalty & 1.151 & 14.34 & $-0.193\ [-0.208, -0.165]$\\
Trajectory score, no penalty & 1.290 & 4.05 & $-0.054\ [-0.133, 0.042]$\\
\bottomrule\end{tabular}
\end{table}

\begin{figure}[tbp]
\centering
\includegraphics[width=0.9\textwidth]{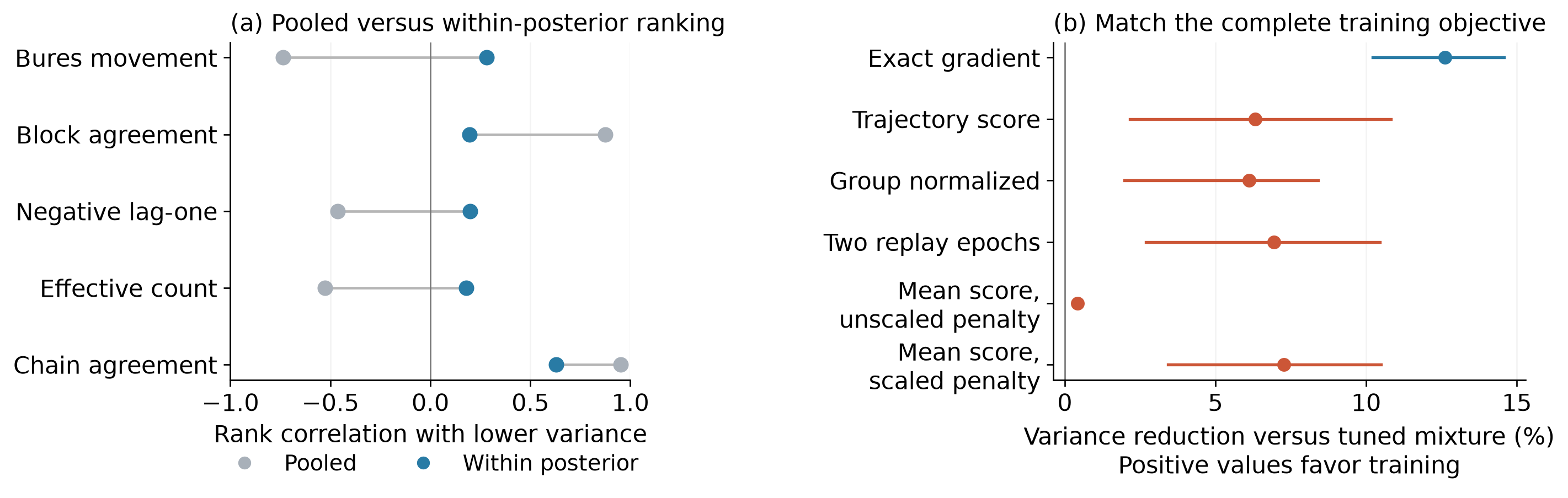}
\caption{Reward definitions and objective scaling change the learning diagnosis. (a) Pooled reward correlations can differ from within-posterior rankings. (b) Exact and sampled optimization reduce physical sample-mean variance relative to the tuned mixture, while inconsistent penalty scaling removes most of the sampled gain. Complete definitions and uncertainty calculations are given in this appendix.}
\label{fig:objectivediagnosis}
\end{figure}

The matched score remains above the exact-gradient result by
$8.47\times10^{-5}$, with exploratory paired interval
$[4.78,13.40]\times10^{-5}$. The unscaled-penalty variant is above the
matched score by $7.91\times10^{-5}$, with interval
$[2.35,13.88]\times10^{-5}$. These are direct paired contrasts, not
inferences from whether two separate intervals overlap.

The group-normalized minus raw-score difference is
$2.69\times10^{-6}$, with interval $[-5.52,5.74]\times10^{-5}$.
The replay minus raw-score difference is $-8.38\times10^{-6}$,
with interval $[-4.82,2.24]\times10^{-5}$.
Neither comparison isolates a reliable advantage here. Group normalization
changes gradient direction and magnitude, while replay also changes the
number of fresh samples. A favorable mean alone does not identify which
approximation should be preferred.

The pooled gain is concentrated in the four-shot cases.
At sixteen shots the raw score increases mean variance by $12.43\%$,
while direct optimization reduces it by $8.72\%$.
Table~\ref{tab:correctedshots} makes this failure visible.
The 128-shot means are near $1.2\times10^{-7}$, far below the four-shot
values. Thirty-nine of 45 raw-score comparisons at 128 shots differ
from their reference by at most $10^{-12}$.
These tiny absolute changes should not be advertised through their
relative percentages.

The saved gradient records also qualify the alignment interpretation.
At the final diagnostic checkpoint, the raw sampled reward gradient is zero
in 0 of 45 four-shot, 29 of 45 sixteen-shot, and 42 of 45 128-shot
calculations. A well-aligned total gradient can therefore mainly reflect
the penalty rather than informative reward differences. This is evidence
of a sparse sampled signal in these records, not a proof that additional
samples or a different optimizer would resolve the failure.

\begin{table}[tbp]
\centering\small
\caption{Information dependence of the corrected result. Entries are mean
exact variances at 1,024 states. Each shot count contains fifteen posteriors
from three state sets, each repeated under three training seeds.
Repeated conditions are not 45 independent state sets.}
\label{tab:correctedshots}
\begin{tabular}{lrrr}\toprule
Training variant & 4 shots & 16 shots & 128 shots\\\midrule
Tuned mixture & $3.915\times10^{-3}$ & $1.166\times10^{-4}$ & $1.200\times10^{-7}$\\
Exact gradient & $3.416\times10^{-3}$ & $1.064\times10^{-4}$ & $1.177\times10^{-7}$\\
Trajectory score & $3.646\times10^{-3}$ & $1.311\times10^{-4}$ & $1.175\times10^{-7}$\\
Group normalized & $3.670\times10^{-3}$ & $1.149\times10^{-4}$ & $1.186\times10^{-7}$\\
Two replay updates & $3.614\times10^{-3}$ & $1.373\times10^{-4}$ & $1.194\times10^{-7}$\\
Mean score, unscaled penalty & $3.898\times10^{-3}$ & $1.162\times10^{-4}$ & $1.199\times10^{-7}$\\
Trajectory score, no penalty & $3.678\times10^{-3}$ & $1.909\times10^{-4}$ & $1.174\times10^{-7}$\\
\bottomrule\end{tabular}
\end{table}

\begin{figure}[tbp]
\centering
\includegraphics[width=\textwidth]{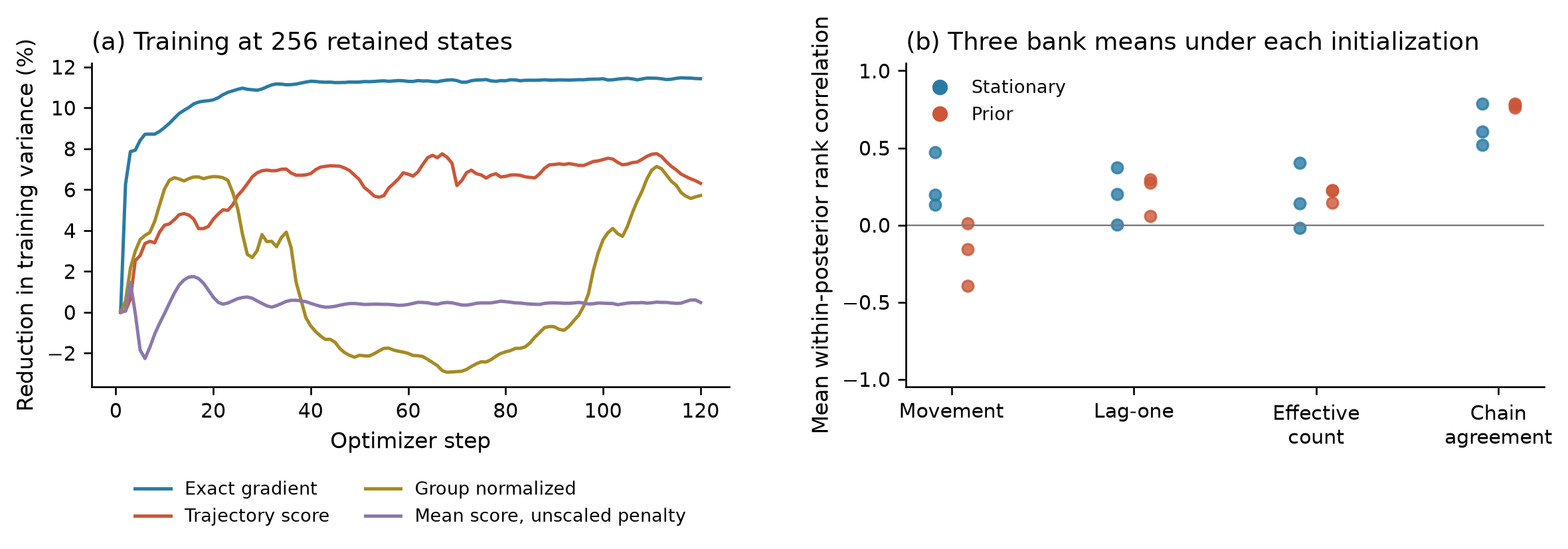}
\caption{Training progress and initialization sensitivity.
(a) Mean exact training variance before each optimizer update, expressed as
a percentage reduction from the initial mean. The curves average 135
posterior-repetition calculations; final policies are reported separately.
(b) Each point averages defined within-posterior reward correlations for
one of three state sets. Both initializations use the same mixtures and targets.
Neither panel treats conditions within a state set as independent replications.}
\label{fig:objectivetraining}
\end{figure}

Independent eigenbasis calculations reproduce 540 exact variance values
from the final direct and raw-score policies, with maximum absolute
difference below $3.44\times10^{-16}$. The full verifier reconstructs all 1,215
final policy matrices and checks their reported evaluation variances.
The one-step clipping intervention produces identical parameters in all
135 pairs. Whole-loss rescaling does not: its largest parameter difference
is $1.12$, although its paired variance difference from raw-score training
has an interval containing zero. Proportional gradients are therefore not
a guarantee of identical long-run numerical optimization.

\subsection{Recorded computation}

The three available RTX A6000 devices ran the calculations concurrently, with
two CPU numerical threads per worker. Python 3.11.4 and PyTorch 2.5.1
with CUDA 12.1 were recorded. 

\end{document}